\newcommand{\extW}{\overline{W}}
\newcommand{\extX}{\overline{X}}
\newcommand{\extY}{\overline{Y}}
\newcommand{\extZ}{\overline{Z}}
\newtheorem{lemma}{Lemma}
\newtheorem{theorem}{Theorem}
\newtheorem{definition}{Definition}
\newtheorem{assumption}{Assumption}
\newtheorem{example}{Example}
\DeclareMathOperator*{\Tr}{Tr}
\newcommand{\bb}[1]{\mathbb{#1}}
\renewcommand{\cal}[1]{\mathcal{#1}}
\newcommand{\mb}[1]{\mathbf{#1}}
\newcommand{\A}{\cal{A}}
\newcommand{\D}{\cal{D}}
\newcommand{\E}[2][]{\bb{E}_{#1} \left[ #2 \right]}
\newcommand{\I}{\cal{I}}
\newcommand{\J}{\cal{J}}
\newcommand{\K}{\cal{K}}
\renewcommand{\L}{\cal{L}}
\newcommand{\M}{\cal{M}}
\renewcommand{\P}{\bb{P}}
\newcommand{\p}{\mb{p}}
\newcommand{\N}{\cal{N}}
\newcommand{\Q}{\bb{Q}}
\newcommand{\R}{\bb{R}}
\newcommand{\w}{\mb{w}}
\newcommand{\X}{\cal{X}}
\newcommand{\x}{\mb{x}}
\newcommand{\y}{\mb{y}}
\newcommand{\z}{\mb{z}}
\newcommand{\Z}{\bb{Z}}
\newcommand{\xiobs}{\boldsymbol{\xi}}
\renewcommand{\d}{\mathrm{d}}
\renewcommand{\i}{\mathbf{i}}
\renewcommand{\j}{\mathbf{j}}
\renewcommand{\k}{\mathbf{k}}
\newcommand{\data}{\textup{data}}
\newcommand{\KL}[2]{\textup{KL}(#1 || #2)}
\newcommand{\const}{\textit{const}}
\newcommand{\del}{\partial}
\newcommand{\lr}[1]{\left(#1\right)}
\newcommand{\lrb}[1]{\left[#1\right]}
\newcommand{\lrcb}[1]{\left\{#1\right\}}
\newcites{cont}{References}
\newcites{appdx}{References}
\title[From Denoising Diffusions to Denoising Markov Models]{From Denoising Diffusions to Denoising Markov Models}
\author{Joe Benton}
\address{Department of Statistics, University of Oxford, Oxford, UK}
\author{Yuyang Shi}
\address{Department of Statistics, University of Oxford, Oxford, UK}
\author{Valentin De Bortoli}
\address{ENS, Paris, France}
\author{George Deligiannidis}
\address{Department of Statistics, University of Oxford, Oxford, UK}
\author[Benton et al.]{Arnaud Doucet}
\address{Department of Statistics, University of Oxford, Oxford, UK}
\begin{document}

\thispagestyle{fancy}

\begin{abstract}
Denoising diffusions are state-of-the-art generative models exhibiting remarkable empirical performance. They work by diffusing the data distribution into a Gaussian distribution and then learning to reverse this noising process to obtain synthetic datapoints. The denoising diffusion relies on approximations of the logarithmic derivatives of the noised data densities using score matching. Such models can also be used to perform approximate posterior simulation when one can only sample from the prior and likelihood. We propose a unifying framework generalising this approach to a wide class of spaces and leading to an original extension of score matching. We illustrate the resulting models on various applications.
\end{abstract}

\keywords{denoising diffusions, generative models, posterior simulation, score matching, unifying framework}

\section{Introduction}

Given a set of samples from an unknown distribution $p_{\data}(\x)$, generative modelling is the task of producing further synthetic samples coming from approximately the same distribution. Over the past decade, a variety of techniques have been developed to tackle this problem, including autoregressive models \citepcont{oord2016wavenet}, generative adversarial networks \citepcont{goodfellow2014generative}, variational autoencoders \citepcont{kingma2014autoencoding} and normalising flows \citepcont{rezende2015variational}. These methods have had significant success in generating perceptually realistic samples from complex data distributions, such as text and image data \citepcont{brown2020language,dhariwal2021diffusion}. A major motivation for the development of generative models is that they can be easily extended for Bayesian inference. In a typical setting, we make an observation $\xiobs^\ast$ based on underlying datapoint $\x$, for example a category label or partial observation of $\x$, and want to sample from the posterior distribution $p_{\data}(\x | \xiobs^\ast)$. We achieve this by learning a conditional generative model for $\x$ given any observation $\xiobs$ based on samples from $p_{\data}(\x,\xiobs)$. This approach is particularly useful in high-dimensional scenarios where traditional sampling methods, such as Markov chain Monte Carlo (MCMC) methods or approximate Bayesian computation (ABC), are typically infeasible.

Recently, denoising diffusion models \citepcont{sohldickstein2015deep, ho2020denoising, song2021score} have emerged as effective generative models for high-dimensional data. They work by incrementally adding noise to the data to transform the data distribution into an easy-to-sample reference distribution, and then learning to invert the noising process, which is achieved using score matching \citep{hyvarinen2005estimation}. Their use for inference has recently seen an explosion of applications, including text-to-speech generation \citepcont{popov2021grad}, image inpainting and super-resolution \citepcont{song2021score, saharia2022image} and protein structure modelling \citepcont{trippe2023diffusion}. 
%, time-series imputation \citepcont{tashiro2021csdi} and filtering in state space models \citepcont{shi2022conditional}.

Most of the current methodology, theory and applications of denoising diffusion models are for diffusion processes on $\R^d$. However, many distributions of interest are defined on different spaces. Recently, \citecont{debortoli2022riemannian} and \citecont{huang2022riemannian} have extended continuous-time methods and the analogy with score matching from $\R^d$ to general Riemannian manifolds in order to model data with strong geometric prior. Several diffusion methods have also been developed for discrete data, such as text, music or graph structures \citepcont{austin2021structured,hoogeboom2021argmax,campbell2022continuous, sun2023score}. Here though, the relationships to score matching, as well as between these various methods and the Euclidean diffusion case, are less clear. All these recent extensions have been somewhat ad hoc, with training objectives needing to be re-derived for each new application.

The main contribution of this paper is to provide a unifying framework for such models, which we call \emph{denoising Markov models}, or DMMs. We demonstrate how to construct and train a DMM for data in any state space satisfying mild regularity conditions. This yields a principled procedure for using these models for unconditional generation and inference on a wider class of spaces than previously considered. Additionally this general framework leads to a principled extension of score matching to general spaces. Finally, we demonstrate the application of our framework on examples in continuous Euclidean space, discrete space, for Riemmanian manifolds and on the simplex.

\section{Background}
A denoising diffusion model is a generative model consisting of two stochastic processes. The \emph{fixed} noising process takes a data point $\x_0$ drawn from a data distribution $q_0 := p_{\data}$ on state space $\X$ and maps it stochastically to some $\x_T \in \X$. The \emph{learned} generative process takes $\x_T \in \X$ drawn according to some initial distribution $p_0$ on $\X$ and maps it back stochastically to some $\x_0 \in \X$. Throughout, we denote the marginals of the noising and generative processes by $q_t(\x)$ and $p_t(\x)$ respectively for $t \in [0,T]$.

The basic idea is to pick a noising process so that $(q_t)_{t \geq 0}$ converges to some easy-to-sample-from distribution $q_{\textup{ref}}$, which we then take to be $p_0$. We learn a generative process which approximates the time-reversal of the noising process. Then, we can generate approximate samples from $q_0$ by sampling $\x_T \sim p_0$ and running the dynamics of the reverse process to produce a sample $\x_0 \sim p_T$, which should be close to $q_0$.

\vspace{-0.5cm}
\subsection{Continuous-time denoising diffusion models on $\R^d$}
\label{sec:diffusionbackground}

The framework for continuous-time diffusion models on $\R^d$ was first set out by \citecont{song2021score}. The ``forward'' noising process $(Y_t)_{t \in [0,T]}$ evolves according to the stochastic differential equation (SDE)
\begin{equation}
\label{eq:forward_SDE}
    \d Y_t = b(Y_t, t) \d t + \d B_t, \qquad Y_0 = \x_0 \sim p_{\data},
\end{equation}
for some chosen function $b: \R^d \times [0,T] \rightarrow \R^d$, and standard Brownian motion $B$. With this set-up, the time-reversed process $X_t = Y_{T-t}$ can be simulated by initialising $X_0 = \x_T \sim q_T$ and running the SDE
\begin{equation}
\label{eq:reverse_SDE}
    \d X_t = \{ - b(X_t, T-t) + \nabla_\x \log q_{T-t}(X_t) \} \d t + \d\hat B_t,
\end{equation}
where $q_t(\x_t)$ denotes the marginals of the forward process and $\hat B$ is another standard Brownian motion \citepcont{anderson1982reverse}. We typically choose our forward process to be an Ornstein--Uhlenbeck process, i.e. $b(\x, t) = -\x/2$, for which $q_T \approx q_{\textup{ref}} := \mathcal{N}(0,I_d)$, the standard Gaussian distribution on $\R^d$, for large $T$.

To simulate the reverse process, we must approximate $\nabla_\x \log q_t(\x)$. We do this by fixing a parametric family of functions $s_\theta(\x, t)$, and then choosing the parameters $\theta$ to minimise the \emph{denoising score matching} objective 
\begin{equation}
\label{eq:score_matching_objective}
    \I_{\textup{DSM}}(\theta) = \frac{1}{2} \int_0^T \E[q_{0,t}(\x_0, \x_t)]{||\nabla_\x \log q_{t|0}(\x_t | \x_0) - s_\theta(\x_t, t)||^2} \; \d t,
\end{equation}
where $q_{0,t}(\x_0,\x_t)$ and $q_{t | 0}(\x_t|\x_0)$ denote the joint and conditional distributions of the SDE \eqref{eq:forward_SDE}. The conditional is available in closed-form for the Ornstein--Uhlenbeck process. This is sensible since $\I_{\textup{DSM}}$ is minimised when $s_\theta(\x,t) = \nabla_\x \log q_t(\x)$ for almost all $x \in \X$ and $t \in [0,T]$ \citepcont{song2021score}. If our score estimate were exact and $p_0 = q_T$, then we would have $p_t = q_{T-t}$ for all $t \in [0,T]$. In practice, we use a neural network to parameterise $s_\theta(\x,t)$ and use stochastic gradient descent to minimise $\I_{\textup{DSM}}(\theta)$.

Once we have a score estimate $s_\theta(\x, t)$, we compute approximate samples from the reverse process by running the approximate reverse process
\begin{equation}
\label{eq:approxreverse}
    \d X_t = \{ - b(X_t, T-t) + s_\theta(X_t,T-t) \} \d t + \d\hat B_t
\end{equation}
starting in $X_0 \sim p_0$ and setting $\x_0 = X_T$. In practice, we use suitable numerical integrators to simulate the approximate reverse process.

Alternatively, the objective $\cal{I}_{\textup{DSM}}$ can be derived from a lower bound on the model log-likelihood (also known as an Evidence Lower Bound, or ELBO) for $q_T(x)$, either using Girsanov's theorem and the chain rule for Kullback--Leibler divergences \citepcont{song2021maximum}, or by combining the Fokker--Planck equation and Feynman--Kac formula with Girsanov's theorem \citepcont{huang2021variational}.

\vspace{-0.5cm}
\subsection{Diffusion models for inference}
\label{sec:inferencebackground}

Denoising diffusions can also be used to sample approximately from a posterior $p_{\data}(\x |\xiobs^\ast)$ when we only have access to samples from the joint distribution $p_{\data}(\x ,\xiobs)$; see e.g. \citepcont{song2021score}. We first draw a sample $(\x_0, \xiobs_0) \sim p_{\data}$, set $Y_0 = \x_0$ and let $(Y_t)_{t \in [0,T]}$ evolve according to Equation (\ref{eq:forward_SDE}). If we condition on $\xiobs_0$, then the process $Y$ has marginals $q_t(\x_t | \xiobs_0) = \int q_{t|0}(\x_t | \x_0) p_{\data}(\x_0 | \xiobs_0) \d \x_0$, where $q_{t|0}(\x_t | \x_0)$ is the transition kernel of the forward diffusion in Equation (\ref{eq:forward_SDE}). So, the time-reversed process $X_t = Y_{T-t}$ conditioned on $\xiobs_0$ can be simulated by initialising $X_0 \sim q_T(\cdot | \xiobs_0)$ and running the SDE
\begin{equation}
    \label{eq:reverse_SDE_conditional}
    \d X_t = \{ - b(X_t, T-t) + \nabla_{\x} \log q_{T-t}(X_t \;|\; \xiobs_0) \} \d t + \d\hat B_t.
\end{equation}
If we have $q_T(\cdot | \xiobs) \approx q_{\textup{ref}}$ for all $\xiobs$ and an approximation $s_\theta(\x, \xiobs, t)$ to $\nabla_{\x} \log q_t(\x | \xiobs)$, we can obtain approximate samples from $q_0(\cdot | \xiobs^\ast) = p_{\data}(\cdot | \xiobs^\ast)$ for any given $\xiobs^\ast$ by initialising $X_0 \sim p_0 := q_{\textup{ref}}$, simulating the reverse dynamics in Equation (\ref{eq:reverse_SDE_conditional}) with $\nabla_{\x} \log q_{T-t}(X_t | \xiobs_0)$ replaced by $s_\theta(X_t, \xiobs^\ast, T-t)$, and setting $\x_0 = X_T$. To learn $s_\theta(\x, \xiobs, t)$, we minimise
\begin{equation*}
   \I_{\textup{DSM}}(\theta) = \frac{1}{2} \int_0^T \E[q(\x_0, \x_t, \xiobs_0)]{||\nabla_\x \log q_{t|0}(\x_t | \x_0) - s_\theta(\x_t, \xiobs_0, t)||^2} \; \d t,
\end{equation*}
where we denote $q(\x_0, \x_t, \xiobs_0) = p_{\data}(\x_0, \xiobs_0)q_{t|0}(\x_t | \x_0)$. This objective is minimised when $s_\theta(\x, \xiobs, t) = \nabla_{\x} \log q_t(\x| \xiobs)$ for almost all $x \in \X$ and $t \in [0,T]$ \citepcont{song2021score}.

\vspace{-0.5cm}
\subsection{Score matching}

The objective $\I_{\textup{DSM}}$ defined in Equation (\ref{eq:score_matching_objective}) can also be interpreted as a score matching objective. Score matching was introduced as a method for fitting unnormalised probability distributions defined on $\R^d$ by \citecont{hyvarinen2005estimation}. It approximates a distribution $q_0(\x)$ with a distribution of the form $p(\x ; \theta) = q(\x; \theta)/Z(\theta)$ by minimising
\begin{equation*}
    \J(\theta) = \frac{1}{2} \E[q_0(\x)]{||\nabla_\x \log q_0(\x) - \nabla_\x \log q(\x; \theta)||^2},
\end{equation*}
known as an \emph{explicit score matching} loss. This objective is intractable since it depends on $\nabla_\x \log q_0(\x)$, but there are methods for rewriting it in an equivalent tractable form, including implicit and denoising score matching \citepcont{hyvarinen2005estimation, vincent2011connection}. Equation (\ref{eq:score_matching_objective}), which corresponds to denoising score matching, can also be written in explicit, implicit or sliced score matching form \citepcont{huang2021variational}.

\section{A general framework for denoising Markov models}
\label{sec:general_objective}

In this section, we set out a general framework for DMMs. First, we explain how to construct a DMM on an arbitrary state space with a forward noising process $Y$ and backward generative process $X$. Second, we derive an expression for the model likelihood in terms of an expectation over an auxiliary process $Z$, defined in terms of $X$ and running forward in time. Third, we derive an ELBO by using Girsanov's theorem to relate the expectation over $Z$ to one over $Y$. Finally, we show how this ELBO can be used to get a tractable training objective. Our argument follows a similar structure to \citecont{huang2021variational}, but we work in terms of generic Markov generators, rather than specific operators corresponding to diffusions on $\R^d$, and so require generalisations of the stochastic process results therein. For simplicity, we present the framework for unconditional generation and then explain how to adapt it for inference. 
\vspace{-0.5cm}
\subsection{Notation and set-up}
\label{sec:setup}
Our data is assumed to be distributed according to $p_{\data}$ on a state space $\X$. We assume only that $\X$ comes with some reference measure $\nu$, with respect to which all probability densities will be defined, and satisfies some regularity conditions given in Appendix \ref{app:assstatespace}. This includes $\R^d$, discrete spaces and Riemannian manifolds (with or without boundary).

Our DMM consists of a noising process $(Y_t)_{t \in [0,T]}$ and a generative process $(X_t)_{t \in [0,T]}$, which are Markov processes. We consider $Y$ fixed and learn $X$ to approximate the reverse of $Y$. Initially, we must fix a class of processes to which $X$ and $Y$ belong and within which we will optimise $X$. The particular class and parameterisation we choose will necessarily depend on $\X$, but a typical choice for $\X = \R^d$ would be a diffusion (see Example \ref{ex:realdiffusiongenerator}), while a typical choice when $\X$ is a finite discrete space may be a continuous-time Markov chain (CTMC) (see Example \ref{ex:CTMCgenerator}). Our notation is depicted in Fig.\ \ref{fig:process_diagram}.

As $X$ and $Y$ are not necessarily time-homogeneous, it is helpful to define the extended processes $\extX$ and $\extY$ by for example setting $X_t = X_T$ for $t \geq T$ and letting $\extX = (X_t, t)_{t \geq 0}$. Then $\extX$, $\extY$ are time-homogeneous Markov chains on the extended space $\cal{S} := \X \times [0, \infty)$.

\begin{figure}
    \centering
    \includegraphics[width=0.6\textwidth]{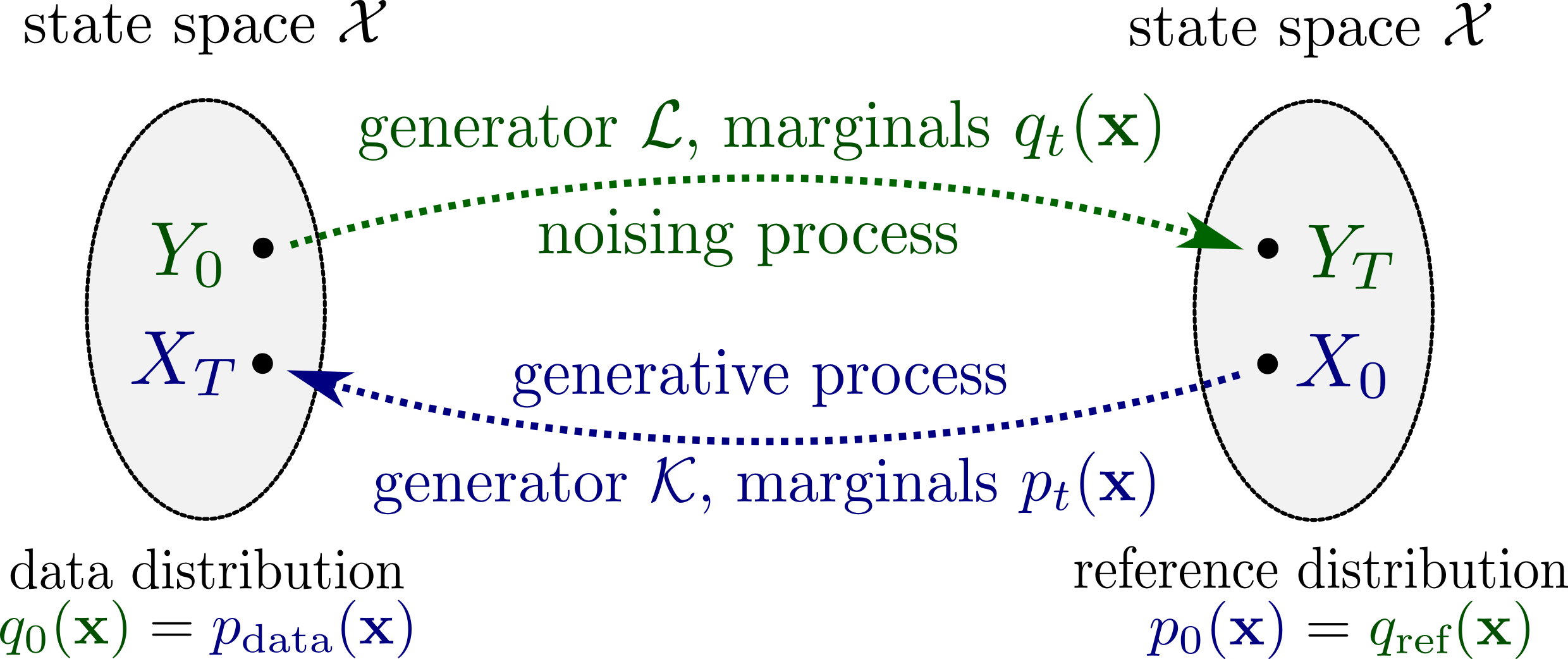}
    \caption{Diagram of notation.}
    \label{fig:process_diagram}
\end{figure}

In general, it is most convenient to define $X$ and $Y$ via the generators of $\extX$ and $\extY$, which we denote by $\K$ and $\L$ respectively. Informally, the generator of a Markov process $\extW$ with state space $\cal{S}$ is an operator $\A$ which acts on a subset $\D(\A)$ of the space of functions $f : \cal{S} \rightarrow \R$ and satisfies $\A f = \lim_{s \rightarrow 0} ({P_s f - f})/{s}$, where $(P_s)_{s \geq 0}$ is the transition semigroup associated to $\extW$ and $P_s f(x) = \E{f(X_s) | X_0 = x}$. For a more formal definition, see Appendix \ref{app:fellerdefinition}.

We denote the time marginals of the processes $X$, $Y$ by $p_t(\x)$, $q_t(\x)$ respectively. We make some smoothness assumptions on $p$, in Appendix \ref{app:asspq}, and assume that $\K$, $\L$ satisfy some regularity conditions, in Appendix \ref{app:assgenerator}. Our assumptions hold for standard models in the literature (Euclidean diffusions, CTMCs and manifold diffusions; see Appendix \ref{app:particularspaces}), plus some that are not covered previously, such as degenerate diffusions. For infinite dimensional spaces, the assumptions of Appendix \ref{app:assstatespace} may fail and more care is needed.

One consequence of our assumptions is that the operator $\K$ decomposes as $\K = \del_t + \hat \K$, where $\hat \K$ operates only on the spatial variables of a function $f$. We can therefore view $\hat \K$ as an operator on functions from $\X$, rather than on functions from $\cal{S}$, and we denote by $\hat \K^\ast$ the adjoint of $\hat \K$ acting on functions on $\X$ (see Appendix \ref{app:dualgeneratordefinition}).
%\vspace{-0.2cm}
\begin{example}[Euclidean Diffusion]
\label{ex:realdiffusiongenerator}
If $X$ and $Y$ are diffusions on $\R^d$ given by the SDEs
$\d X_t = \mu(X_t, t) \d t + \d \hat B_t$ and $\d Y_t = b(Y_t, t) \d t + \d B_t$, where $B$ and $\hat B$ are Brownian motions, then the corresponding generators are $\K = \partial_t + \mu \cdot \nabla + \frac{1}{2} \Delta$ and $\L = \partial_t + b \cdot \nabla + \frac{1}{2} \Delta$, where $\Delta = \sum_{i=1}^d \frac{\del^2}{\del x_i^2}$ denotes the Laplacian. We then have $\hat \K^\ast = - \mu \cdot \nabla - (\nabla \cdot \mu) + \frac{1}{2} \Delta$ using integration by parts.
\end{example}
%\vspace{-0.7cm}
\begin{example}[Discrete Space CTMC]
\label{ex:CTMCgenerator}
If $X$ and $Y$ are CTMCs, then $\K = \del_t + A$ and $\L = \del_t + B$, where $A$ and $B$ are the time-dependent generator matrices of $X$ and $Y$. In this case, $\hat \K^\ast = A^T$, the transpose of $A$.
\end{example}

\vspace{-0.5cm}
\subsection{An expression for the model likelihood}

We now derive an expression for the model likelihood $p_T(\x)$. First, under our assumptions, a generalised form of the Fokker--Planck equation, stated precisely in Appendix \ref{app:stochasticprocesstheory}, implies that $\partial_t p = \hat\K^\ast p$ for $\nu$-almost every $\x \in \X$. Typically, the adjoint operator $\hat\K^\ast$ resembles the generator of another process in the same class as $X$ and $Y$. We formalise this idea by making the following assumption.

\begin{assumption}
\label{ass:existenceM}
Let $v(\x,t) = p_{T-t}(\x)$. Then we can write the equation $\partial_t p = \hat\K^\ast p$ in the form $\M v + cv = 0$ for some function $c : \cal{S} \rightarrow \R$, where $\M$ is the generator of another auxiliary Feller process $\extZ = (Z_t, t)_{t \geq 0}$ on $\cal{S}$.
\end{assumption}

\begin{example}[Euclidean Diffusion]
For Euclidean diffusions, the Fokker--Planck equation can be written as $\del_t v = \mu \cdot \nabla v + (\nabla \cdot \mu) v - \frac{1}{2} \Delta v$. Assumption \ref{ass:existenceM} is satisfied with $c = - (\nabla \cdot \mu)$ and $\M = \partial_t - \mu \cdot \nabla + \frac{1}{2} \Delta$,
noting that $\M$ is the generator of the diffusion process $Z$ defined by $\d Z_t = -\mu(Z_t, T-t) \d t + \d B'_t$, where $B'$ is a Brownian motion.
\end{example}

\begin{example}[Discrete Space CTMC] In the CTMC case, if $c_\x = \sum_{\y \in \X} A_{\y\x}$, and $D_{\x\y} = A_{\y\x} - c_\x \mathbbm{1}_{\x = \y}$,
then $\M = \partial_t + D$ is the generator of a CTMC and Assumption \ref{ass:existenceM} is satisfied. Here $c$ has a natural interpretation as a ``discrete divergence''.
\end{example}

In general, we make two smoothness assumptions on $c$ and $v$, given in Appendix \ref{app:assMc}.

Given the Fokker--Planck equation and Assumption \ref{ass:existenceM}, we apply a generalised form of the Feynman--Kac Theorem (see Appendix \ref{app:stochasticprocesstheory}) to $\extZ$ and $v$ to get the following expression for the model likelihood, which generalises that of \citecont{huang2021variational}:
\begin{equation}
\label{eq:model_likelihood}
    p_T(\x) = v(\x, 0) = \bb{E}\bigg[p_0(Z_T) \exp \left\{ \int_0^T c(Z_s, s) \; \d s \right\} \; \bigg| \; Z_0 = \x\bigg].
\end{equation}
This gives an expression in terms of an expectation over the auxiliary process $Z$. We next make this tractable by converting it into an expectation over $Y$.

%\vspace{-0.5cm}
\subsection{Deriving a tractable lower bound on the model log-likelihood}
\label{deriving_ELBO}

We would like to train our model by finding a reverse process $X$ which maximises the likelihood in Equation (\ref{eq:model_likelihood}). Unfortunately this expression is intractable, but we can find a tractable lower bound for $\log p_T(\x)$ which can then be used as a surrogate objective.

By taking logarithms in Equation (\ref{eq:model_likelihood}) and applying Jensen's inequality, we get
\begin{equation}
\label{eq:ELBO}
    \log p_T(\x) \geq \E[\Q]{\log \frac{\d\P}{\d\Q} + \log p_0(Y_T) + \int_0^T c(Y_s, s) \; \d s \; \bigg| \; Y_0 = \x} =: \cal{E}^\infty
\end{equation}
where $\P$ and $\Q$ are the path measures of the processes $\extZ$ and $\extY$ respectively and $\frac{\d \P}{\d \Q}$ denotes the Radon--Nikodym derivative.

To write $\cal{E}^\infty$ in a tractable form we need to evaluate $\log \frac{\d\P}{\d\Q}$, which we do using a generalisation of Girsanov's theorem. To apply this result, we require that the generators of the auxiliary process and the noising process are related in the following way.

\begin{assumption}
\label{ass:generatorrelation}
There is a bounded measurable function $\beta : \cal{S} \rightarrow (0, \infty)$ such that $\beta^{-1} \M f = \L(\beta^{-1} f) - f \L (\beta^{-1})$ for all $f : S \rightarrow \R$ such that $f \in \D(\M)$ and $\beta^{-1} f \in \D(\L)$. 
\end{assumption}

Since $\M$ is defined in terms of $\K$, we think of Assumption \ref{ass:generatorrelation} as forcing a particular parameterisation of the generative process in terms of $\beta$. In general, not every generative process in the same class as $\L$ will have such a parameterisation. However, the true time-reversal of $\L$ can always be parameterised in this way with $\beta(\x,t) = p_t(\x)$, so this parameterisation is sufficient to capture the optimal generative process. In addition, the objective in Theorem \ref{thm:ELBO} below can often be interpreted and used for a much broader set of generative processes than those which satisfy Assumption \ref{ass:generatorrelation}.

% (Although $1/p_t(\x)$ is not bounded, for the application of Girsanov we in fact only require the weaker condition that a certain stochastic process defined in terms of $\alpha$ is a martingale, and this condition should hold for $\alpha(\x,t) = 1/p_t(\x)$ for sufficiently smooth generative processes.)

Under Assumption \ref{ass:generatorrelation}, along with a further technical assumption given in Appendix \ref{app:assalpha}, we may apply a generalised form of Girsanov's Theorem (see Appendix \ref{app:stochasticprocesstheory}, and take $\alpha = \beta^{-1}$ in Theorem \ref{generalised_girsanov}) and Dynkin's formula (see Appendix \ref{app:fellerdefinition}) to get
\begin{align*}
    \log \frac{\d\P}{\d\Q} 
    & = \int_0^T \left\{ - \L \log \beta(Y_s, s) - \beta(Y_s, s)\L (\beta^{-1}) (Y_s, s) \right\} \d s + \Q\text{-}\textup{martingale}.
\end{align*}
In addition, we get that $c = \beta \L (\beta^{-1}) - v^{-1} \beta \L(\beta^{-1} v)$ by combining Assumption \ref{ass:generatorrelation} with $f = v$ and Assumption \ref{ass:existenceM}. This allows us to rewrite the ELBO from Equation (\ref{eq:ELBO}) as
\begin{equation*}
    \cal{E}^\infty = \E[\Q]{\log p_0(Y_T) - \int_0^T \Big\{ \frac{\L(\beta^{-1} v)(Y_s, s)}{\beta^{-1}(Y_s, s) v (Y_s, s)} + \L \log \beta (Y_s, s) \Big\} \d s \; \bigg | \; Y_0 = \x}.
\end{equation*}
The final step required to get a tractable expression for $\cal{E}^\infty$ is to remove the function $v$ from this expression. For this, we use the following lemma (see Appendix \ref{app:genobjectiveproofs}).
\begin{restatable}{lemma}{genlem}
\label{lem:gen_lem_1}
Let the generator $\L$ and the functions $\beta$ and $c$ be as above. Then, we have $v^{-1} \beta {\L (\beta^{-1} v)} + \L \log \beta = \beta^{-1} \hat\L^\ast \beta + \hat\L \log \beta$.
\end{restatable}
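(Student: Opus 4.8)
The plan is to first eliminate the auxiliary density $v$, then strip off the time derivative to reduce the claim to a purely spatial operator identity, and finally prove that identity by dualising with respect to $\nu$.

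First I would use the relation $c = \beta\L(\beta^{-1}) - v^{-1}\beta\L(\beta^{-1}v)$ established just above the lemma (from Assumption~\ref{ass:existenceM} and Assumption~\ref{ass:generatorrelation} with $f = v$). Rearranging gives $v^{-1}\beta\L(\beta^{-1}v) = \beta\L(\beta^{-1}) - c$, and substituting this into the left-hand side removes every occurrence of $v$ simultaneously. The claim becomes $\beta\L(\beta^{-1}) - c + \L\log\beta = \beta^{-1}\hat\L^\ast\beta + \hat\L\log\beta$. Next I would use $\L = \del_t + \hat\L$ to discard the time derivative: since $\beta\del_t(\beta^{-1}) = -\del_t\log\beta$, the contribution $\beta\del_t(\beta^{-1})$ coming from $\beta\L(\beta^{-1})$ cancels the contribution $\del_t\log\beta$ coming from $\L\log\beta$, and the two $\hat\L\log\beta$ terms match across the equation. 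The lemma is therefore equivalent to the spatial identity $c = \beta\hat\L(\beta^{-1}) - \beta^{-1}\hat\L^\ast\beta$, which I will call $(\ast)$.

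The crux is proving $(\ast)$. Here I would read Assumption~\ref{ass:existenceM} as the operator identity $\hat\K^\ast = \hat\M + c$ — this is what makes $\M v + cv = (\del_t + \hat\M + c)v$ reproduce the Fokker--Planck equation $\del_t v = -\hat\K^\ast v$ — and combine it with the spatial form of Assumption~\ref{ass:generatorrelation}, namely $\hat\M f = \beta\hat\L(\beta^{-1}f) - \beta f\,\hat\L(\beta^{-1})$. Together these give $\hat\K^\ast f = \beta\hat\L(\beta^{-1}f) - \beta\hat\L(\beta^{-1})\,f + cf$. Taking the adjoint with respect to $\nu$, and using that multiplication operators are self-adjoint while the conjugation $f \mapsto \beta\hat\L(\beta^{-1}f)$ dualises to $g \mapsto \beta^{-1}\hat\L^\ast(\beta g)$, I obtain $\hat\K g = \beta^{-1}\hat\L^\ast(\beta g) - \beta\hat\L(\beta^{-1})\,g + cg$. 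Evaluating at the constant function $g = 1$ and using that $\hat\K$, as the spatial part of a conservative Markov generator, satisfies $\hat\K 1 = 0$, yields $0 = \beta^{-1}\hat\L^\ast\beta - \beta\hat\L(\beta^{-1}) + c$, which is exactly $(\ast)$.

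The main obstacle is the rigour of this adjoint step rather than the algebra. I would need to justify passing to the $\nu$-adjoint of these unbounded operators and evaluating the result on the constant function: this requires that $\beta$ and $\beta^{-1}$ preserve the relevant domains (immediate since $\beta$ is bounded and bounded away from zero) and, more delicately, that the integration by parts defining $\hat\L^\ast$ produces no boundary terms — which on manifolds with boundary is exactly what the regularity hypotheses of the appendices are there to secure. I would also need to confirm that $\hat\K^\ast = \hat\M + c$ holds as a genuine operator identity, not merely when applied to $v$, and that the generative process is conservative so that $\hat\K 1 = 0$; granting these, the remaining manipulations are routine.
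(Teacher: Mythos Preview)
Your proposal is correct and follows essentially the same route as the paper. Both arguments reduce the claim to the spatial identity $c = \beta\hat\L(\beta^{-1}) - \beta^{-1}\hat\L^\ast\beta$ via the relation $v^{-1}\beta\L(\beta^{-1}v) = \beta\L(\beta^{-1}) - c$ and time-derivative cancellation, and both prove that identity by combining $\hat\K^\ast = \hat\M + c$ with the spatial form of Assumption~\ref{ass:generatorrelation} and using that $\hat\K$ annihilates constants. The only cosmetic difference is packaging: the paper works in weak form, pairing $\hat\M f$ against the constant function $1$ and dualising term by term to obtain $\langle c,f\rangle = \langle f,\beta\hat\L(\beta^{-1}) - \beta^{-1}\hat\L^\ast\beta\rangle$ for all test functions, whereas you take the operator adjoint first and then evaluate at $g=1$; the paper's formulation sidesteps the domain issue you flag (that $1$ need not lie in $C_0(\X)$ on non-compact spaces) slightly more cleanly, but the mathematical content is identical.
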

\begin{theorem}
\label{thm:ELBO}
For DMMs as in Section \ref{sec:setup}--\ref{deriving_ELBO}, the log-likelihood is lower bounded by
\begin{equation}
\label{eq:tractable_ELBO}
    \cal{E}^\infty = \bb{E}_{\Q}\Big[\log p_0(Y_T) \Big | Y_0 = \x\Big] - \int_0^T \bb{E}_{\Q}\bigg[\frac{\hat\L^\ast \beta(Y_s, s)}{\beta(Y_s, s)} + \hat\L \log \beta(Y_s, s) \; \bigg | \; Y_0 = \x\bigg] \d s.
\end{equation}
\end{theorem}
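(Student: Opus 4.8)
The plan is to assemble the theorem directly from the chain of identities already established in Section~\ref{deriving_ELBO}, so the core of the argument is bookkeeping rather than new analysis. Starting from the ELBO in Equation~(\ref{eq:ELBO}), I would substitute two ingredients into the conditional expectation: the Girsanov--Dynkin expansion of $\log \frac{\d\P}{\d\Q}$ and the identity $c = \beta\L(\beta^{-1}) - v^{-1}\beta\L(\beta^{-1}v)$ obtained from Assumptions~\ref{ass:existenceM} and~\ref{ass:generatorrelation}. The key observation is that the $\beta(Y_s,s)\L(\beta^{-1})(Y_s,s)$ contributions coming from $\log\frac{\d\P}{\d\Q}$ and from $c$ appear with opposite signs and cancel exactly, leaving only the terms $-\L\log\beta$ and $-v^{-1}\beta\L(\beta^{-1}v)$ inside the time integral.

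Next I would note that the $\Q$-martingale produced by Dynkin's formula is initialised at zero, so its conditional expectation given $Y_0 = \x$ vanishes and it drops out once we apply $\bb{E}_\Q[\,\cdot \mid Y_0 = \x]$. This reproduces the rewritten form of $\cal{E}^\infty$ displayed just before Lemma~\ref{lem:gen_lem_1}, with integrand $v^{-1}\beta\L(\beta^{-1}v) + \L\log\beta$. Applying Lemma~\ref{lem:gen_lem_1} replaces this integrand by $\beta^{-1}\hat\L^\ast\beta + \hat\L\log\beta$, which crucially is expressed purely through the spatial operator $\hat\L$ and its adjoint and no longer involves the auxiliary function $v$. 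Finally, I would invoke Fubini's theorem to interchange the finite time integral over $[0,T]$ with the conditional expectation, yielding the stated decomposition~(\ref{eq:tractable_ELBO}).

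The substantive obstacles are the two interchange-of-limit steps that the formal computation glosses over. First, to discard the martingale term one must verify it is a genuine martingale rather than merely a local martingale, so that $\bb{E}_\Q[M_T\mid Y_0=\x]=0$; this is where the boundedness of $\beta$ from Assumption~\ref{ass:generatorrelation}, together with the technical integrability conditions of Appendix~\ref{app:assalpha} and the regularity of $\L$, do the work, and it is also what licenses the generalised Girsanov theorem (Theorem~\ref{generalised_girsanov}) in the first place. Second, the Fubini exchange requires $\bb{E}_\Q[\,|\beta^{-1}\hat\L^\ast\beta + \hat\L\log\beta|(Y_s,s) \mid Y_0=\x]$ to be integrable in $s$ over $[0,T]$, which follows from the smoothness assumptions on $\beta$, $c$ and $v$ in Appendices~\ref{app:assMc} and~\ref{app:assalpha}. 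Once these integrability facts are in place, the remaining manipulations are the elementary cancellation of the $\beta\L(\beta^{-1})$ terms and the direct citation of Lemma~\ref{lem:gen_lem_1}.
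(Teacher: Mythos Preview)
Your proposal is correct and follows essentially the same route as the paper: substitute the Girsanov--Dynkin expression for $\log\frac{\d\P}{\d\Q}$ and the identity for $c$ into the ELBO~(\ref{eq:ELBO}), observe the cancellation of the $\beta\L(\beta^{-1})$ terms, drop the $\Q$-martingale under expectation, and then invoke Lemma~\ref{lem:gen_lem_1} to eliminate $v$. Your explicit flagging of the martingale-versus-local-martingale issue and the Fubini step is a welcome addition, as the paper leaves these implicit in the standing assumptions.
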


This result extends the corresponding expression for $\R^d$ in \citecont{huang2021variational}. We see the ELBO consists of a term representing the log-likelihood under the reference distribution and an implicit score matching term arising from the change in measure.

\vspace{-0.5cm}
\subsection{Finding suitable training objectives}
\label{sec:tractable_objective}

Based on Theorem \ref{thm:ELBO}, we fit our generative model by maximising the expectation of $\cal{E}^\infty$ with respect to $p_{\data}$. This is equivalent to minimising the objective
\begin{equation}\label{eq:general_ISM}
    \I_{\textup{ISM}}(\beta) = \int_0^T \E[q_t(\x_t)]{ \frac{\hat\L^\ast \beta(\x_t, t)}{\beta(\x_t, t)} + \hat\L \log \beta(\x_t, t)} \d t,
\end{equation}
which we call the \emph{implicit score matching} objective, since it can be interpreted as an extension of implicit score matching from $\R^d$ (see Section \ref{sec:relationship_score_matching} below for more intuition). 

Since $q_t$ and $\hat\L$ are determined by the noising process, which is known and assumed easy to sample from, $\I_{\textup{ISM}}(\beta)$ and its gradient with respect to $\beta$ can be estimated in an unbiased fashion. Since $\beta$ parameterises $\M$ via Assumption \ref{ass:generatorrelation}, and thus $\K$ through Assumption \ref{ass:existenceM}, minimising $\I_{\textup{ISM}}(\beta)$ over $\beta$ is equivalent to learning the generative process.

We also have an equivalent \emph{denoising score matching objective} (see Appendix \ref{app:equiv_objectives}),
\begin{equation}
\label{eq:genDSM}
    \I_{\textup{DSM}}(\beta) = \int_0^T \E[q_{0,t}(\x_0, \x_t)]{\frac{\L( q_{\cdot|0}(\cdot | \x_0)/\beta(\cdot, \cdot))(\x_t, t)}{q_{t|0}(\x_t | \x_0) / \beta(\x_t, t)} - \L \log (q_{\cdot|0}(\cdot | \x_0) /\beta(\cdot, \cdot))(\x_t, t)} \d t.
\end{equation}
Both objectives are minimised when $\beta(\x, t) \propto q_t(\x)$, as shown in Proposition \ref{lem:operator_properties}. $\I_{\textup{DSM}}(\beta)$ can be interpreted as quantifying the difference between $q_{t|0}(\x_t | \x_0)$ and $\beta(\x_t, t)$ via the score matching operator $\Phi(f) = f^{-1}\L f - \L \log f$ introduced in Section \ref{sec:relationship_score_matching} below. These objectives also generalise the following previously studied instances of diffusion models. For all derivations and remarks on the choice of parameterisation, see Appendix \ref{app:particularspaces}.

\begin{example}[Euclidean Diffusion]
\label{ex:realdiffusionparameterisation}
In the setting of Example \ref{ex:realdiffusiongenerator}, Assumption \ref{ass:generatorrelation} reduces to $\nabla \log \beta = b + \mu$, and we have $f^{-1} \L f - \L \log f = \frac{1}{2} \big\| \nabla \log f \big\|^2$. If we substitute $s_\theta(\x_t, t) = \nabla \log \beta(\x_t, t)$, $\I_{\textup{DSM}}(\beta)$ defined in Equation (\ref{eq:genDSM}) reduces to Equation (\ref{eq:score_matching_objective}) and the reverse process is parameterised as in Equation (\ref{eq:approxreverse}). We thus recover the results of \citecont{song2021score} and \citecont{huang2021variational}.
\end{example}

\begin{example}[Discrete Space CTMC]
\label{ex:CTMCobjective}
In the setting of Example \ref{ex:CTMCgenerator}, Assumption \ref{ass:generatorrelation} reduces to $A_{\y\x} = \frac{\beta(\x,t)}{\beta(\y,t)} B_{\x\y}$ for all $\x \neq \y$. We may rewrite $\I_{\textup{ISM}}$ in terms of $A$ to recover the objective of \citecont{campbell2022continuous},
\begin{equation*}
    \mathcal{I}_{\textup{ISM}}(A) = \int_0^T \bb{E}_{q_t(\x_t)}\bigg[- A_{\x_t\x_t} - \sum_{\y \neq \x_t} B_{\x_t\y} \log A_{\y\x_t}\bigg] \d t + \textup{\const}.
\end{equation*}
\end{example}

\begin{example}[Riemannian Manifolds]
\label{ex:manifolddiffusion}
If $\X$ is a Riemannian manifold and we take $\K = \partial_t + \mu \cdot \nabla + \frac{1}{2} \Delta$, $\L = \partial_t + b \cdot \nabla + \frac{1}{2} \Delta$ where $\Delta$ is the Laplace--Beltrami operator associated to $\X$, and perform the reparameterisation $s_\theta(\x_t, t) = \nabla \log \beta(\x_t, t)$, then we recover the framework for training diffusion models on Riemannian manifolds given in \citecont{debortoli2022riemannian} and \citecont{huang2022riemannian}.
\end{example}
% Note that in each case, we have made a reparameterisation. It turns out that reparameterisations are useful as they make training more stable and allow us to extend to a wider class of processes. For more detailed comments, see Appendix \ref{app:practicaldetails}.
\vspace{-0.5cm}
\subsection{Inference}
\label{sec:generalisedinference}
To use DMMs for inference, we follow a similar procedure to Section \ref{sec:inferencebackground}. To noise a sample $(\x_0, \xiobs_0) \sim p_{\data}$, we set $Y_0 = \x_0$ and let $Y$ evolve according to $\L$. To generate $\x_0$ conditioned on an observation $\xiobs^\ast$, we use a generative process $X^{\xiobs^\ast}$ conditioned on $\xiobs^\ast$. We parameterise $X^{\xiobs^\ast}$ in terms of a function $\beta(\x_t, \xiobs^\ast, t)$ which now takes $\xiobs^\ast$ as an input.

We aim to learn $X^{\xiobs^\ast}$ to approximate the time-reversal of $Y$ conditioned on $\xiobs^\ast$. The following extension of Theorem \ref{thm:ELBO} (proved in Appendix \ref{app:genobjectiveproofs}) gives us a way to do this.

\begin{restatable}{theorem}{inferencethm}
\label{thm:inference}
With the above set-up, minimising the objective
\begin{equation*}
    \mathcal{I}_{\textup{DSM}}(\beta) = \int_0^T \E[q(\x_0, \x_t, \xiobs_0)]{\frac{\L(q_{\cdot|0}(\cdot | \x_0) / \beta(\cdot, \xiobs_0, \cdot))(\x_t,t)}{q_{t|0}(\x_t | \x_0) / \beta(\x_t, \xiobs_0, t)} - \L \log (q_{\cdot|0}(\cdot | \x_0) / \beta(\cdot, \xiobs_0, \cdot))(\x_t,t)} \d t
\end{equation*}
is equivalent to maximising a lower bound on the expected model log-likelihood.
\end{restatable}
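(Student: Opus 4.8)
The plan is to reduce the conditional (inference) setting to the unconditional framework of Sections \ref{sec:setup}--\ref{deriving_ELBO} by treating the observation $\xiobs^\ast$ as a fixed external parameter, and then to average the resulting bound over the observation marginal. The key structural observation is that the forward noising process acts only on the $\x$-coordinate, so its generator $\hat\L$ does not depend on $\xiobs_0$: conditioning $Y$ on $\xiobs_0$ leaves the transition kernel $q_{t|0}(\x_t \mid \x_0)$ unchanged and only alters the initial law from $p_{\data}$ to $p_{\data}(\cdot \mid \xiobs_0)$. Consequently the conditional marginals $q_t(\cdot \mid \xiobs_0) = \int q_{t|0}(\cdot \mid \x_0)\, p_{\data}(\x_0 \mid \xiobs_0)\, \d\x_0$ satisfy the same Fokker--Planck equation $\partial_t q_t(\cdot\mid\xiobs_0) = \hat\L^\ast q_t(\cdot\mid\xiobs_0)$, so Assumption \ref{ass:existenceM} and the entire chain of reasoning leading to Theorem \ref{thm:ELBO} apply verbatim once we parameterise the conditional generative process $X^{\xiobs^\ast}$ by $\beta(\cdot, \xiobs^\ast, \cdot)$.

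First I would fix $\xiobs^\ast$ and apply Theorem \ref{thm:ELBO} to the conditional data distribution $p_{\data}(\cdot \mid \xiobs^\ast)$. This yields, for each starting point $\x$, a lower bound $\cal{E}^\infty(\x; \xiobs^\ast)$ on the conditional model log-likelihood $\log p_T(\x \mid \xiobs^\ast)$, in which the generative process enters only through $\beta(\cdot, \xiobs^\ast, \cdot)$. Averaging this bound over $\x \sim p_{\data}(\cdot \mid \xiobs^\ast)$ --- exactly as in the passage from Theorem \ref{thm:ELBO} to Equation \eqref{eq:general_ISM} --- converts the path-expectation over $Y$ started at $\x$ into an expectation over the conditional forward marginal $q_t(\cdot \mid \xiobs^\ast)$, showing that maximising the expected conditional log-likelihood is equivalent to minimising the conditional implicit score matching objective $\I_{\textup{ISM}}(\beta; \xiobs^\ast) = \int_0^T \E[q_t(\x_t \mid \xiobs^\ast)]{\hat\L^\ast\beta(\x_t,\xiobs^\ast,t)/\beta(\x_t,\xiobs^\ast,t) + \hat\L\log\beta(\x_t,\xiobs^\ast,t)}\,\d t$.

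Next I would integrate over $\xiobs^\ast \sim p_{\data}(\xiobs)$. Because the bound holds pointwise in $\xiobs^\ast$, taking this expectation preserves the inequality and produces a genuine lower bound on the expected model log-likelihood $\E[(\x_0,\xiobs_0)\sim p_{\data}]{\log p_T(\x_0 \mid \xiobs_0)}$. Folding the expectation over $\xiobs_0$ into the objective by Tonelli, and writing $q(\x_t, \xiobs_0) = \int q_{t|0}(\x_t\mid\x_0)\,p_{\data}(\x_0,\xiobs_0)\,\d\x_0$, the averaged objective is $\I_{\textup{ISM}}(\beta)$ taken against the joint law of $(\x_t, \xiobs_0)$. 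Finally I would invoke the implicit-to-denoising equivalence of Appendix \ref{app:equiv_objectives}: since $q_{t|0}(\x_t\mid\x_0)$ is independent of $\xiobs_0$, the algebraic identities relating $\I_{\textup{ISM}}$ to $\I_{\textup{DSM}}$ in Equation \eqref{eq:genDSM} go through unchanged, with $\beta$ simply carrying the extra argument $\xiobs_0$ and all expectations taken against $q(\x_0, \x_t, \xiobs_0) = p_{\data}(\x_0,\xiobs_0)\,q_{t|0}(\x_t\mid\x_0)$. This reproduces exactly the stated $\I_{\textup{DSM}}(\beta)$, and Proposition \ref{lem:operator_properties} applied conditionally identifies its minimiser as $\beta(\x,\xiobs_0,t) \propto q_t(\x\mid\xiobs_0)$.

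The main obstacle I anticipate is not the algebra but the \emph{uniformity} in $\xiobs^\ast$: one must check that the regularity and smoothness hypotheses underlying Assumption \ref{ass:existenceM} and the generalised Feynman--Kac and Girsanov steps hold for the conditional marginals $q_t(\cdot\mid\xiobs_0)$ for $p_{\data}$-almost every $\xiobs_0$, and that the resulting bounds are integrable in $\xiobs_0$ so that the interchange of the expectation over $\xiobs_0$ with the time integral and the path expectation is justified. Granting the standing assumptions of the conditional model, these conditions are mild, and the remainder of the argument is a direct transcription of the unconditional proof with $\xiobs^\ast$ carried along as an inert parameter.
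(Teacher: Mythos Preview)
Your proposal is correct and follows essentially the same route as the paper: apply Theorem~\ref{thm:ELBO} with $\xiobs^\ast$ fixed, average the resulting bound over $(\x_0,\xiobs_0)\sim p_{\data}$, and then convert the implicit score matching integrand to the denoising form via the identity of Appendix~\ref{app:equiv_objectives}, using that $q_{t|0}(\x_t\mid\x_0,\xiobs_0)=q_{t|0}(\x_t\mid\x_0)$. The paper performs the ISM-to-DSM conversion conditionally on $\xiobs$ before averaging over $\xiobs_0$, whereas you average first and convert after, but the two orderings are equivalent by Fubini and amount to the same argument.
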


Theorem \ref{thm:inference} suggests that we may train conditional DMMs by maximising the objective $\I_{\textup{DSM}}(\beta)$ (or the equivalent $\I_{\textup{ISM}}(\beta)$ objective). Since $q_{t|0}(\x_t | \x_0)$ is known, we may do this by calculating an empirical estimate for $\I_{\textup{DSM}}(\beta)$ based on samples $(\x_0, \xiobs_0)$ drawn from $p_{\data}$ and minimising over $\beta$. Then, we generate samples from $p_{\data}(\x_0 | \xiobs^\ast)$ by initialising $X_0^{\xiobs^\ast} \sim p_0$, simulating the reverse process with generator $\K$ parameterised by $\beta = \beta(\cdot, \xiobs^\ast, \cdot)$, and setting $\x_0 = X_T^{\xiobs^\ast}$.

\section{Score matching on general state-spaces}
\label{sec:relationship_score_matching}

When $X$ and $Y$ are Euclidean diffusions, the objective $\I_{\textup{DSM}}(\beta)$ in Equation (\ref{eq:genDSM}) becomes the score matching objective in Equation (\ref{eq:score_matching_objective}). Similarly, the objective $\I_{\textup{ISM}}(\beta)$ from Equation (\ref{eq:general_ISM}) reduces to the implicit score matching objective introduced by \citecont{hyvarinen2005estimation}. This suggests we can view Equations (\ref{eq:general_ISM}) and (\ref{eq:genDSM}) as generalisations of score matching objectives to arbitrary state spaces.

Given state space $\X$ on which we have a Markov process generator $\L$ and an unknown distribution $q_0(\x)$ we wish to approximate, the corresponding \emph{generalised implicit score matching} method learns an approximation $\varphi(\x)$ to $q_0(\x)$ by minimising
\begin{equation*}
    \J_{\textup{ISM}}(\varphi) = \bb{E}_{q_0(\x)} \bigg[ \frac{\hat\L^\ast \varphi(\x)}{\varphi(\x)} + \hat\L \log \varphi(\x)\bigg].
\end{equation*}
We can show that $\J_{\textup{ISM}}$ is equivalent to the \emph{generalised explicit score matching objective}
\begin{equation*}
    \J_{\textup{ESM}}(\varphi) = \bb{E}_{q_0(\x)} \bigg[ \frac{\L (q_0 / \varphi)(\x)}{(q_0(\x) / \varphi(\x))} - \L \log (q_0 / \varphi)(\x) \bigg].
\end{equation*}

In addition, we define the corresponding \emph{generalised denoising score matching} method, which learns an approximation $\varphi_\tau(\x_\tau)$ to the noised distribution $q_\tau(\x_\tau)$, formed by sampling $\x_0 \sim q_0(\cdot)$ and $\x_\tau \sim q_{\tau|0}(\;\cdot \;| \x_0)$, where $q_{\tau|0}$ is the transition probability associated to $\L$ run for time $\tau$. It does this by minimising the objective
\begin{equation*}
    \J_{\textup{DSM}}(\varphi_\tau) = \E[q_{0,\tau}(\x_0, \x_\tau)]{\frac{\L(q_{\tau|0}(\cdot| \x_0) / \varphi_\tau(\cdot))(\x_\tau)}{q_{\tau|0}(\x_\tau | \x_0) / \varphi_\tau(\x_\tau)} - \L \log (q_{\tau|0}(\cdot | \x_0) / \varphi_\tau(\cdot))(\x_\tau)}.
\end{equation*}
$\J_{\textup{DSM}}$ is equivalent to both $\J_{\textup{ISM}}$ and $\J_{\textup{ESM}}$ when used to learn the smoothed distribution $q_\tau(\x_\tau)$ (see Appendix \ref{app:equiv_objectives}). All three objectives extend the corresponding score matching objectives introduced for $\R^d$ by \citecont{hyvarinen2005estimation} and \citecont{vincent2011connection}. They also coincide with the extension of score matching for Riemannian manifolds of \citecont{mardia2016score}.

To illustrate further intuitions behind our objective functions, we define the \emph{score matching operator} $\Phi(f) = f^{-1}\L f - \L \log f$. Note that the time component of $\Phi$ cancels, so we can view it as an operator on $\X$. With this notation, the generalised explicit score matching objective becomes $\J_{\textup{ESM}}(\varphi) = \E[q_0(\x)]{\Phi(q_0/\varphi)(\x)}$. For Euclidean diffusions, $\Phi(f) = \frac{1}{2} ||\nabla \log f ||^2$ (see Example \ref{ex:realdiffusionparameterisation}). In the general case, we view $\Phi(f)$ as measuring the magnitude of a logarithmic gradient of $f$. We interpret the objectives $\J_{\textup{DSM}}$ and $\J_\textup{ESM}$ as trying to fit $\varphi$ to $q_0$ by minimising this logarithmic gradient of the ratio $q_0 / \varphi$.
\begin{restatable}{proposition}{operatorlem}
\label{lem:operator_properties}
Let $Y$ be a Feller process with semigroup operators $(Q_t)_{t \geq 0}$, generator $\L$ and associated score matching operator $\Phi$. Then:
\begin{enumerate}
    \item $\Phi(f) \geq 0$ for all $f$ in the domain of $\Phi$, with equality if $f$ is constant;
    \item for any probability measures $\pi_1, \pi_2$ on $\X$ and $t \geq 0$,
    \begin{equation*}
        \frac{\d}{\d t} \KL{\pi_1 Q_t}{\pi_2 Q_t} = - \E[\pi_1 Q_t]{\Phi\lr{\frac{\d (\pi_1 Q_t)}{\d (\pi_2 Q_t)}}},
    \end{equation*}
    where $\KL{\pi_1 Q_t}{\pi_2 Q_t}$ denotes the Kullback--Leibler divergence between $\pi_1 Q_t$, $\pi_2 Q_t$.
\end{enumerate}
\end{restatable}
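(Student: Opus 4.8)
The plan is to treat the two parts separately: the pointwise nonnegativity of $\Phi$ follows from Jensen's inequality for the semigroup differentiated at time zero, while the entropy-dissipation identity follows from the generalised Fokker--Planck equation together with the adjoint pairing between $\hat\L$ and $\hat\L^\ast$.

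For part (i), I would exploit that $(Q_s)_{s \geq 0}$ is a conservative Markov semigroup, so each $Q_s$ is a positivity-preserving averaging operator with $Q_s \mathbbm{1} = \mathbbm{1}$. Hence for any convex $\phi$, Jensen's inequality gives $Q_s(\phi \circ f) \geq \phi(Q_s f)$ pointwise. Setting $g(s) = Q_s(\phi \circ f) - \phi(Q_s f)$, we have $g \geq 0$ with $g(0) = 0$, so the right derivative at $s = 0$ satisfies $g'(0) \geq 0$. Differentiating via $\frac{\d}{\d s}\big|_{s=0} Q_s h = \L h$ gives $g'(0) = \L(\phi \circ f) - \phi'(f)\,\L f \geq 0$, and taking the convex function $\phi = -\log$ yields exactly $g'(0) = f^{-1}\L f - \L \log f = \Phi(f)$. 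Equality for constant $f$ is immediate, since a generator annihilates constants, so both $\L f$ and $\L \log f$ vanish.

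For part (ii), write $\rho_t, \sigma_t$ for the densities of $\pi_1 Q_t, \pi_2 Q_t$ with respect to the reference measure $\nu$, and $r_t = \rho_t/\sigma_t$ for the Radon--Nikodym derivative, so that $\KL{\pi_1 Q_t}{\pi_2 Q_t} = \int \rho_t \log(\rho_t/\sigma_t)\,\d\nu$. Differentiating in $t$ and using $\int \partial_t \rho_t\,\d\nu = \frac{\d}{\d t}1 = 0$ to cancel the contribution of $\partial_t\rho_t$ coming from inside the logarithm, I obtain
\[
\frac{\d}{\d t}\KL{\pi_1 Q_t}{\pi_2 Q_t} = \int (\partial_t \rho_t)\log(\rho_t/\sigma_t)\,\d\nu - \int \tfrac{\rho_t}{\sigma_t}\,(\partial_t\sigma_t)\,\d\nu.
\]
Substituting the generalised Fokker--Planck equation $\partial_t \rho_t = \hat\L^\ast \rho_t$, $\partial_t \sigma_t = \hat\L^\ast \sigma_t$ from Section 3.2 and moving $\hat\L^\ast$ onto the test functions via $\int (\hat\L^\ast g)\,h\,\d\nu = \int g\,(\hat\L h)\,\d\nu$ turns these into $\int \rho_t\,\hat\L\log r_t\,\d\nu$ and $\int \sigma_t\,\hat\L r_t\,\d\nu$. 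Rewriting the latter as $\int \rho_t\, r_t^{-1}\hat\L r_t\,\d\nu$ (since $\sigma_t = \rho_t/r_t$) and combining gives
\[
\frac{\d}{\d t}\KL{\pi_1 Q_t}{\pi_2 Q_t} = -\int \rho_t\big(r_t^{-1}\hat\L r_t - \hat\L\log r_t\big)\,\d\nu = -\E[\pi_1 Q_t]{\Phi(r_t)},
\]
using that $\Phi(f) = f^{-1}\hat\L f - \hat\L\log f$ once the cancelling time-derivative component is dropped, which is the claimed identity.

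The algebra is routine; the real work is analytic. The main obstacle will be justifying the formal manipulations: differentiating the entropy under the integral sign, applying the generalised Fokker--Planck equation and the $\hat\L/\hat\L^\ast$ adjoint pairing to the possibly non-smooth densities $\rho_t, \sigma_t$, and verifying that $r_t = \rho_t/\sigma_t$ lies in the domain of $\Phi$ so that $\Phi(r_t)$ is well defined. These require the smoothness and integrability conditions on the densities and on $\L$ recorded in the appendices, together with care that $\sigma_t$ stays bounded away from zero on the relevant support so that $r_t$ and $\log r_t$ are admissible arguments for $\hat\L$. Likewise, in part (i) one must check that $-\log f$ remains in the domain where the pointwise derivative formula $\frac{\d}{\d s}Q_s h\big|_{s=0} = \L h$ is valid. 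I also note that part (i) immediately recovers the monotone contraction of relative entropy along the semigroup as a corollary of part (ii), which serves as a useful consistency check.
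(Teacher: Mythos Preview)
Your proposal is correct and follows essentially the same route as the paper's proof. For part (a) the paper also uses Jensen's inequality for the semigroup (written as $\log(Q_t f)\geq Q_t(\log f)$) and differentiates at $t=0$; for part (b) the paper likewise differentiates the KL integral, substitutes the generalised Fokker--Planck equation $\partial_t\pi_{i,t}=\hat\L^\ast\pi_{i,t}$, and passes $\hat\L^\ast$ to the other side of the pairing, with the only cosmetic difference being that the paper disposes of the extra $\int\partial_t\rho_t$ term \emph{after} substituting Fokker--Planck (via $\langle\hat\L^\ast\rho_t,1\rangle=\langle\rho_t,\hat\L 1\rangle=0$) whereas you cancel it beforehand.
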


Proposition \ref{lem:operator_properties}(a) shows that $\Phi$ is always non-negative, so $\J_{\textup{{ESM}}}$ is minimised if $\varphi(\x) \propto q_0(\x)$. Thus minimising any of our generalised score matching objectives should typically correspond to learning an approximation to $q_0$. Note though that if $Q_t$ is not ergodic and $\pi_1$, $\pi_2$ are different invariant distributions of $Q_t$ then Proposition \ref{lem:operator_properties}(b) implies that $\Phi(\d \pi_1 / \d \pi_2) = 0$ $\pi_1$-a.e., even though $\d \pi_1 / \d \pi_2$ is not constant. This suggests that generalised score matching may fail if the noising process is not ergodic. Proposition \ref{lem:operator_properties}(b) was proved for score matching on $\R^d$ by \citecont{lyu2009interpretation}. It suggests we can interpret score matching as finding an approximation $\varphi$ which minimises the decrease in KL divergence between $q_0$ and $\varphi$ caused by adding an infinitesimal amount of noise to both according to $\L$.

Our generalised score matching methods give a principled way to extend score matching to fit unnormalised probability distributions on arbitrary spaces. Other extensions of score matching have been explored, including to arbitrary sub-domains of $\R^d$ \citepcont{yu2022generalized}, ratio matching \citepcont{hyvarinen2007some} and marginalisation with generalised score matching \citepcont{lyu2009interpretation}. However, these methods lack the generality of our framework and do not respect the intuition coming from $\R^d$ that Proposition \ref{lem:operator_properties}(b) should hold. There are also many other density estimation methods that seek to learn ratios of density functions, including noise-contrastive estimation, which also approximates score matching under certain conditions \citepcont{gutmann2012bregman}.

\section{Relationship to discrete time models}
\label{sec:discreteapprox}

Denoising diffusion models were originally introduced in discrete time by \citecont{sohldickstein2015deep}. In this setting, the noising and generative processes are Markov chains $\x_{0:T}=(\x_{t_k})_{k=0}^N$ observed at a sequence of times $0 = t_0 < t_1 < \dots < t_N = T$, with fixed forwards transition kernel $\tilde q(\x_{t_k} | \x_{t_{k-1}})$ and learned backwards kernel $\tilde p_\theta(\x_{t_{k-1}} | \x_{t_k})$.
%(We use generic time indices $t_0, \dots, t_N$ rather than the more standard $0, \dots, N$ for compatibility with the notation in continuous-time.)
To fit discrete time diffusion models, \citecont{sohldickstein2015deep} minimise the following Kullback--Leibler divergence with respect to $\theta$:
\begin{equation}
\label{eq:discrete_time_ELBO}
    \KL{\tilde{q}(\x_{0:T})}{\tilde{p}_\theta(\x_{0:T})} = \sum_{k=1}^{N} \E [\tilde q(\x_{t_{k-1}},\x_{t_k})]{\log \frac{\tilde q(\x_{t_k}| \x_{t_{k-1}})}{\tilde p_\theta(\x_{t_{k-1}} | \x_{t_k})}}+\const.
\end{equation}

Given any DMM with generators $\K, \L$ and marginals $p_t, q_t$ as in Section \ref{sec:general_objective}, we define its \emph{natural discretisation} to be the discrete-time model with $\tilde q(\x_{t_k} | \x_{t_{k-1}}) = q_{{t_k}|t_{k-1}}(\x_{t_k} | \x_{t_{k-1}})$ and $\tilde p_\theta(\x_{t_{k-1}} | \x_{t_k}) = p_{T - t_{k-1} | T - t_k}(\x_{t_{k-1}} | \x_{t_k})$. Then, the Kullback--Leibler divergence (\ref{eq:discrete_time_ELBO}) for the natural discretisation can be viewed as a first-order approximation to $\I_{\textup{ISM}}$ for the continuous-time model.
\begin{restatable}{lemma}{approxlem}
\label{lem:approx}
Suppose $X$, $Y$ are fixed generative and noising processes with marginals $p$, $q$ as in Section \ref{sec:general_objective}, and suppose that they are related as in Assumptions \ref{ass:existenceM} and \ref{ass:generatorrelation} for some sufficiently regular function $\beta$. Then for any $0 < s < t < T$ with $\gamma = t - s$,
\begin{equation*}
    \gamma \; \bb{E}_{q_s(\x_s)}\bigg[\frac{\hat\L^\ast \beta(\x_s)}{\beta(\x_s)} + \hat\L \log \beta (\x_s) \bigg] = \E[q_{s,t}(\x_s, \x_t)]{\log \frac{q_{t|s}(\x_t | \x_s)}{p_{T-s | T-t}(\x_s | \x_t)}} + o(\gamma). 
\end{equation*}
\end{restatable}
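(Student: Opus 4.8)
The plan is to fix $s$ and treat $\gamma = t-s$ as the small parameter, expanding the right-hand side to first order in $\gamma$ and matching it against the $\I_{\textup{ISM}}$ integrand. Writing the joint as $q_{s,t}(\x_s,\x_t) = q_s(\x_s)\,q_{t|s}(\x_t|\x_s)$, I would first reduce the problem to a cleaner form by replacing the true generative transition kernel $p_{T-s|T-t}(\x_s|\x_t)$ with the Bayes-type reversal of the forward kernel reweighted by $\beta$, namely $\tilde p(\x_s|\x_t) = \beta(\x_s,s)\,q_{t|s}(\x_t|\x_s)/Z(\x_t)$ with normaliser $Z(\x_t) = \int \beta(\x_s',s)\,q_{t|s}(\x_t|\x_s')\,\d\nu(\x_s')$. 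Note that when $\beta \propto q$ this is exactly the time-reversal kernel $q_{s|t}$, reflecting the fact (from Assumption \ref{ass:generatorrelation}) that $\beta(\x,t)=q_t(\x)$ yields the exact reversal; for general $\beta$ the two kernels will differ only at higher order.

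With this substitution the log-ratio collapses to $\log\frac{q_{t|s}(\x_t|\x_s)}{\tilde p(\x_s|\x_t)} = \log Z(\x_t) - \log\beta(\x_s,s)$, and taking the expectation over $q_{s,t}$ — using that $\x_s\sim q_s$ followed by $\x_t\sim q_{t|s}$ gives $\x_t\sim q_t$ — yields $\E[q_t(\x_t)]{\log Z(\x_t)} - \E[q_s(\x_s)]{\log\beta(\x_s,s)}$. I then expand each piece to first order in $\gamma$ using Dynkin's formula and the Feller semigroup theory of the appendices. Since $Z(\x_t)$ is the density obtained by evolving $\beta(\cdot,s)$ forward under $\hat\L^\ast$ for time $\gamma$, one has $\log Z(\x_t) = \log\beta(\x_t,s) + \gamma\,\hat\L^\ast\beta(\x_t,s)/\beta(\x_t,s) + o(\gamma)$, while the generator expansion of $q_{t|s}$ gives $\E[q_t]{\log\beta(\cdot,s)} - \E[q_s]{\log\beta(\cdot,s)} = \gamma\,\E[q_s]{\hat\L\log\beta(\cdot,s)} + o(\gamma)$. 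Because each correction already carries a factor $\gamma$, I may freely replace $\E[q_t]{\cdot}$ by $\E[q_s]{\cdot}$ in the leading term at the cost of $o(\gamma)$, and summing the two contributions reproduces exactly $\gamma\,\E[q_s]{\hat\L^\ast\beta(\x_s)/\beta(\x_s) + \hat\L\log\beta(\x_s)}$.

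The main obstacle is to justify the initial substitution: I must show that replacing $p_{T-s|T-t}$ by the $\beta$-reweighted reversal $\tilde p$ changes the left-hand expectation only by $o(\gamma)$. This is where Assumptions \ref{ass:existenceM} and \ref{ass:generatorrelation} enter, since at the generator level they force the generator of the true generative kernel to agree with that of $\tilde p$ to first order. The delicate point is that both kernels concentrate on the diagonal as $\gamma\to0$, so I cannot expand $\log p_{T-s|T-t}$ and $\log q_{t|s}$ separately but must control the ratio directly, showing that the leading singular parts cancel and that the relative-entropy discrepancy between the two kernels is of higher order. I expect to handle this via the short-time regularity of the transition kernels guaranteed by the state-space and generator assumptions, together with the ``sufficiently regular $\beta$'' hypothesis, mirroring the standard Gaussian heat-kernel cancellation of the Euclidean case but phrased purely through $\hat\L$ and the reverse generator.

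Finally, I would record as a consistency check that, summed over a partition $0=t_0<\dots<t_N=T$, this first-order matching turns the discrete objective \eqref{eq:discrete_time_ELBO} into a Riemann-sum approximation of $\I_{\textup{ISM}}(\beta)$, with Lemma \ref{lem:gen_lem_1} providing the bridge between the $\hat\L^\ast\beta/\beta + \hat\L\log\beta$ form obtained here and the path-measure form used in the ELBO derivation.
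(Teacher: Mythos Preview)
Your computational core---the expansion after substituting the $\beta$-reweighted reversal $\tilde p$---is correct and does produce the right first-order term. But the paper takes a different route that sidesteps precisely the obstacle you flag as ``the main one''.

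Instead of approximating the generative kernel, the paper introduces the forward-running process $W_u = X_{T-u}$ with generator $\N$, proves two short preparatory lemmas establishing that $\zeta := \beta^{-1} v$ satisfies the Girsanov-type relation $\zeta\,\N f = \L(f\zeta) - f\L\zeta$, and then applies the generalised Girsanov theorem to obtain an \emph{exact} expression for the path Radon--Nikodym derivative $\d\P^{\x_s}/\d\Q^{\x_s}$ on $[s,t]$. This gives $\log(q_{t|s}/p_{T-t|T-s})$ directly; one then converts $p_{T-t|T-s}$ to $p_{T-s|T-t}$ via the marginal ratio $v(\x_t,t)/v(\x_s,s)$, expands the smooth integrand to first order in $\gamma$, takes the $q_{s,t}$-expectation using the generator, and invokes Lemma~\ref{lem:gen_lem_1} to collapse $\zeta^{-1}\L\zeta - \L\log\zeta + \L\log v$ into $\beta^{-1}\hat\L^\ast\beta + \hat\L\log\beta$.

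The difference is substantive. Your plan requires showing $\E[q_{s,t}]{\log(p_{T-s|T-t}/\tilde p)} = o(\gamma)$, but both kernels concentrate on the diagonal as $\gamma\to 0$, so the log-ratio cannot be expanded termwise and on a general state space you have no heat-kernel asymptotics to lean on; your ``short-time regularity'' sketch is not a proof at the level of generality claimed. The paper's Girsanov route has no such gap because it never approximates the kernel: the log-ratio is exact, and only a Taylor remainder for a bounded smooth function remains. The key identity you are missing is that the change of measure from $Y$ to $W$ is governed by $\zeta = \beta^{-1}v$, not by $\beta$ alone; the extra factor $v$ is exactly what turns the forward $W$-kernel into the reverse $X$-kernel via Bayes and makes the relation exact rather than first-order.
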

Applying this lemma on each interval $[t_k, t_{k+1}]$, we get the following theorem.
\begin{restatable}{theorem}{approxthm}
\label{thm:approx}
For any DMM, the objective (\ref{eq:discrete_time_ELBO}) for its natural discretisation is equivalent to the natural discretisation of $\I_{\textup{ISM}}$ to first order in $\overline{\gamma} = \max_{k=0, \dots, N-1}|t_{k+1} - t_k|$.
\end{restatable}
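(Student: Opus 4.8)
The plan is to obtain Theorem \ref{thm:approx} directly from Lemma \ref{lem:approx}, applied on each subinterval $[t_{k-1}, t_k]$ of the partition and then summed. First I would write the objective (\ref{eq:discrete_time_ELBO}) for the natural discretisation explicitly, substituting $\tilde q(\x_{t_k}| \x_{t_{k-1}}) = q_{t_k| t_{k-1}}(\x_{t_k}| \x_{t_{k-1}})$ and $\tilde p_\theta(\x_{t_{k-1}} | \x_{t_k}) = p_{T-t_{k-1} | T-t_k}(\x_{t_{k-1}} | \x_{t_k})$, and observing that the joint $\tilde q(\x_{t_{k-1}},\x_{t_k})$ is exactly the two-time marginal $q_{t_{k-1},t_k}$ of the noising process. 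The decisive observation is then that the $k$-th summand of (\ref{eq:discrete_time_ELBO}) is precisely the right-hand side of Lemma \ref{lem:approx}, up to its remainder, under the identification $s = t_{k-1}$, $t = t_k$; here $T-s = T-t_{k-1}$ and $T-t = T-t_k$ align the reverse-time indices of the generative transition kernel correctly.

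Next I would invoke the lemma to replace each summand by its leading term, giving, with $\gamma_k = t_k - t_{k-1}$,
\[
\E[q_{t_{k-1},t_k}]{\log\frac{q_{t_k | t_{k-1}}(\x_{t_k} | \x_{t_{k-1}})}{p_{T-t_{k-1} | T-t_k}(\x_{t_{k-1}} | \x_{t_k})}} = \gamma_k\,\E[q_{t_{k-1}}(\x_{t_{k-1}})]{\frac{\hat\L^\ast\beta(\x_{t_{k-1}},t_{k-1})}{\beta(\x_{t_{k-1}},t_{k-1})}+\hat\L\log\beta(\x_{t_{k-1}},t_{k-1})} + o(\gamma_k).
\]
Summing over $k = 1, \dots, N$, the leading contributions assemble into the left-endpoint Riemann sum $\sum_k \gamma_k\,\E[q_{t_{k-1}}]{\,\cdots\,}$, which is by definition the natural discretisation of the integral $\I_{\textup{ISM}}(\beta)$ in (\ref{eq:general_ISM}); to first order the choice of endpoint is immaterial. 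The remaining pieces of the full Kullback--Leibler divergence, namely the entropy of $\tilde q(\x_0)$ and the reference contribution $-\log p_0(\x_{t_N})$, are independent of $\beta$ and are therefore absorbed into the additive \const\ in (\ref{eq:discrete_time_ELBO}), so they play no role in the equivalence of objectives.

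I expect the main obstacle to be controlling the accumulated remainder $\sum_{k=1}^N o(\gamma_k)$. Lemma \ref{lem:approx} provides an $o(\gamma)$ discrepancy on a single interval, but the partition contains $N \asymp T/\overline\gamma$ intervals, so a purely termwise estimate does not suffice by itself; I would need the remainder to hold uniformly across the partition, which is exactly where the ``sufficiently regular $\beta$'' hypothesis of the lemma enters. Writing the per-interval discrepancy as $\gamma_k\,\epsilon_k$ with $\sup_k \epsilon_k \to 0$ as $\overline\gamma \to 0$, the total discrepancy is at most $(\sup_k \epsilon_k)\sum_k \gamma_k = T \sup_k \epsilon_k$, and hence vanishes as the mesh is refined. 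This is the sense in which the discrete objective (\ref{eq:discrete_time_ELBO}) and the natural discretisation of $\I_{\textup{ISM}}$ coincide to first order in $\overline\gamma$: interval by interval their expansions in the local step length agree in the first-order coefficient, and the uniform remainder ensures that the agreement persists after summation. Granting the uniformity of the lemma's estimate, the remaining summation and the bookkeeping of the $\beta$-independent boundary constants are routine.
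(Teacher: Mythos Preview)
Your proposal is correct and follows essentially the same route as the paper: apply Lemma \ref{lem:approx} with $s=t_{k-1}$, $t=t_k$ on each subinterval, sum the resulting leading terms into the left-endpoint Riemann sum for $\I_{\textup{ISM}}$, and absorb the $\beta$-independent boundary contributions into the additive constant. Your discussion of the uniformity needed to control the accumulated remainder $\sum_k o(\gamma_k)$ is actually more careful than the paper's own proof, which simply writes $o(\overline\gamma)$ for the total error without comment.
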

This theorem generalises to arbitrary state spaces a result of \citecont{ho2020denoising}, which demonstrated the equivalence of minimizing (\ref{eq:discrete_time_ELBO}) and the score matching objective for Euclidean state spaces. For the proofs of Lemma \ref{lem:approx} and Theorem \ref{thm:approx}, see Appendix \ref{app:discretetimeproofs}.

Lemma \ref{lem:approx} also implies a general equivalence between one-step denoising autoencoders and score matching. \citecont{vincent2011connection} discussed this equivalence for autoencoders using Gaussian noise in $\R^d$, but our methods allow us to extend this correspondence to arbitrary state spaces and noising processes. For more details, see Appendix \ref{app:autoencoder}.

\section{Experiments}
\label{sec:experiments}

We now present experiments demonstrating DMMs on several tasks and data spaces, for unconditional generation and conditional simulation. All details are in Appendix \ref{app:experimentaldetails}.

\vspace{-0.5cm}
\subsection{Inference on $\R^d$ using diffusion processes}
First, we use diffusion processes in $\R^d$ to perform approximate Bayesian inference for real-valued parameters. We consider $p_{\data}(\xiobs | \x) = \prod_{i=1}^N p_{\data}(\xi_i | \x)$, where $p_{\data}(\xi_i | \x)$ is the $g$-and-$k$ distribution with parameters $\x = (A, B, g, k)$ and $d=4$, and we let $p_{\data}(\x)$ be uniform on $[0,10]^4$. The $g$-and-$k$ distribution is a 4-parameter distribution in which $A,B,g,k$ control the location, scale, skewness and kurtosis respectively.

We fix our noising process to be an Ornstein--Uhlenbeck process, and parameterise our reverse process as in Example \ref{ex:realdiffusionparameterisation}, with $s_\theta(\x, \xiobs, t)$ being given by a fully connected neural network. To train the model, we sample $(\x_0, \xiobs_0) \sim p_{\data}$ and minimise the denoising score matching objective from Section \ref{sec:generalisedinference} via stochastic gradient descent on $\theta$.

To test our model, we first consider the case where there are a true set of underlying parameters $\x_{\textup{true}} = (3,1,2,0.5)$. We generate an observation $\xiobs_0 \sim p_{\data}(\xiobs_0 | \x_{\textup{true}})$ with $N = 250$, sample from the approximate posterior using our DMM and plot the result in Fig.\ \ref{fig:gandk_density_250}. We compare our method with the semi-automatic ABC (SA-ABC) \citepcont{nunes2015abctools} and Wasserstein SMC (W-SMC) \citepcont{bernton2019approximate} methodologies, as well as Sequential Neural Posterior, Likelihood and Ratio Estimation approaches (SNPE, SNLE and SNRE) (see e.g. \citecont{lueckmann2021benchmarking}). We see in Fig.\ \ref{fig:gandk_density_250} that the DMM achieves more accurate posterior estimation for all parameters, except the kurtosis parameter $k$ for which W-SMC is more accurate. Among the other neural network-based approaches, SNPE appears most competitive on this task, but is less accurate than the DMM especially for parameters $g$ and $k$. Additional experimental results comparing DMMs to other simulation-based inference methods can be found in \citepcont{sharrock2022sequential,geffner2023compositional}. 

% \begin{figure}[t]
%     \centering 
%     \includegraphics[width=\textwidth]{Plots/GAndK/gandk_density_250.png}
%     \caption{Posterior kernel density estimates of samples generated using our DMM, SA-ABC and W-SMC for the $g$-and-$k$ distribution example, with $\x_{\textup{true}}=(3,1,2,0.5)$ and $N=250$.}
%     \label{fig:gandk_density_250}
% \end{figure}

% \begin{figure}[t]
%     \centering
%     \includegraphics[width=\textwidth]{Plots/GAndK/gandk_density_250_snpe.png}
%     \caption{Posterior kernel density estimates of samples generated using our DMM, SA-ABC, W-SMC and SNPE for the $g$-and-$k$ distribution example, with $\x_{\textup{true}}=(3,1,2,0.5)$ and $N=250$.}
%     \label{fig:gandk_density_250}
% \end{figure}

\begin{figure}[t]
    \centering
    \includegraphics[width=\textwidth]{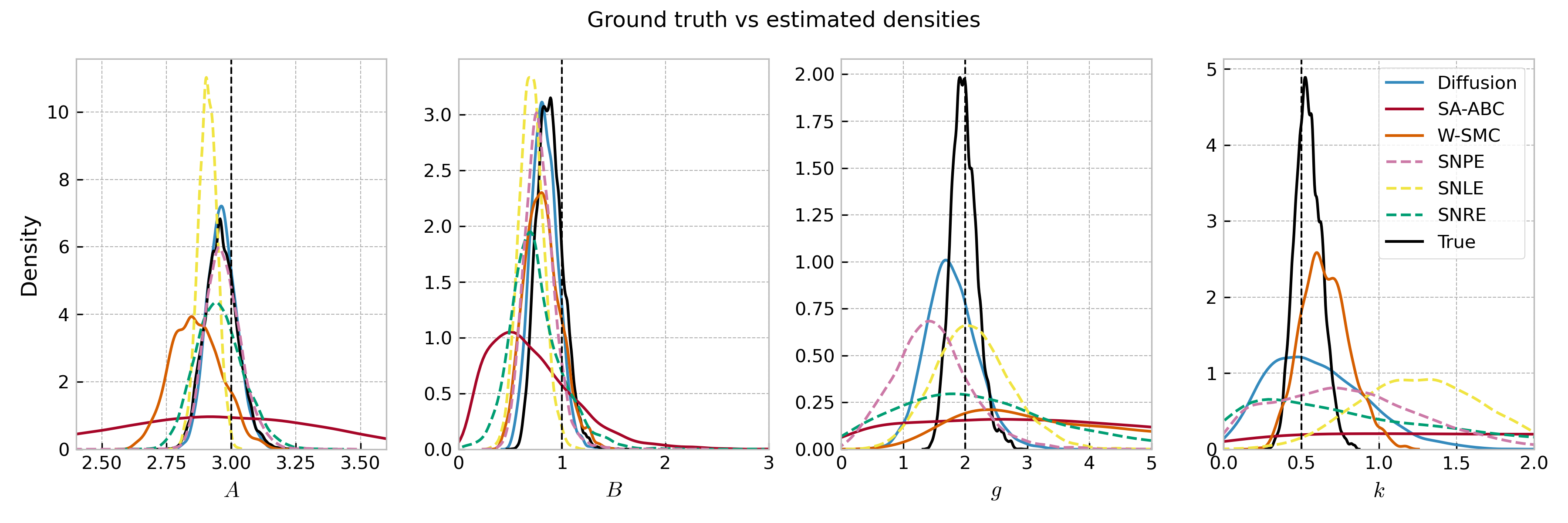}
    \caption{Posterior kernel density estimates of samples generated using our DMM, SA-ABC, W-SMC, SNLE, SNPE and SNRE for the $g$-and-$k$ distribution, with $\x_{\textup{true}}=(3,1,2,0.5)$ and $N=250$.}
    \label{fig:gandk_density_250}
\end{figure}

Next, we demonstrate that our model can perform inference for a range of observation values $\xiobs^\ast$ simultaneously. We generate a series of 512 parameter values $\x_0$ drawn from $p_{\data}(\x_0)$ and draw an observation $\xiobs_0$ from $p_{\data}(\xiobs_0 | \x_0)$ with $N = 10000$ for each $\x_0$. Then, we generate 8 samples $\x_0'$ from our approximation to the posterior $p_{\data}(\x_0 | \xiobs_0)$ for each $\xiobs_0$. We plot each component of the pairs $(\x_0, \x_0')$ in Fig.\ \ref{fig:gandk_true_parameter_vs_samples_10000}. We see our model is able to infer the original parameters across a range of parameter values.

\begin{figure}[t]
    \centering
    \includegraphics[trim={0, 0, 0, 0.4cm}, clip, width=\textwidth]{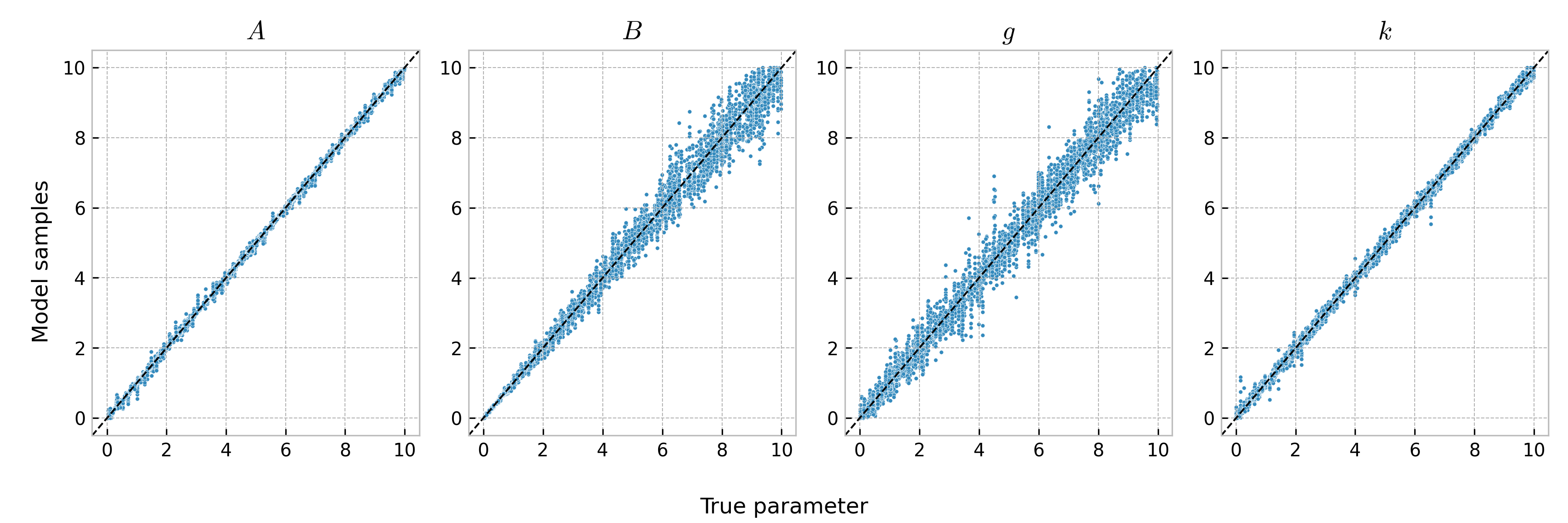}
    \caption{Comparison of posterior samples $\x_0'$ from our DMM approximation to $p_{\data}(\cdot | \xiobs_0)$ and the true parameter value $\x_0$ for a range of $\x_0$ in the prior distribution, with $N = 10000$.}
    \label{fig:gandk_true_parameter_vs_samples_10000}
\end{figure}

% Importantly, our model can perform amortised inference on the whole parameter space, in contrast to sequential ABC methodologies which require further sampling from $p_{\data}(\xiobs | \x)$ in local parameter regions for every fixed set of observations $\xiobs^\ast$.

\vspace{-0.5cm}
\subsection{Image inpainting and super-resolution using discrete-space CTMCs}

Second, we demonstrate that our framework is applicable for large-scale Bayesian inverse problems, such as super-resolution and inpainting for images. For these problems, the prior $p_{\data}(\x)$ is the distribution of images. Most ABC techniques such as SA-ABC and W-SMC are not applicable as they require an analytical expression for this prior, whereas DMMs do not rely on such an expression. 

We consider performing image inpainting for MNIST digit images, where each image $\x_0$ has $28\times28$ pixels with values in $\{0,\dots,255\}$, and the observed incomplete image $\xiobs_0$ has the middle $14\times14$ pixels missing. Since our state space $\X = \{0,\dots,255\}^{28 \times 28}$ is discrete, we use the set-up of Example \ref{ex:CTMCgenerator} and let the generator of our noising process factor over pixel dimensions. We use the denoising parameterisation of the reverse process (see Appendix \ref{app:discretespaces}) and train by minimising the form of the objective in Example \ref{ex:CTMCobjective}.

To test our model, we plot the reconstructed image samples for a number of digits in Fig.\ \ref{fig:mnist_inpaint_grid}. We observe that the samples we obtain are consistent with conditioning and appear to be realistic, but also display diversity in the shape of the strokes. In Appendix \ref{app:mnistexperiment}, we also compare our method to a continuous state space approach.

In addition, we train a conditional discrete-space DMM to perform super-resolution on ImageNet images to demonstrate that this method provides perceptually high quality samples even in very high-dimensional scenarios. For details, see Appendix \ref{app:imagenetexperimentdetails}.

\begin{figure}
    \centering
    \includegraphics[height=3.5cm]{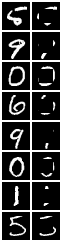} \includegraphics[height=3.5cm]{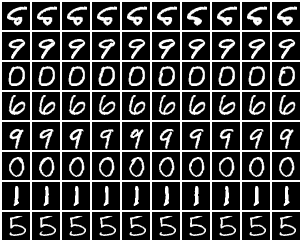}
    \hspace{15mm}
    \includegraphics[height=3.5cm]{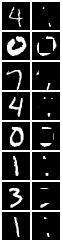} \includegraphics[height=3.5cm]{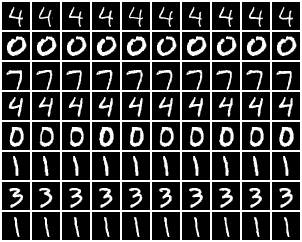}
    \caption{Samples from the MNIST inpainting task. The first column in each set plots the ground truth images, and the second column has the centre $14\times14$ pixels missing.}
    \label{fig:mnist_inpaint_grid}
\end{figure}

\vspace{-0.5cm}
\subsection{Modelling distributions on $SO(3)$ using manifold diffusions}

Thirdly, we demonstrate that DMMs can approximate distributions on manifolds using two tasks on $SO(3)$. Since $SO(3)$ is a Lie group and so a Riemannian manifold, we use the framework from Example \ref{ex:manifolddiffusion}. As our noising process, we use Brownian motion with generator $\L = \del_t + \frac{1}{2} \Delta$. We can explicitly calculate the transition kernels $q_{t|0}(\x_t|\x_0)$ for this process, allowing us to use the denoising score matching objective. We parameterise this objective in terms of a neural network approximation $s_\theta(\x,t)$ of the score. This is in contrast to \citecont{debortoli2022riemannian}, in which the explicit transition kernels are not used for sampling the forward process or in the loss function, both of which require further approximations.

First we check that our DMM can learn simple mixtures of wrapped normal distributions $p_{\data}(\x) = \frac{1}{M} \sum_{m=1}^{M} \mathcal{N}^W (\x | \mu_m, \sigma^2_m)$, where $\mathcal{N}^W (\x \;|\; \mu_m, \sigma^2_m)$ is the wrapped normal distribution on $SO(3)$ with expectation $\mu_m$ and variance $\sigma^2_m$ \citepcont{debortoli2022riemannian}. We plot samples from our resulting DMM in Fig.\ \ref{fig:main_so3}. We see that our model provides a good fit to $p_\data(\x)$, covering all modes. In Appendix \ref{app:wrappednormalexperiment}, we provide additional results and show that we can also sample from the class conditional density $p_{\data}(\x | m)$.

Second, we consider a more realistic pose estimation task on the SYMSOL dataset, which requires predicting the 3D orientation of various symmetric 3D solids based on 2D views \citepcont{murphy2021implicit}. Due to the rotational symmetries, a key challenge is to predict all possible poses when only one possibility is presented in training.  We use a conditional DMM where $\xiobs$ is the 2D image view. Fig.\ \ref{fig:symsolresults} shows two sets of samples from our model conditioned on 2D images of two different solids. We see that our model learns to sample from the ground truth accurately and infer the full set of rotational symmetries for different views $\xiobs$. For further experimental details and plots, see Appendix \ref{app:symsolexperiment}.

\begin{figure}
    \centering
    \includegraphics[trim={4cm 1.1cm 1.8cm 1cm}, clip, width=\textwidth]{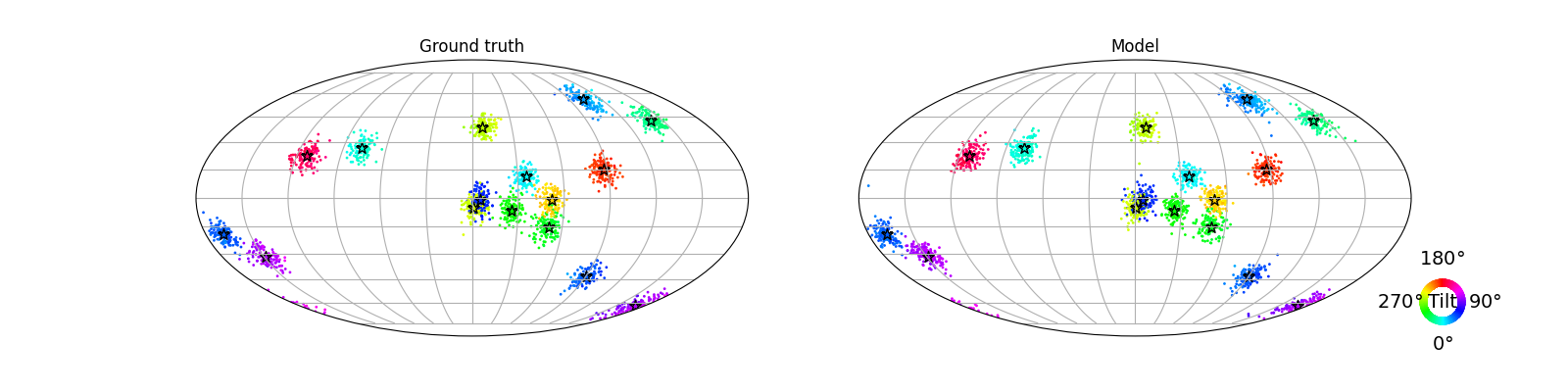} \\
    \caption{Samples from the ground truth and our DMM approximation to the mixture of wrapped normal distributions. Each sample is denoted by a point, whose position represents the axis of rotation and whose colour represents the angle of rotation. Stars denote the true cluster means.}
    \label{fig:main_so3}
\end{figure}

\begin{figure}
    \centering
    \hspace*{-0.1cm}\raisebox{-0.5\height}{\includegraphics[trim={0 0 1.8cm 0}, clip, width=1.5cm]{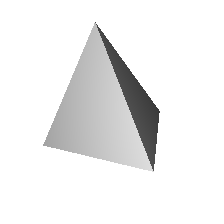}}
    \raisebox{-0.5\height}{\includegraphics[trim={2.5cm 0.7cm 1.8cm 1cm}, clip, width=12.5cm]{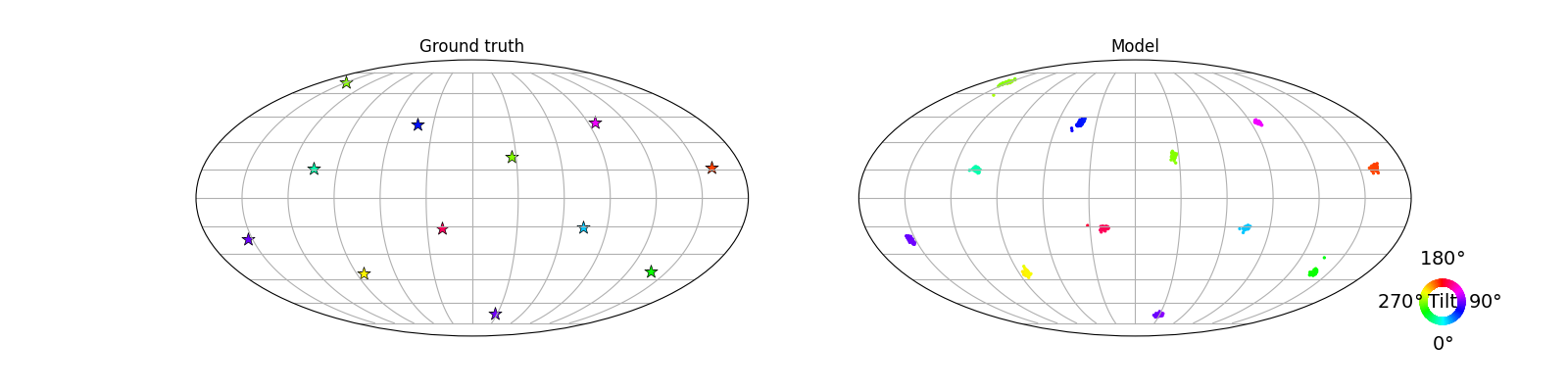}} \\
    \hspace*{-0.1cm}\raisebox{-0.5\height}{\includegraphics[trim={0 0 1.8cm 0}, clip, width=1.5cm]{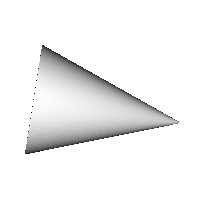}}
    \raisebox{-0.5\height}{\includegraphics[trim={2.5cm 1.1cm 1.8cm 0.7cm}, clip, width=12.5cm]{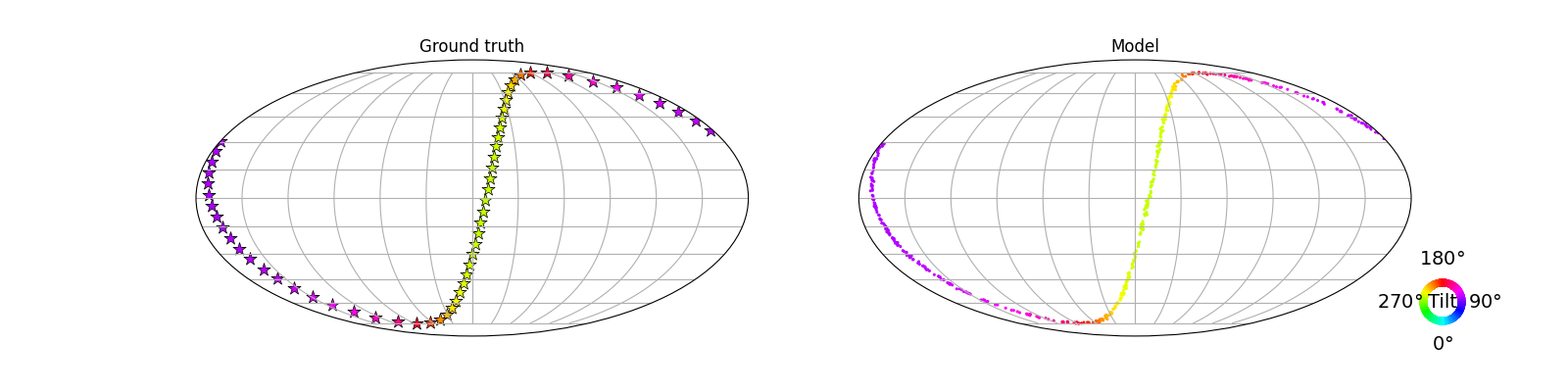}} \\
    \caption{Samples from the ground truth (plotted as stars, middle) and our pose estimation DMM (right) conditioned on 2D views of two shapes (left). The axis of rotation and rotation angle are represented by position and colour respectively.}
    \label{fig:symsolresults}
\end{figure}

\vspace{-0.5cm}
\subsection{Approximation of distributions over measures using Wright--Fisher diffusions}

Finally, we present an example of learning to approximate a distribution over measures on a finite state space $E = \{1, \dots, N\}$. In this case $\X = \cal{P}(E)$, the space of measures on $E$. This is of particular interest in compositional data analysis \citepcont{greenacre2021compositional}. Elements of $\X$ can be parameterised by tuples of real numbers $\p = (p_1, \dots, p_N) \in [0,1]^N$ such that $\sum_{i=1}^N p_i = 1$. We could approximate the data distribution using a diffusion model on $\R^N$, but such a model would not reflect the fact that our distribution should be supported on a submanifold, the simplex. Using the standard setup for manifold diffusions as in Example \ref{ex:manifolddiffusion} would not respect the boundary of the simplex. Other methods have been presented in the literature, but they rely on either reflected diffusions \citepcont{lou2023reflected} or on projections of the simplex \citepcont{richemond2022categorical}.

We therefore use Wright--Fisher diffusions, a process used in population genetics to model the evolution of allele frequencies, as our class of generative processes. A Wright--Fisher process has generator $\L = \del_t + \frac{1}{2} \sum_{i,j = 1}^N p_i \lr{\delta_{ij} - p_j} \frac{\del^2}{\del p_i \del p_j} + \sum_{i,j=1}^N q_{ij}p_i \frac{\del}{\del p_j}$, where $(q_{ij})_{i,j = 1, \dots, N}$ is some matrix such that $\sum_{j=1}^N q_{ij} = 0$ for each $i = 1, \dots, N$. The process takes values in the space of measures on $E$, and so respects the structure of our data distribution \citepcont{ethier1993transition}. For specific choices of $q_{ij}$, the process converges to a known invariant distribution and we can calculate the implicit score matching loss. For details of the theoretical setup, see Appendix \ref{app:wrightfisher}.

We evaluate the proposed method by modelling $p_{\data}(\x) = \frac{1}{M} \sum_{m=1}^{M} \textup{Dirichlet}(\alpha_m)$, a mixture of Dirichlet distributions with parameters $\alpha_m\in\R^N$, for various values of $N$. Fig.\ \ref{fig:mix_dir} shows two visualisations of samples drawn from our DMM compared to ground truth samples in dimension $N=3$. Our model is able to accurately approximate $p_{\data}(\x)$. For further evaluations and experimental details, see Appendix \ref{app:wrightfisherexperiments}.

\begin{figure}
    \centering
    % {\begin{tabular}{c}
    %      \includegraphics[trim={0 575 0 575}, clip, width=0.45\textwidth]{Plots/Simplex/xtf_tern.png} \\
    %      \includegraphics[trim={0 575 0 575}, clip, width=0.45\textwidth]{Plots/Simplex/xt0_tern.png}
    % \end{tabular}} \hspace{0.5cm} 
    % \raisebox{-0.5\height}{\includegraphics[width=0.45\textwidth]{Plots/Simplex/xt0_histogram.png}}
    \includegraphics[trim={0 540 0 540}, clip, width=0.45\textwidth]{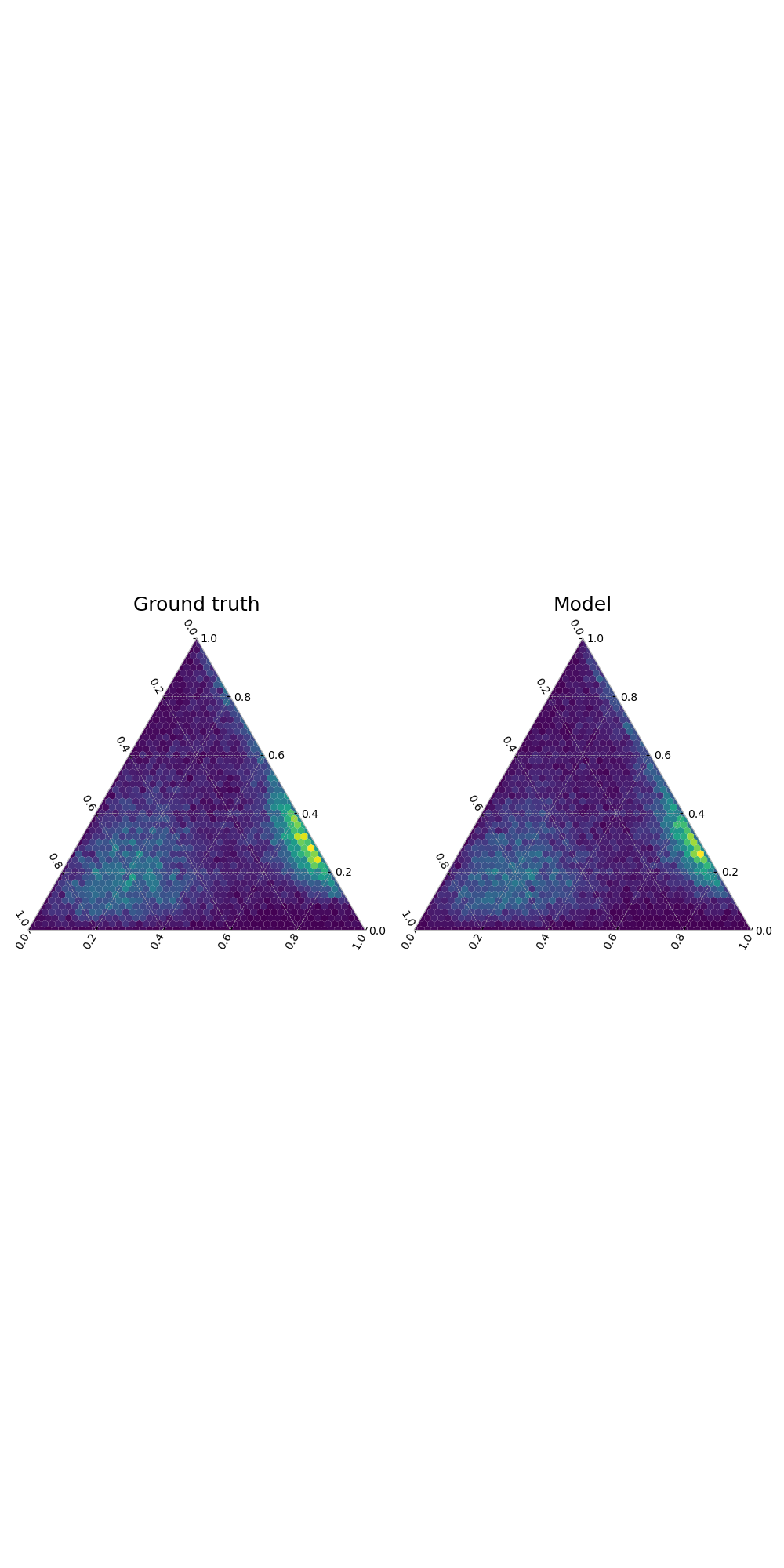}
    \includegraphics[width=0.54\textwidth]{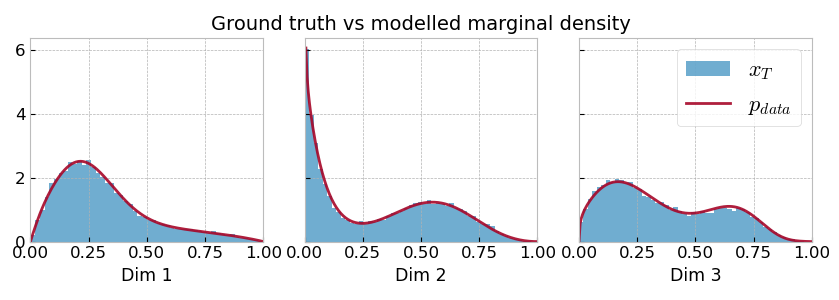}
    \caption{Histograms of samples from our simplex DMM and the ground truth mixture of Dirichlet distributions for dimension $N = 3$, plotted over the whole space as a ternary plot (left) and over the marginals per dimension (right).}
    \label{fig:mix_dir}
\end{figure}

\section{Discussion}
%Over the past two years, denoising diffusion models have become very popular in machine learning as they display remarkable empirical performance in a large variety of domains.
We have provided here a general framework which allows us to extend denoising diffusion models to general state-spaces. The resulting DMMs can be trained with principled objectives and used for inference, generalizing along the way score matching ideas. Their applicability and performance have been demonstrated on a range of problems. From a methodological point of view, the proposed framework is general enough to accommodate, for example, general noising processes, mixed continuous/discrete processes and some infinite-dimensional settings with finite representations (though our assumptions on the state space (see Appendix \ref{app:assstatespace}) may fail to hold in the infinite-dimensional setting so more care is required).

However, we still lack a proper theoretical understanding of these models. Under realistic assumptions on the data distribution,  \citecont{debortoli2022convergence} and \citecont{chen2022sampling} show that diffusion models on $\R^d$ can in theory learn essentially any distribution given a good enough score approximation and infinite data. However finite sample guarantees are currently absent. Moreover, $p_{\data}$ is typically an empirical measure as we only have access to a finite set of datapoints, so $q_t$ is a mixture of Gaussians for an Ornstein--Uhlenbeck noising diffusion and its score $\nabla \log q_t$ is thus available. If we were simulating samples using the exact time reversal of this diffusion, we would simply recover the empirical distribution. It is because we are approximating the time-reversal and in particular using an approximation of the scores that we are able to obtain novel samples. It is not yet clear why the approximation of the score using neural networks appears to provide perceptually realistic samples for many applications.

The effectiveness of such methods for inference, even in scenarios where standard MCMC or ABC techniques are not applicable \citepcont{sharrock2022sequential,geffner2023compositional}, 
% shi2022conditional
may also be considered surprising. One perspective on the training process is that it involves the model constructing its own summary statistics that allow it to perform inference effectively on the training observations. It is not yet well understood why the summary statistics the model learns appear empirically effective, or what sorts of summary statistics our training procedure biases the model towards.

Overall, this contribution shows how the range of existing models relate to each other and may help applying DMMs in practice to a large variety of problems. However, our understanding of such models is still incomplete and deserves further attention.

\section*{Acknowledgments}
Joe Benton was supported by the EPSRC Centre for Doctoral Training in Modern Statistics and Statistical Machine Learning (EP/S023151/1) and Yuyang Shi by the Huawei UK Fellowship Programme. Arnaud Doucet acknowledges support of the UK Dstl and EPSRC grant EP/R013616/1. This is part of the collaboration between US DOD, UK MOD and UK EPSRC under the Multidisciplinary University Research Initiative. He also acknowledges support from the EPSRC grants CoSines (EP/R034710/1) and Bayes4Health (EP/R018561/1).

\bibliographystylecont{chicago}
\bibliographycont{cont}

\newpage

\appendix

\section{Background on Feller processes}
\label{app:fellerbackground}

We recall some basic definitions and properties associated with Feller processes which we use for the derivations in Section \ref{sec:general_objective}. Our principal source is \citeappdx{dong2003feller}.

\subsection{Definition of a Feller process}
\label{app:fellerdefinition}

Let $S$ be a locally compact, separable metric space and let $C_0(S)$ denote the set of continuous functions $f : S \rightarrow \R$ such that for any $\epsilon > 0$ there exists a compact $K \subseteq S$ such that $|f(x)| < \epsilon$ for all $x \not \in K$. Also, let $||f||$ denote the supremum norm on $C_0(S)$.

\begin{definition}[Feller process]
A time-homogeneous Markov process $(X_t)_{t \geq 0}$ with state space $S$ and associated transition semigroup $(P_t)_{t \geq 0}$ is a Feller process if:
\begin{itemize}
    \item $P_t f \in C_0(S)$ for all $f \in C_0(S)$ and $t \geq 0$.
    \item $||P_t f|| \leq ||f||$ for all $f \in C_0(S)$.
    \item $P_t f(x) \rightarrow f(x)$ as $t \rightarrow 0$ for all $x \in S$ and $f \in C_0(S)$.
\end{itemize}
\end{definition}

\begin{definition}[Generator of a Feller process]
Suppose $X$ is a Feller process on $S$ as above and $f$ is a function in $C_0(S)$. If the limit
\begin{equation*}
    \A f := \lim_{s \rightarrow 0} \frac{P_s f - f}{s}
\end{equation*}
exists in $C_0(S)$, we say that $f$ is in the domain of the generator of $X$. We call the operator $\A$ defined in this way the generator of $X$ and denote its domain by $\D(\A)$.
\end{definition}

In the main text, we are concerned with Feller processes $\extX$, $\extY$ defined on the extended space $\cal{S} = \X  \times [0,\infty)$ which are constructed by taking a time-inhomogeneous Markov process $X$ on $\X$ and defining $\extX = (X_t, t)_{t \geq 0}$. In this setting, we have the following variant of Dynkin's formula.

\begin{lemma}[Dynkin's formula] If $\extX = (X_t, t)_{t \geq 0}$ is a Feller process on $\cal{S}$ with generator $\A$ and $f \in \D(\A)$, then
\begin{equation*}
    M_t^f = f(X_t, t) - f(X_0, 0) - \int_0^t \A f(X_s, s)~\d s
\end{equation*}
is a martingale with respect to the natural filtration of $\extX$.
\end{lemma}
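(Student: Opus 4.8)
The plan is to deduce the martingale property directly from the Markov property of $\extX$ together with the integrated semigroup--generator relation, following the classical route to Dynkin's formula. Throughout, write $(P_s)_{s \geq 0}$ for the transition semigroup of $\extX$ and $(\F_\tau)_{\tau \geq 0}$ for its natural filtration, and note at the outset that $M_t^f$ is integrable since $f, \A f \in C_0(\cal{S})$ are bounded.

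First I would establish the identity $P_s f - f = \int_0^s P_u \A f \, \d u$ for every $f \in \D(\A)$. Because $(P_s)$ is a strongly continuous contraction semigroup on $C_0(\cal{S})$ and $f$ lies in the domain of its generator, the $C_0(\cal{S})$-valued map $s \mapsto P_s f$ is differentiable with $\frac{\d}{\d s} P_s f = P_s \A f$; since $\A f \in C_0(\cal{S})$ makes $u \mapsto P_u \A f$ continuous, integrating this identity over $[0,s]$ and evaluating at a point gives the claimed relation.

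Next, fix $0 \le r \le t$ and compute $\E{M_t^f \mid \F_r}$ term by term. By the Markov property, $\E{f(\extX_t) \mid \F_r} = P_{t-r} f(\extX_r)$. For the time integral I would split $\int_0^t = \int_0^r + \int_r^t$, leaving the first part untouched since it is $\F_r$-measurable, and handle the second via (conditional) Fubini and the Markov property: $\E{\int_r^t \A f(\extX_s)\, \d s \mid \F_r} = \int_r^t P_{s-r}\A f(\extX_r)\, \d s$. Substituting $u = s - r$ and invoking the integrated identity from the first step turns this into $P_{t-r}f(\extX_r) - f(\extX_r)$. Assembling the pieces, the two occurrences of $P_{t-r} f(\extX_r)$ cancel and one is left with $\E{M_t^f \mid \F_r} = f(\extX_r) - f(\extX_0) - \int_0^r \A f(\extX_s)\, \d s = M_r^f$, which is exactly the martingale identity.

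The main technical obstacle is justifying the two analytic steps: the Banach-space fundamental theorem of calculus behind the first step and the conditional Fubini interchange of expectation and time integration in the second. Both are standard consequences of the Feller property, namely strong continuity of $(P_s)$ on $C_0(\cal{S})$ together with the boundedness of $f$ and $\A f$. The only point needing mild care is that $\extX$ lives on the non-compact extended space $\cal{S} = \X \times [0, \infty)$, so I would rely on the stated hypothesis that $\extX$ is genuinely a Feller process there.
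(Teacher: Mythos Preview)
Your proof is correct and follows the classical route to Dynkin's formula via the integrated semigroup identity $P_s f - f = \int_0^s P_u \A f\,\d u$ combined with the Markov property and conditional Fubini; the integrability and interchange steps are indeed justified by the boundedness of $f,\A f\in C_0(\cal{S})$ and strong continuity of $(P_s)$. The paper itself does not give a proof but simply cites a standard reference (Theorem~27.20 in Dong's notes on Feller processes), so your argument supplies exactly the self-contained justification that the paper defers to the literature, and is essentially the same standard proof one would find there.
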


\begin{proof}
See Theorem 27.20 in \citeappdx{dong2003feller}.
\end{proof}

\subsection{Adjoint of a generator}
\label{app:dualgeneratordefinition}

Given a state space $S$ and a reference measure $\nu$ on $S$, we can define an inner product on $C_0(S)$ by letting
\begin{equation*}
    \langle f , h \rangle = \int_S fh~\d \nu
\end{equation*}
for all $f,h \in C_0(S)$ such that the integral exists. This induces a Hilbert space structure on $C_0(S)$ and allows us to make the following definition, from \citeappdx{yosida1965functional}.

\begin{definition}[Adjoint of an operator]
\label{def:dual}
Given operator $\A$ with domain $\D(\A)$ contained in $C_0(S)$, we define the adjoint operator $\A^\ast$ acting at function $f \in C_0(S)$ by
\begin{equation*}
    \langle \A^\ast f, h \rangle = \langle f, \A h \rangle \hspace{5mm} \text{ for all } h \in \D(\A).
\end{equation*}
The domain $\D(\A^\ast)$ of $\A^\ast$ is the set of all functions $f$ such that there exists some function $\A^\ast f$ for which the above holds.
\end{definition}

\section{Assumptions for Section \ref{sec:general_objective}}
\label{app:assumptions}

Here, we list the assumptions under which our derivations in Section \ref{sec:general_objective} hold. Note that these assumptions can be verified in several relevant cases (see Appendix \ref{app:particularspaces}).

\subsection{Assumptions on the state space $\X$}
\label{app:assstatespace}

\begin{assumption}
\label{ass:statespace}
The state space $\X$ is a locally compact, separable metric space. In addition, there exists a reference measure $\nu$ on $\X$ with respect to which all relevant probability distributions are absolutely continuous.
\end{assumption}

\subsection{Assumptions on the marginals $p$ and $q$}
\label{app:asspq}

\begin{assumption}
\label{ass:regularity_of_p} We have $p_t \in \D(\hat \K^\ast)$ for each $t \in [0,T]$, where $\hat \K^\ast$ is the adjoint of the spatial part of the operator $\K$. In addition, $p$ is differentiable with respect to $t$ and $\partial_t p$ is bounded.
\end{assumption}

\subsection{Assumptions on the generators $\K$ and $\L$}
\label{app:assgenerator}

\begin{assumption}
\label{ass:feller}
$\extX$ and $\extY$ are Feller processes with associated transition semigroups $(P_t)_{t \geq 0}$, $(Q_t)_{t \geq 0}$ and generators $\K, \L$ respectively.
\end{assumption}

\begin{assumption}
\label{ass:decomposition}
$\K$ decomposes as $\K = \partial_t + \hat\K$, where $\hat\K f$ is defined only in terms of the spatial arguments of $f$, so we may view it as an operator on (a subset of) $C_0(\X)$.
\end{assumption}

\begin{assumption}
\label{ass:domainsubspace}
There exists a subset $\D_0 \subseteq \D(\hat \K) \cap L^2(\X, \nu)$ which is dense in $L^2(\X, \nu)$, satisfies $\hat \K h \in \D_0$ for all $h \in \D_0$ and such that every function in $\D_0$ is bounded and has compact support.
\end{assumption}

\subsection{Assumptions on $\M$ and $c$}
\label{app:assMc}

\begin{assumption}
\label{ass:Mandc}
The function $c : \cal{S} \rightarrow \R$ is bounded, and the function $v : \cal{S} \rightarrow \R$ is bounded, in  $\D(\M)$ and satisfies $\int_0^T \E{|\M v (Z_s, s)|^2} \d s < \infty$.
\end{assumption}

\subsection{Assumptions on $\beta$}
\label{app:assalpha}

\begin{assumption}
\label{ass:alpha}
The functions $\beta^{-1}$, $\beta^{-1} v$, $\log \beta$ and $\log v$ are in $\D(\L)$, $\beta^{-1}$ and $\beta \L (\beta^{-1})$ are both bounded, and $\beta \in \D(\hat \L ^\ast)$.
\end{assumption}

\section{Stochastic process theory}
\label{app:stochasticprocesstheory}

We provide full statements of the general stochastic process results used in Section \ref{sec:general_objective}. For completeness, we also provide proofs of the given results adapted to our setting.

\begin{theorem}[Fokker--Planck]
\label{generalised_FP}
Let $(X_t)_{t \in [0,T]}$ be a Markov process with generator $\K$ and marginals $p_t$ satisfying the assumptions in Appendix \ref{app:assumptions}. Then $p$ satisfies the forward Kolmogorov equation $\partial_t p = \hat\K^\ast p$ for $\nu$-almost every $\x$.
\end{theorem}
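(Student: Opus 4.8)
The plan is to derive the equation in weak form by testing against a dense family of functions, and then upgrade the resulting identity to $\nu$-almost everywhere equality. I would fix a test function $h \in \D_0$, the dense subspace of Assumption \ref{ass:domainsubspace}, and lift it to the time-augmented space $\cal{S} = \X \times [0,\infty)$. Because such an $h$ does not decay in the time variable, I would first multiply by a smooth time cutoff equal to one on $[0,T]$ and compactly supported, so that the resulting function lies in $C_0(\cal{S})$ and hence in the domain of $\K$; on $[0,T]$ it agrees with $h$, and since $\K = \del_t + \hat\K$ and $h$ is time-independent there, we have $\K h = \hat\K h$.

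First I would apply Dynkin's formula (Appendix \ref{app:fellerdefinition}) to conclude that $h(X_t) - h(X_0) - \int_0^t \hat\K h(X_s)\, \d s$ is a martingale for $t \in [0,T]$. Taking expectations and rewriting in terms of the marginals yields the integrated identity
\begin{equation*}
\int_\X h\, p_t \, \d\nu - \int_\X h\, p_0 \, \d\nu = \int_0^t \int_\X \hat\K h \, p_s\, \d\nu \, \d s,
\end{equation*}
where the assumption $\hat\K h \in \D_0$, together with boundedness, ensures the $\d s$-integrand is controlled enough to interchange expectation with the time integral.

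Next I would transfer the operator onto the density. By the definition of the adjoint (Definition \ref{def:dual}) and the fact that $p_s \in \D(\hat\K^\ast)$ (Assumption \ref{ass:regularity_of_p}), the inner integral equals $\int_\X h\, \hat\K^\ast p_s\, \d\nu$. Differentiating the displayed identity in $t$ and passing $\del_t$ through the spatial integral on the left — legitimate since $\del_t p$ is bounded and $h$ is bounded with compact support (Assumption \ref{ass:regularity_of_p}) — gives $\int_\X h\, \del_t p_t\, \d\nu = \int_\X h\, \hat\K^\ast p_t\, \d\nu$, that is, $\langle h, \del_t p_t - \hat\K^\ast p_t \rangle = 0$ for every $h \in \D_0$.

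Finally, since $\D_0$ is dense in $L^2(\X, \nu)$, a function orthogonal to all of $\D_0$ must vanish $\nu$-almost everywhere, yielding $\del_t p_t = \hat\K^\ast p_t$ a.e. as claimed. I expect the obstacles to be analytic bookkeeping rather than anything conceptual: justifying both interchanges (expectation with the time integral, and $\del_t$ with the spatial integral) via the boundedness and compact-support hypotheses, and making the concluding density argument rigorous when $\nu$ is not finite — because the test functions have compact support, one argues locally and uses the density of their span to exclude any nonzero measurable discrepancy in $\del_t p_t - \hat\K^\ast p_t$. The cleanest framing throughout is to work with the time-augmented process, which is exactly what converts the time-inhomogeneous dynamics of $X$ into the single Dynkin identity above.
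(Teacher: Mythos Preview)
Your proposal is correct and follows essentially the same route as the paper: apply Dynkin's formula to a time-independent test function $h\in\D_0$, take expectations and use Fubini to obtain the integrated identity, differentiate in $t$, pass to the adjoint, and conclude by density of $\D_0$ in $L^2(\X,\nu)$. If anything, you are slightly more careful than the paper in noting that $h$ must be multiplied by a smooth time cutoff to lie in $C_0(\cal{S})$ before Dynkin applies, a point the paper's proof elides.
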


\begin{proof} For any $h \in \D_0$, by Assumptions \ref{ass:regularity_of_p} and \ref{ass:domainsubspace} we may write
\begin{align*}
    \langle \partial_t p - \hat\K^\ast p, h \rangle &= \int_\X (\partial_t p)h - p (\hat\K h) \; \d\nu \\
    &= \partial_t \E{h(X_t)} - \E{\hat\K h(X_t)}.
\end{align*}
Applying Dynkin's formula to $f(\x,t) = h(\x)$, taking expectations and using Fubini's theorem, we see that
\begin{equation*}
    \E{h(X_t)} - \E{h(X_0)} = \int_0^t \E{\hat\K h(X_s)} \; \d s.
\end{equation*}
Differentiating with respect to $t$, we deduce that $\langle \partial_t p - \hat\K^\ast p, h \rangle = 0$. Since this holds for all $h \in \D_0$ and $\D_0$ is dense in $L^2(\X, \nu)$, we conclude that $\partial_t p - \hat\K^\ast p = 0$ holds $\nu$-a.e. as required.
\end{proof}

\begin{theorem}[Feynman--Kac]
\label{generalised_FK}
Let $\extZ = (Z_t, t)_{t \geq 0}$ be a Feller process on $\cal{S}$ with generator $\M$. Suppose that we are given functions $v, c: \cal{S} \rightarrow \R$ and $h : \X \rightarrow \R$ such that $\M, v, c$ solve equation $\M v + c v = 0$ (as in Assumption \ref{ass:existenceM}) with boundary condition $v(\cdot, T) = h(\cdot)$. Suppose also that Assumption \ref{ass:Mandc} is satisfied. Then we have
\begin{equation*}
    v(\x, \tau) = \E{h(Z_T) \exp \left\{ \int_\tau^T c(Z_s, s) \; \d s \right\} \; \Bigg|\; Z_\tau = \x}
\end{equation*}
for all $0 \leq \tau \leq T$.
\end{theorem}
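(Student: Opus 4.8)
The plan is to exhibit the process
\[
N_t = v(Z_t, t)\exp\Big\{\int_\tau^t c(Z_s, s)\,\d s\Big\}, \qquad t \in [\tau, T],
\]
with $Z_\tau = \x$ held fixed, as a martingale with respect to the natural filtration of $\extZ$, and then to read the claimed identity off from $\E{N_T \mid Z_\tau = \x} = N_\tau$. The two endpoints are immediate: the exponential factor equals $1$ at $t = \tau$, so $N_\tau = v(\x, \tau)$, while the boundary condition $v(\cdot, T) = h(\cdot)$ gives $N_T = h(Z_T)\exp\{\int_\tau^T c(Z_s, s)\,\d s\}$. Thus the martingale property is exactly what converts the deterministic quantity $v(\x,\tau)$ into the conditional expectation asserted in the theorem.

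To establish the martingale property, I would first write $v(\extZ_t)$ as a semimartingale via Dynkin's formula, which applies since $v \in \D(\M)$ by Assumption \ref{ass:Mandc}:
\[
v(\extZ_t) = v(\extZ_\tau) + \int_\tau^t \M v(\extZ_s)\,\d s + M_t,
\]
where $M$ is a martingale. The factor $E_t := \exp\{\int_\tau^t c(Z_s,s)\,\d s\}$ is a continuous process of finite variation with $\d E_t = c(Z_t, t) E_t\,\d t$, so the integration-by-parts formula for the product of a continuous finite-variation process with a semimartingale (whose covariation term vanishes) gives
\[
\d N_t = E_t\,\d v(\extZ_t) + v(\extZ_t)\,\d E_t = E_t\big(\M v + c\,v\big)(\extZ_t)\,\d t + E_t\,\d M_t.
\]
The drift term vanishes identically because $\M v + c\,v = 0$ by hypothesis, leaving $\d N_t = E_t\,\d M_t$, so that $N$ is a local martingale.

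It remains to upgrade this to a genuine martingale, and this is where I expect the real work to lie. Here I would invoke Assumption \ref{ass:Mandc}: boundedness of $c$ makes $E_t$ bounded uniformly on $[\tau, T]$, while $\int_0^T \E{|\M v(Z_s, s)|^2}\,\d s < \infty$ ensures (via Cauchy--Schwarz applied to $\int_\tau^t \M v\,\d s$) that $M$ is a genuine square-integrable martingale; combined with boundedness of $v$, this makes $N$ itself bounded, hence a true martingale, and $\int_\tau^\cdot E_s\,\d M_s$ a legitimate martingale with bounded predictable integrand. The most delicate point, and the main obstacle, is that in the general Feller setting we have only Dynkin's formula rather than a full It\^o calculus, so I would need to justify carefully that $v(\extZ_\cdot)$ genuinely admits the stated semimartingale decomposition and that the integration-by-parts step is valid at this level of generality; once that structural point is secured, the integrability bounds make the martingale upgrade routine, and conditioning on $Z_\tau = \x$ at $t = T$ yields the theorem.
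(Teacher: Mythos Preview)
Your proposal is correct and follows essentially the same route as the paper: define the product process, use Dynkin's formula to get the semimartingale decomposition of $v(\extZ_t)$, apply integration by parts with the continuous finite-variation exponential factor (so the covariation term vanishes), cancel the drift via $\M v + cv = 0$, and then upgrade the local martingale to a true martingale using the boundedness of $c$ and $v$ together with the $L^2$ bound on $\M v$. The paper handles the structural point you flag --- validity of the semimartingale decomposition and product rule at the Feller level --- by appealing to general semimartingale theory (M\'etivier), which is exactly the kind of justification you anticipated needing.
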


\begin{proof} This result is well-known in the case of Euclidean diffusion processes \citepappdx{karatzas1991brownian}. In the general case, the proof relies on the theory of semimartingales (see for example \cite{metivier1982semimartingales}). Fix $\tau \in [0,T]$ and for all $t \in [\tau, T]$ define
\begin{equation*}
    \label{FK_martingale}
    S_t = v(Z_t, t) \exp \left\{ \int_\tau^t c(Z_s, s) \d s \right\}
\end{equation*}
along with
\begin{equation*}
    V_t = v(Z_t, t), \hspace{5mm} U_t = \exp\left\{\int_\tau^t c(Z_s, s) \d s\right\}.
\end{equation*}
Each of these processes is clearly a semimartingale, and so we may define $\d S_t$, $\d U_t$ and $\d V_t$ accordingly \citepappdx{metivier1982semimartingales}. The following lemma will allow us to express $\d S_t$ in terms of $\d U_t$ and $\d V_t$.

\begin{lemma}[Integration by parts for semimartingales]
\label{product_rule}
If $U$ and $V$ are semimartingales and at least one is continuous then we have
\begin{equation*}
    \d (U_t V_t) = U_{t-} \d V_t + U_{t-} \d V_t + \d [U, V]^c_t,
\end{equation*}
where $[\cdot,\cdot]^c_t$ denotes the quadratic covariation.
\end{lemma}

\begin{proof}
This is Theorem 2.7.4(ii) of \citeappdx{pulido2011semimartingales}, or follows from applying Theorem 27.1 of \citeappdx{metivier1982semimartingales} to the function $\varphi(U,V) = UV$.
\end{proof}

Since $v \in \D(\M)$ by Assumption \ref{ass:Mandc}, by Dynkin's formula we have that $V$ is a semimartingale and we may decompose
\begin{equation*}
    \d V_t = \M v \d t + \d M^v_t
\end{equation*}
where $M_t^v$ is a martingale. Also, since $c(x,t)$ is bounded by Assumption \ref{ass:Mandc}, $U$ is a continuous, adapted, previsible process of finite variation and satisfies
\begin{equation*}
    \d U_t = c(Z_t, t) \exp\left\{\int_\tau^t c(Z_s, s) \; \d s\right\} \d t.
\end{equation*}
In addition, note that $\d [U, V]_t^c = 0$ since $U$ is continuous and of finite variation. Therefore, by Lemma \ref{product_rule}, we can calculate
\begin{align*}
    S_t - S_\tau &= \int_\tau^t U_{s-} \d V_s + \int_\tau^t V_{s-} \d U_s + [U, V]^c_t \\
    &= \int_\tau^t U_s \big\{ \M v + cv \big\} \d s + \int_\tau^t U_s \; \d M_s^v \\
    &= \int_\tau^t U_s \; \d M_s^v
\end{align*}
where we have used that $\M v + cv = 0$ in the last line. Therefore, $S$ can be expressed as a stochastic integral with respect to the martingale $M^v$.

The conditions we have imposed through Assumption \ref{ass:Mandc} on $c$ and $v$ imply that $U$ is bounded and $M^v$ is square-integrable. It follows, for example from Theorem 24.4.5 in \citepappdx{metivier1982semimartingales}, that $S$ is a local martingale and hence, since it is also bounded, a true martingale. We then have that
\begin{align*}
    v(\x, \tau) &= \E{S_\tau | Z_\tau = \x} = \E{S_T | Z_\tau = \x} \\
    &= \E{h(Z_T) \exp \left\{ \int_\tau^T c(Z_s, s) \d s \right \} \Bigg | Z_\tau = \x}
\end{align*}
as required.
\end{proof}

\begin{theorem}[Girsanov]
\label{generalised_girsanov}
Let $\extY = (Y_t, t)_{t \geq 0}$ and $\extZ = (Z_t, t)_{t \geq 0}$ be Feller processes on $\cal{S}$ with generators $\L$, $\M$ and path measures $\Q$, $\P$ respectively, such that $Y_0$ and $Z_0$ have the same law. Suppose also that there exists a bounded, measurable function $\alpha : \cal{S} \rightarrow (0, \infty)$ in $\D(\L)$ such that $\alpha^{-1} \L \alpha$ is bounded, and such that
\begin{equation}
\label{eq:girsanovgeneratoreqn}
    \alpha \M f = \L(f\alpha) - f \L \alpha
\end{equation}
for all functions $f$ such that $f \in \D(\M)$ and $f\alpha \in \D(\L)$. Then we have
\begin{equation}
\label{eq:change_of_measure}
    \frac{\d \P}{\d \Q}(\omega) = \frac{\alpha(\omega_T, T)}{\alpha(\omega_0, 0)} \exp \Big\{ - \int_0^T \frac{\L \alpha (\omega_s, s)}{\alpha(\omega_s, s)} \; \d s \Big\}.
\end{equation}
\end{theorem}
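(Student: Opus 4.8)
The plan is to identify the claimed expression as a likelihood ratio between two path measures by the standard recipe for Markovian changes of measure: exhibit the right-hand side as a $\Q$-martingale $D_t$, use it to tilt $\Q$ into a new measure $\tilde\P$, verify that under $\tilde\P$ the coordinate process solves the martingale problem associated to $\M$, and finally invoke uniqueness of that martingale problem to conclude $\tilde\P = \P$. Taking $t = T$ then gives the stated density.

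Concretely, set
$$D_t = \frac{\alpha(Y_t,t)}{\alpha(Y_0,0)}\exp\Big\{-\int_0^t \frac{\L\alpha(Y_s,s)}{\alpha(Y_s,s)}\,\d s\Big\}.$$
First I would show $D_t$ is a bounded $\Q$-martingale with $D_0 = 1$. By Dynkin's formula, $A_t := \alpha(Y_t,t)$ decomposes as $\d A_t = \L\alpha(Y_t,t)\,\d t + \d M_t^\alpha$ for a $\Q$-martingale $M^\alpha$, while the exponential factor $E_t := \exp\{-\int_0^t (\L\alpha/\alpha)(Y_s,s)\,\d s\}$ is continuous and of finite variation with $\d E_t = -E_t(\L\alpha/\alpha)(Y_t,t)\,\d t$. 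Applying the semimartingale product rule (Lemma \ref{product_rule}), the finite-variation drift terms cancel exactly, leaving $\d(A_t E_t) = E_t\,\d M_t^\alpha$, so $D_t$ is a local martingale; boundedness of $\alpha$ and of $\alpha^{-1}\L\alpha$ makes $A_t$ and $E_t$, hence $D_t$, bounded, promoting it to a true martingale. Thus $\tilde\P$ defined by $\d\tilde\P/\d\Q|_{\cal{F}_t} = D_t$ is a probability measure, and since $Y_0$ and $Z_0$ share a law and $D_0 = 1$ it agrees with $\P$ on initial laws.

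The crux is the identification step: for $f \in \D(\M)$ with $f\alpha \in \D(\L)$, I would show $m_t := f(Y_t,t) - \int_0^t \M f(Y_s,s)\,\d s$ is a $\tilde\P$-martingale, equivalently that $m_t D_t$ is a $\Q$-martingale. Writing $g = f\alpha \in \D(\L)$, the key observation is that $f(Y_t,t)D_t = \alpha(Y_0,0)^{-1} g(Y_t,t)E_t$, so the awkward factor $\alpha(Y_t,t)$ cancels. A second application of the product rule, together with Dynkin's formula for $g$, yields a drift proportional to $\alpha\L g - g\L\alpha = \alpha[\L(f\alpha) - f\L\alpha]$, which by the generator hypothesis (\ref{eq:girsanovgeneratoreqn}) equals $\alpha^2\M f$. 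This is exactly matched by the drift produced when differentiating $(\int_0^t \M f\,\d s)D_t$, so the two finite-variation contributions cancel in $m_t D_t$ and only martingale increments, driven by $M^g$ and $M^\alpha$, survive.

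The main obstacle I anticipate is the final identification $\tilde\P = \P$: having shown that the coordinate process solves the $\M$-martingale problem under $\tilde\P$ with the correct initial law, I must appeal to well-posedness (uniqueness) of the martingale problem for the Feller generator $\M$, and argue that the class of test functions $f$ with both $f \in \D(\M)$ and $f\alpha \in \D(\L)$ is rich enough (a core) to pin down the law. The remaining checks are routine but necessary: promoting the various local martingales to genuine martingales, where the boundedness of $\alpha$ and $\alpha^{-1}\L\alpha$ together with the square-integrability afforded by the standing assumptions of Appendix \ref{app:assalpha} is precisely what is used.
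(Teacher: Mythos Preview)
Your proposal is correct and follows essentially the same route as the paper: the paper's proof simply cites \citeappdx{palmowski2002technique} for the facts that the right-hand side of (\ref{eq:change_of_measure}) is a $\Q$-martingale (their notion of a ``good function'') and that under the tilted measure the canonical process solves the $\M$-martingale problem, and then appeals to the Feller property for the identification $\tilde\P=\P$---precisely the three steps you spell out in detail. Your explicit use of the product rule and Dynkin's formula to exhibit the drift cancellations is exactly what underlies the cited reference, and the obstacle you flag (well-posedness of the $\M$-martingale problem) is handled in the paper by the standing assumption that $\extZ$ is Feller.
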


\begin{proof} This essentially follows from the work of \citeappdx{palmowski2002technique}. Using their terminology, their Proposition 3.2 implies $\alpha$ is a good function, so the RHS of Equation (\ref{eq:change_of_measure}) is a martingale and we may define a measure $\tilde \P$ by
\begin{equation*}
    \frac{\d \tilde \P}{\d \Q}(\omega) = \frac{\alpha(\omega_T, T)}{\alpha(\omega_0, 0)} \exp \Big\{ - \int_0^T \frac{\L \alpha (\omega_s, s)}{\alpha(\omega_s, s)} \; \d s \Big\}.
\end{equation*}
Under the measure $\tilde \P$, the canonical process $(\omega_t)_{t \in [0,T]}$ is still Markov. By the proof of their Theorem 4.2, we see that
\begin{equation*}
    \tilde D^f_t = f(Y_t, t) - \int_0^t \M f(Y_s, s) \d s
\end{equation*}
is a martingale for all sufficiently smooth functions $f$, implying that $\M$ is the generator of $(\omega_t)_{t \in [0,T]}$ under $\tilde \P$. It follows that $(\omega_t)_{t \in [0,T]}$ has the same law under $\tilde \P$ as $\extZ$ does under $\Q$, which is sufficient to prove the result since $\extY$ and $\extZ$ are Feller.
\end{proof}

\section{Proof from Section \ref{sec:general_objective}}
\label{app:genobjectiveproofs}

We give the proofs of Lemma \ref{lem:gen_lem_1} and Theorem \ref{thm:inference} from Section \ref{sec:general_objective}.

\genlem*

\begin{proof} Let us define $\hat \M$ to be the operator such that $\M = \hat \M + \del_t$. Then, since $\hat{\M} + c = \M + c- \del_t = \hat\K^\ast$, for any sufficiently rapidly decaying test function $f$ we have
\begin{align*}
    \langle \hat \M f, 1 \rangle + \langle cf, 1 \rangle & = \langle \hat \K^\ast f , 1 \rangle = \langle f, \hat\K 1 \rangle = 0,
\end{align*}
so $\langle \hat \M f, 1 \rangle = - \langle c , f \rangle$. Assumption \ref{ass:generatorrelation}, which states that $\beta^{-1} \M f = \L(\beta^{-1} f) - f \L (\beta^{-1})$ for all sufficiently rapidly decaying $f$, can be rearranged to $\hat \M f = \beta \hat \L (\beta^{-1} f) - \beta f \hat \L (\beta^{-1})$. So, it follows that
\begin{align*}
    \langle c, f \rangle & = - \langle \beta \hat \L (\beta^{-1} f) , 1 \rangle + \langle \beta f \hat \L (\beta^{-1}), 1 \rangle \\
    & = - \langle f, \beta^{-1} \hat \L^\ast \beta \rangle + \langle f, \beta \hat \L (\beta^{-1}) \rangle
\end{align*}
for any sufficiently rapidly decaying $f$. We conclude that $\beta^{-1} \hat \L^\ast \beta = \beta \hat \L (\beta^{-1}) - c$.

Next, using Assumption \ref{ass:generatorrelation} with $f=v$ we can write
\begin{align*}
    v^{-1} \beta \L(\beta^{-1} v) & = \beta \L (\beta^{-1}) + v^{-1} \M v \\
    & = \beta \L (\beta^{-1}) - c \\
    & = - \del_t \log \beta + \beta \hat \L (\beta^{-1}) - c \\
    & = - \del_t \log \beta + \beta^{-1} \hat \L^\ast \beta.
\end{align*}
Finally, note that $- \L \log \beta + \hat\L \log \beta = - \del_t \log \beta$. Combining this with the final line above, we get the desired result.
\end{proof}

\inferencethm*

\begin{proof}
Applying Theorem \ref{thm:ELBO} to the generative process $X^{\xiobs^\ast}$ conditioned on observation $\xiobs^\ast$,
\begin{multline*}
    \log p_T(\x_0 | \xiobs^\ast) \geq \bb{E}_{\Q}\Big[\log p_0(Y_T) \Big | Y_0 = \x_0 \Big] \\ -\int_0^T \bb{E}_{\Q}\bigg[\frac{\hat\L^\ast \beta(\x_t, \xiobs^\ast, t)}{\beta(\x_t, \xiobs^\ast, t)} + \hat\L \log \beta(\x_t, \xiobs^\ast, t) \; \bigg | \; Y_0 = \x_0 \bigg] \d t.
\end{multline*}
Replacing $\xiobs^\ast$ by $\xiobs_0$, letting $(\x_0, \xiobs_0) \sim p_\data$ and taking expectations, we get
\begin{multline*}
    \E[p_{\data}(\x_0, \xiobs_0)]{\log p_T(\x_0 | \xiobs_0)} \geq \bb{E}_{q_T(\x_T)}\Big[\log p_0(\x_T)\Big] \\ - \int_0^T \bb{E}_{q(\x_t, \xiobs_0)}\bigg[\frac{\hat\L^\ast \beta(\x_t, \xiobs_0, t)}{\beta(\x_t, \xiobs_0, t)} + \hat\L \log \beta(\x_t, \xiobs_0, t) \bigg] \d t.
\end{multline*}
For any given $\xiobs$, we have
\begin{multline*}
    \bb{E}_{q(\x_t | \xiobs)}\bigg[\frac{\hat\L^\ast \beta(\x_t, \xiobs, t)}{\beta(\x_t, \xiobs, t)} + \hat\L \log \beta(\x_t, \xiobs, t) \bigg] \\ = \E[q(\x_0, \x_t | \xiobs)]{\frac{\L( q_{\cdot|0}(\cdot | \x_0, \xiobs) / \beta(\cdot, \xiobs, \cdot))(\x_t, t)}{q_{t|0}(\x_t | \x_0, \xiobs) / \beta (\x_t, \xiobs, t)} - \L \log (q_{\cdot|0}(\cdot | \x_0, \xiobs) / \beta (\cdot, \xiobs, \cdot))(\x_t, t)} + \const
\end{multline*}
by the argument of Appendix \ref{app:equiv_objectives} (see below), where the constant depends only on the dynamics of the forward process. Substituting $\xiobs_0$ for $\xiobs$ and taking expectations over $\xiobs_0 \sim p_\data$, noting that $q_{t|0}(\x_t | \x_0, \xiobs_0) = q_{t|0}(\x_t | \x_0)$, we get
\begin{multline*}
    \bb{E}_{q(\x_t, \xiobs_0)}\bigg[\frac{\hat\L^\ast \beta(\x_t, \xiobs_0, t)}{\beta(\x_t, \xiobs_0, t)} + \hat\L \log \beta(\x_t, \xiobs_0, t) \bigg] \\ = \E[q(\x_0, \x_t, \xiobs_0)]{\frac{\L( q_{\cdot|0}(\cdot | \x_0) / \beta(\cdot, \xiobs_0, \cdot))(\x_t, t)}{q_{t|0}(\x_t | \x_0) / \beta(\x_t, \xiobs_0, t)} - \L \log (q_{\cdot|0}(\cdot | \x_0) / \beta (\cdot, \xiobs_0, \cdot))(\x_t, t)} + \const.
\end{multline*}
It follows that
\begin{multline*}
    \E[p_{\data}(\x_0, \xiobs_0)]{\log p_T(\x_0 | \xiobs_0)} \geq \bb{E}_{q_T(\x_T)}\Big[\log p_0(\x_T)\Big] \\ - \int_0^T \E[q(\x_0, \x_t, \xiobs_0)]{\frac{\L( q_{\cdot|0}(\cdot | \x_0) / \beta(\cdot, \xiobs_0, \cdot))(\x_t, t)}{q_{t|0}(\x_t | \x_0) / \beta(\x_t, \xiobs_0, t)} - \L \log (q_{\cdot|0}(\cdot | \x_0) / \beta (\cdot, \xiobs_0, \cdot))(\x_t, t)} \d t \\ + \const.
\end{multline*}
The first term on the RHS and the constant are independent of the dynamics of the reverse process. Hence minimising
\begin{equation*}
    \mathcal{I}_{\textup{DSM}}(\beta) = \int_0^T \E[q(\x_0, \x_t, \xiobs_0)]{\frac{\L(q_{\cdot|0}(\cdot | \x_0) / \beta(\cdot, \xiobs_0, \cdot))(\x_t,t)}{q_{t|0}(\x_t | \x_0) / \beta(\x_t, \xiobs_0, t)} - \L \log (q_{\cdot|0}(\cdot | \x_0) / \beta(\cdot, \xiobs_0, \cdot))(\x_t,t)} \d t
\end{equation*}
is equivalent to maximising a lower bound on $\E[p_{\data}(\x_0, \xiobs_0)]{\log p_T(\x_0 | \xiobs_0)}$, which is the expected model log-likelihood.
\end{proof}

\section{Equivalence of generalised score matching objectives}
\label{app:equiv_objectives}

First, we show that $\cal{I}_{\textup{ISM}}$ and $\I_\textup{DSM}$ are equivalent training objectives.
\begin{align*}
    & \hspace{5mm} \E[q_{0,t}(\x_0, \x_t)]{\frac{\L( q_{\cdot|0}(\cdot | \x_0) / \beta(\cdot, \cdot))(\x_t, t)}{q_{t|0}(\x_t | \x_0) / \beta(\x_t, t)} - \L \log (q_{\cdot|0}(\cdot | \x_0) / \beta(\cdot, \cdot))(\x_t, t)} \\
    & = \int_\X \int_\X q_{0,t}(\x_0, \x_t) \left\{ \frac{\L( q_{\cdot|0}(\cdot | \x_0)/ \beta(\cdot, \cdot))(\x_t, t)}{q_{t|0}(\x_t | \x_0) / \beta(\x_t, t)} - \L \log (q_{\cdot|0}(\cdot | \x_0) / \beta(\cdot, \cdot))(\x_t, t) \right\} \d\nu(\x_0) \d\nu(\x_t) \\
    & = \int_\X q_0(\x_0) \int_\X \left\{ \beta(\x_t, t) \hat\L\left( \frac{q_{t|0}(\cdot | \x_0)}{\beta(\cdot, t)}\right)(\x_t) - q_{t|0}(\x_t | \x_0) \hat \L \log \left( \frac{q_{t|0}(\cdot | \x_0)}{\beta(\cdot, t)}\right)(\x_t) \right\} \d\nu(\x_t) \d\nu(\x_0) \\
    & = \int_\X q_0(\x_0) \int_\X q_{t|0}(\x_t|\x_0) \left\{\frac{\hat\L^\ast \beta(\x_t, t)}{\beta(\x_t,t)} + \hat\L \log \beta(\x_t,t) \right\} \d\nu(\x_t) \d\nu(\x_0) + \const \\
    & = \E[q_t(\x_t)]{\frac{\hat\L^\ast \beta(\x_t)}{\beta(\x_t)} + \hat\L \log \beta(\x_t)} + \const,
\end{align*}
where the constants depend only on the dynamics of the forward process and so are fixed during training. Integrating from $t = 0$ to $t = T$, we conclude that $\I_{\textup{ISM}}$ and $\I_{\textup{DSM}}$ are equivalent.

There is also an \emph{explicit score matching} form of the general DMM training objective as follows: 
\begin{equation*}
% \label{eq:genESM}
    \I_{\textup{ESM}}(\beta) = \int_0^T \E[q_t(\x_t)]{\frac{\L(q_\cdot(\cdot) / \beta(\cdot, \cdot))(\x_t, t)}{q_t(\x_t) / \beta(\x_t, t)} - \L \log (q_\cdot(\cdot) / \beta(\cdot, \cdot))(\x_t, t)} \d t.
\end{equation*}
To see that this is equivalent to $\I_{\textup{ISM}}$ and $\I_{\textup{DSM}}$, observe
\begin{align*}
    & \hspace{5mm} \E[q_t(\x_t)]{\frac{\L(q_\cdot(\cdot) / \beta(\cdot, \cdot))(\x_t, t)}{q_t(\x_t) / \beta(\x_t,t )} - \L \log (q_\cdot(\cdot) / \beta(\cdot, \cdot))(\x_t, t)} \\
    & = \int_\X q_t(\x_t) \left\{ \frac{\L(q_\cdot(\cdot) / \beta(\cdot, \cdot))(\x_t, t)}{q_t(\x_t) / \beta(\x_t, t)} - \L \log (q_\cdot(\cdot) / \beta(\cdot, \cdot))(\x_t, t) \right\} \d\nu(\x_t) \\
    & = \int_\X \left\{ \beta(\x_t, t) \hat \L\left( \frac{q_\cdot(\cdot)}{\beta(\cdot, \cdot)} \right) - q_t(\x_t) \hat \L \log \left( \frac{q_\cdot(\cdot)}{\beta(\cdot, \cdot)} \right) \right\} \d\nu(\x_t) \\
    & = \int_\X q_t(\x_t) \left\{ \frac{\hat \L^\ast \beta(\x_t, t)}{\beta(\x_t, t)} + \hat \L \log \beta(\x_t, t) \right\} \d \nu(\x_t) + \const \\
    & = \E[q_t(\x_t)]{\frac{\hat\L^\ast \beta(\x_t)}{\beta(\x_t)} + \hat\L \log \beta(\x_t)} + \const
\end{align*}
and integrate from $t=0$ to $t=T$.

\section{Application to particular spaces}
\label{app:particularspaces}

In this section, we show how our general framework can be applied in some particular cases of interest, namely to Euclidean diffusion processes, continuous-time Markov Chains on finite discrete state spaces, diffusions on Riemannian manifolds and the Wright--Fisher diffusion on the simplex.

A recurring theme we see in each example is that the default parameterisation given by our framework in terms of $\beta$ is sub-optimal, either because we expect it to lead to numerical instabilities when optimising the training objective, or because it only captures a restricted subset of the class of reverse processes we are interested in. However, in each case it turns out to be possible to reparameterise the generative process in a way which captures a wider class of processes and lets us interpret the training objective on this wider class. This allows us to optimise our generative process over this wider class of processes. In addition this reparameterisation typically leads to a form of the objective that we expect to be more numerically stable in practice.

\subsection{Real vector spaces}
\label{app:realdiffusion}

We show how our framework recovers the setup of \citeappdx{song2021score}, described in Section \ref{sec:diffusionbackground}, in the case where $\K$ and $\L$ are the Euclidean diffusion processes given in Example \ref{ex:realdiffusiongenerator}. For convenience, we recall that $X$ and $Y$ satisfy the SDEs
\begin{equation}
\label{eq:diffusionSDEs}
    \d X_t = \mu(X_t, t) \d t + \d \hat B_t, \hspace{5mm}
    \d Y_t = b(Y_t, t) \d t + \d B_t,
\end{equation}
respectively, and the corresponding generators are
\begin{equation*}
    \K = \partial_t + \mu \cdot \nabla + \frac{1}{2} \Delta, \hspace{5mm}
    \L = \partial_t + b \cdot \nabla + \frac{1}{2} \Delta.
\end{equation*}

First, we check the assumptions made in Appendix \ref{app:assumptions}. If we let our reference measure $\nu$ be the Lebesgue measure, then Assumption \ref{ass:statespace} holds. Assumption 5 is satisfied whenever $b$ and $\mu$ are Lipschitz functions \citeappdx[Corollaries 19.27 and 19.31]{schilling2012brownian}, and Assumption \ref{ass:decomposition} follows given the form of $\K$ above. For Assumption \ref{ass:domainsubspace} we take $\D_0 = C_c^\infty(\R^d)$, the set of infinitely differentiable functions with compact support, and note that this is dense in $L^2(\X, \nu)$. Finally, we assume that the reverse process and $p_0$ are sufficiently regular that Assumptions \ref{ass:regularity_of_p}, \ref{ass:Mandc} and \ref{ass:alpha} hold. 

Using integration by parts, we can calculate the adjoint of $\hat \K$. We have
\begin{align*}
    \int f \hat \K h \d \nu & = \int f \lr{\mu \cdot \nabla h + \frac{1}{2} \Delta h } \d \nu \\
    & = - \int h \nabla \cdot (f \mu) \d \nu - \frac{1}{2} \int \nabla f \cdot \nabla h \d \nu \\
    & = \int h \lr{- \mu \cdot \nabla f - (\nabla \cdot \mu) f + \frac{1}{2} \Delta f } \d \nu,
\end{align*}
assuming $f$ and $h$ are sufficiently regular that all boundary terms are zero. Therefore,
\begin{equation*}
    \hat\K^\ast = - \mu \cdot \nabla - (\nabla \cdot \mu) + \frac{1}{2}\Delta.
\end{equation*}
We see that Assumption \ref{ass:existenceM} holds if we let $c = - (\nabla \cdot \mu)$ and
\begin{equation*}
    \M = \partial_t - \mu \cdot \nabla + \frac{1}{2}\Delta,
\end{equation*}
noting that this is the generator of another diffusion process $\extZ$ satisfying the SDE
\begin{equation*}
    \d Z_t = - \mu(Z_t, T-t) \d t + \d B'_t.
\end{equation*}

Given this form of $\L$ and $\M$, Assumption \ref{ass:generatorrelation} then becomes
\begin{equation*}
    - \beta^{-1} \mu \cdot \nabla f + \frac{1}{2} \beta^{-1} \Delta f = b \cdot \nabla (\beta^{-1} f) + \frac{1}{2} \Delta (\beta^{-1} f) - f b \cdot \nabla (\beta^{-1}) - \frac{1}{2} f \Delta (\beta^{-1}),
\end{equation*}
which reduces to
\begin{equation}
\label{eq:reverseparamdiffusion}
    \nabla \log \beta = \mu + b,
\end{equation}
for some bounded measurable function $\beta$. This puts a restriction on the class of reverse processes $\K$ we may use; the condition that the drift $\mu$ must be expressible as $-b + \nabla \log \beta$ for some $\beta$ is not automatically satisfied. However, the true time-reversal of the forward process will satisfy this property. In addition, we will show that we may reparameterise the training objective so that it can be interpreted for a broader class of reverse processes.

Assuming for the moment that Assumption \ref{ass:generatorrelation} does hold, we can evaluate
\begin{align*}
    \Phi(f) & = \frac{\L f}{f} - \L \log f \\
    & = \frac{b \cdot \nabla f}{f} + \frac{1}{2} \frac{\Delta f}{f} - b \cdot \nabla \log f - \frac{1}{2} \Delta \log f \\
    & = \frac{1}{2} \big\| \nabla \log f \big\|^2,
\end{align*}
and so the denoising score matching objective becomes
\begin{equation}
\label{eq:realdiffusionDSM}
    \mathcal{I}_{\textup{DSM}}(\beta) = \frac{1}{2} \int_0^T \E[q_{0,t}(\x_0, \x_t)]{\big\|\nabla \log q_{t|0}(\x_t | \x_0) - \nabla \log \beta(\x_t, t) \big\|^2} \d t.
\end{equation}

Looking at Equations (\ref{eq:reverseparamdiffusion}) and (\ref{eq:realdiffusionDSM}) suggests that it is more natural to parameterise the reverse process in terms of $s_\theta(\x,t) = \nabla \log \beta(\x,t)$ instead of $\beta(\x,t)$. Making this substitution, the objective becomes
\begin{equation*}
    \mathcal{I}_{\textup{DSM}}(\theta) = \frac{1}{2} \int_0^T \E[q_{0,t}(\x_0, \x_t)]{\big\|\nabla \log q_{t|0}(\x_t | \x_0) - s_\theta(\x_t,t) \big\|^2} \d t,
\end{equation*}
recovering the objective of \citeappdx{song2021score}.

Parameterising in terms of $s_\theta(\x, t)$ rather than $\beta(\x,t)$ is preferable for a couple of reasons. First, $s_\theta(\x, t)$ is targeting the score $\nabla \log q_t(\x)$, while $\beta(\x,t)$ is targeting $q_t(\x)$, and we expect the former to typically be an easier target. Second, while Equation (\ref{eq:realdiffusionDSM}) only makes sense when the forward and backward processes are related via Assumption \ref{ass:generatorrelation}, the objective in Equation (\ref{eq:score_matching_objective}) is valid for any forward and backward diffusion processes as in Equation (\ref{eq:diffusionSDEs}). Hence reparameterising allows us to capture a wider class of reverse processes in our optimisation.

\subsection{Discrete state spaces}
\label{app:discretespaces}

Next, we show how to apply our framework when $X$ and $Y$ are continuous-time Markov chains on a finite discrete state space as in Example \ref{ex:CTMCgenerator}. With a particular choice of parameterisation, we end up recovering the set-up of \citeappdx{campbell2022continuous}.

Recall that we start with $\K = \del_t + A$ and $\L = \del_t + B$, where $A$ and $B$ are the time-dependent generator matrices of $X$ and $Y$ respectively. From this it follows immediately that $\hat \K^\ast = A^T$. We will use the counting measure as our reference measure $\nu$.

On a finite discrete space, all functions are bounded and have compact support, and $\D(\hat \K) = \D(\hat \L) = C_0(\cal{S})$ is the set of all functions on $\X$. Assumptions \ref{ass:statespace}, \ref{ass:feller}, \ref{ass:decomposition} and \ref{ass:domainsubspace} follow immediately. In addition, we assume that the reverse process and $p_0$ are sufficiently regular that Assumptions \ref{ass:regularity_of_p}, \ref{ass:Mandc} and \ref{ass:alpha} always hold.

In order for Assumption \ref{ass:existenceM} to hold, we need to find $\M$ and $c$ such that $\M + c = \del_t + \hat \K ^\ast$ (viewed as operators). Since $\M$ should be the generator of another CTMC, we write $\M = \del_t + D$ for some generator matrix $D$. We then require $D + c = A^T$, where $c$ is viewed as a diagonal matrix and $D$ must have zero row sums. This holds if and only if we take
\begin{equation*}
    c_\x = \sum_{\y \in \X} A_{\y\x}, \hspace{5mm} D_{\x\y} = A_{\y\x} - c_\x \mathbbm{1}_{\x = \y}.
\end{equation*}

With this choice of $\M$, Assumption \ref{ass:generatorrelation} becomes
\begin{equation*}
    \beta^{-1}(\x, t) \sum_{\z \in \X} D_{\x\z} f(\z) = \sum_{\z \in \X} B_{\x\z} \; \beta^{-1}(\z, t) f(\z) - f(\x) \sum_{\z \in \X} B_{\x\z} \; \beta^{-1}(\z, t)
\end{equation*}
for all $\x \in \X$. If we pick two distinct $\x,\y$ and set $f(\z) = \mathbbm{1}_{\z = \y}$ in the above, we deduce
\begin{equation*}
    \beta^{-1}(\x, t) D_{\x\y} = \beta^{-1}(\y, t) B_{\x\y} \hspace{5mm} \text{for all } \x \neq \y.
\end{equation*}
Hence for Assumption 2 to hold, we require
\begin{equation}
\label{eq:discreverseparam1}
    A_{\y\x} = \frac{\beta(\x,t)}{\beta(\y,t)} B_{\x\y} \hspace{5mm} \text{for all } \x \neq \y.
\end{equation}
An elementary check also shows that this condition is sufficient for Assumption \ref{ass:generatorrelation} to hold for a given choice of $\beta$.

With this parameterisation, the implicit score matching objective becomes
\begin{align*}
    \I_{\textup{ISM}}(\beta) &= \int_0^T \E[q_t(\x_t)]{\frac{B^T \beta(\x_t)}{\beta(\x_t)} + B \log \beta(\x_t)} \d t \\
    &= \int_0^T \E[q_t(\x_t)]{\sum_{\y \in \X} \lrcb{B_{\y\x_t} \frac{\beta(\y, t)}{\beta(\x_t, t)} + B_{\x_t\y}\log \frac{\beta(\y, t)}{\beta(\x_t, t)}} } \d t.
\end{align*}

Unfortunately, fitting $\beta$ directly using this objective is typically likely to perform poorly. This can be seen for a couple of reasons. Firstly, the optimal value of $\beta(\x,t)$ is $q_t(\x)$, and so learning $\beta(\x,t)$ should be roughly as hard as targeting the marginals of the forward process directly. Secondly, the presence of $\beta$ in the denominators can lead to numerical instabilities in regions where the forward process has low density.

Fortunately, we have at least a couple of methods for avoiding these problems available. The first is to find an equivalent formulation of the objective in terms of the generator of the reverse process, and then learn this generator using a denoising parameterisation. For $\x \neq \y$, we have
\begin{align*}
    B_{\x\y}\log \frac{\beta(\y, t)}{\beta(\x, t)} & = B_{\x\y} \log \frac{B_{\x\y}}{A_{\y\x}} \\
    & = - B_{\x\y} \log A_{\y\x} + \const,
\end{align*}
where the constant depends only on the dynamics of the forward process, which are fixed.
We can therefore write
\begin{align*}
    \mathcal{I}_{\textup{ISM}}(A) & = \int_0^T \E[q_t(\x_t)]{ B_{\x_t\x_t} + \sum_{\y \neq \x_t} A_{\x_t\y} - \sum_{\y \neq \x_t} B_{\x_t\y} \log A_{\y\x_t}} \d t + \const \\
    & = \int_0^T \E[q_t(\x_t)]{- A_{\x_t\x_t} - \sum_{\y \neq \x_t} B_{\x_t\y} \log A_{\y\x_t}} \d t + \const,
\end{align*}
recovering the objective of \citeappdx{campbell2022continuous}. In addition, we can parameterise the reverse generator $A$ via
\begin{equation}
\label{eq:discretedenoisingparam}
    A_{\x\y}(\theta) = B_{\y\x} \sum_{\x_0} \frac{q_{t|0}(\y | \x_0)}{q_{t|0}(\x | \x_0)} p_\theta^{(t)}(\x_0 | \x_t) \hspace{5mm} \text{for } \x \neq \y,
\end{equation}
where $p_\theta^{(t)}(\x_0 | \x_t)$ is some learned estimate of the original datapoint $\x_0$ given the noised observation $\x_t$, and $\theta$ denotes the learnable parameters. This parameterisation should be more stable, as it avoids potentially exploding denominators, and we expect predicting the original datapoint given the noised datapoint to be an easier goal than learning the marginals $q_t(\x)$. See \citecont{campbell2022continuous} for more details on this denoising parameterisation.

The second method is to reparameterise our objective in terms of the ratios $s_\theta(\x, \y, t) = \beta(\y, t) / \beta(\x, t)$. Doing this, the training objective becomes
\begin{equation}
\label{eq:discrete_score_objective}
    \I_{\textup{ISM}}(\theta) = \int_0^T \E[q_t(\x_t)]{\sum_{\y \in \X} \big\{ B_{\y\x_t}s_\theta(\x_t, \y; t) - B_{\x_t\y} \log s_\theta(\y, \x_t; t) \big\}} \d t.
\end{equation}
In addition, the generative process is now parameterised in terms of $s_\theta(\x, \y, t)$ via
\begin{equation}
\label{eq:discreverseparam2}
    A_{\x\y} = B_{\y\x}s_\theta(\x,\y;t) \hspace{5mm} \text{for } \x \neq \y.
\end{equation}
Importantly, this objective matches the generalised objective from Section \ref{sec:general_objective} when the noising and generative processes are related by Assumption \ref{ass:generatorrelation}, and is still minimised when $s_\theta(\x,\y;t) = q_t(\y)/q_t(\x)$.

This parameterisation is potentially beneficial for a couple of reasons. Firstly, by removing $\beta(\x, t)$ from the denominators, we expect that objective should be more numerically stable. Secondly, this parameterisation captures a wider class of potential reverse processes, since $A$ is now given in terms of $B$ via Equation (\ref{eq:discreverseparam2}), which is less restrictive than Equation (\ref{eq:discreverseparam1}). 

As discussed further in Section \ref{sec:relationship_score_matching}, the integrand in Equation (\ref{eq:discrete_score_objective}) may be viewed as a score matching objective for discrete state space. It shares certain similarities with ratio matching techniques \citepappdx{hyvarinen2007some}, in particular targeting the ratios $\beta(\y,t)/\beta(\x,t)$. However, as far as we are aware this particular objective is not directly equivalent to any previously studied score matching objective in discrete state space \citepappdx{hyvarinen2007some, lyu2009interpretation, sohldickstein2011new}.

\subsection{Riemannian manifolds}
\label{app:riemannianmanifolds}

Consider the case where $\X$ is a Riemannian manifold with metric tensor $g$ and $\nu$ is the volume measure induced by $g$ (so that Assumption \ref{ass:statespace} holds). A diffusion in $\X$ may be defined through its generator, so we let the noising and generative processes have generators
\begin{equation*}
    \K = \partial_t + \mu \cdot \nabla + \frac{1}{2} \Delta, \hspace{5mm}
    \L = \partial_t + b \cdot \nabla + \frac{1}{2} \Delta.
\end{equation*}
respectively, where $\Delta$ is the Laplace-Beltrami operator defined in local coordinates by
\begin{equation*}
    \Delta f = \frac{1}{\sqrt{|g|}} \partial_i \big(\sqrt{|g|} g^{ij} \partial_j f \big)
\end{equation*}
and $|g|$ denotes the determinant of the metric tensor. For such processes, Assumption 5 is satisfied under mild regularity conditions on the manifold and the coefficients of the generators, as detailed by \citeappdx{molchanov1968strong}. As in the Euclidean diffusion case, Assumption \ref{ass:decomposition} follows from the given form of $\K$, for Assumption \ref{ass:domainsubspace} we may take $\D_0 = C^\infty_c(\X)$ and note that this is dense in $L^2(\X, \nu)$ \citepappdx[Section 4.4]{taylor2011partial}, and we assume that the reverse process and $p_0$ are sufficiently regular that Assumptions \ref{ass:regularity_of_p}, \ref{ass:Mandc} and \ref{ass:alpha} hold.

To calculate the adjoint operator of $\hat \K$, we recall that the canonical volume element on $\X$ induced by $g$ is given by
\begin{equation*}
    \d \omega = \sqrt{|g|} \; \d x^1 \wedge \cdots \wedge \d x^n
\end{equation*}
and the divergence of a vector field $a : \X \rightarrow T\X$ on a Riemannian manifold is given by
\begin{equation*}
    \nabla \cdot a = \frac{1}{\sqrt{|g|}} \partial_i(a^i \sqrt{|g|}).
\end{equation*}
Then, using the generalised Stokes' Theorem, we have
\begin{align*}
    \langle f, \mu \cdot \nabla h \rangle &= \int_\X f \mu^i \; (\partial_i h) \sqrt{|g|} \; \d x^1 \wedge \cdots \wedge \d x^n \\
    &= - \int_\X h \; \partial_i(\mu^i f \sqrt{|g|}) \; \d x^1 \wedge \cdots \wedge \d x^n \\
    &= \langle - (\nabla \cdot \mu)f - (\mu \cdot \nabla f), h \rangle,
\end{align*}
where we assume $f$ and $h$ are sufficiently smooth that we may disregard boundary terms. In addition, we have
\begin{align*}
    \langle f, \Delta h \rangle &= \int_\X f \partial_i(\sqrt{|g|} g^{ij} \partial_j h) \; \d x^1 \wedge \cdots \wedge \d x^n \\
    &= - \int_\X \sqrt{|g|} g^{ij} (\partial_i f) (\partial_j h) \; \d x^1 \wedge \cdots \wedge \d x^n \\
    &= \langle \Delta f, h \rangle.
\end{align*}
We conclude that the adjoint operator is given by
\begin{equation*}
    \hat \K ^\ast = - \mu \cdot \nabla - (\nabla \cdot \mu) + \frac{1}{2} \Delta.
\end{equation*}
Then, as in the Euclidean diffusion case we see that Assumption \ref{ass:existenceM} holds if we let $c = -(\nabla \cdot \mu)$ and
\begin{equation*}
    \M = \partial_t - \mu \cdot \nabla + \frac{1}{2}\Delta,
\end{equation*}
noting that $\M$ is also the generator of a diffusion process $Z$ on $\X$. We also find that Assumption \ref{ass:generatorrelation} reduces to the condition $\nabla \log \beta = \mu + b$, as before.

Assuming this holds, we can evaluate
\begin{align*}
    \Phi(f) & = \frac{\L f}{f} - \L \log f \\
    & = \frac{b \cdot \nabla f}{f} + \frac{1}{2} \frac{\Delta f}{f} - b \cdot \nabla \log f - \frac{1}{2} \Delta \log f \\
    & = \frac{1}{2} \big\| \nabla \log f \|_{g(x)}^2,
\end{align*}
where $\| \cdot \|_{g(x)}$ denotes the norm on the tangent space $T_x\X$ induced by $g$.

Finally, as in the Euclidean diffusion we make a reparameterisation $s_\theta(\x, t) = \nabla \log \beta(\x,t)$ in order to sidestep Assumption \ref{ass:generatorrelation} and provide an easier training target. The resulting denoising score matching objective is
\begin{equation*}
    \mathcal{I}_{\textup{DSM}}(\theta) = \frac{1}{2} \int_0^T \E[q_{0,t}(\x_0, \x_t)]{\big\|\nabla \log q_{t|0}(\x_t | \x_0) - s_\theta(\x_t,t) \big\|_{g(\x_t)}^2} \d t,
\end{equation*}
which reproduces the result of \citeappdx{debortoli2022riemannian} and \citeappdx{huang2022riemannian}. Notably, we find that all the relevant formulae in the manifold case are essentially the same as in the Euclidean diffusion case, except for the inclusion of the metric tensor.

\subsection{Wright--Fisher diffusions}
\label{app:wrightfisher}

Suppose we wish to approximate a distribution $p_{\data}(\cdot)$ over the space $\X = \cal{P}(E)$ of measures on a finite set $E = \{1, \dots, N\}$. A natural class of stochastic processes on $\X$ are the Wright--Fisher diffusions, a model used in population genetics to describe the evolution of allele frequencies in a population over time \citepcont{ethier1993transition}.

We can parameterise measures in $\X$ by tuples of real numbers $\p = (p_1, \dots, p_N) \in [0, 1]^N$ such that $\sum_{i=1}^N p_i = 1$. With this parameterisation, the Wright--Fisher diffusion has generator
\begin{equation*}
    \L = \del_t + \frac{1}{2} \sum_{i,j = 1}^N p_i \lr{\delta_{ij} - p_j} \frac{\del^2}{\del p_i \del p_j} + \sum_{i,j=1}^N q_{ij}p_i \frac{\del}{\del p_j}, \label{eq:WF_generator}
\end{equation*}
with domain $\D(\L) = \{F|_{\cal{P}(E)} : F(p_1, \dots, p_N) \in C^2(\R^N) \}$, where $(q_{ij})_{i,j = 1,\dots, N}$ is some matrix, potentially depending on $\p$ and $t$, such that $\sum_{j=1}^N q_{ij} = 0$ for each $i = 1,\dots,N$.

If we take $q_{ij} = \frac{1}{2}\vartheta_j > 0$ for all $\p \in \X, t \in [0,T]$ and $i \neq j$, then this process is ergodic and its invariant distribution is $\textup{Dirichlet}(\Theta)$, the Dirichlet distribution with parameters $\Theta = (\vartheta_1, \dots, \vartheta_N)$ \citepcont{ethier1993transition}. Moreover, the transition function of the process can be expressed as
\begin{equation}
\label{eq:WFeigenfunction}
    P(t, \p, \cdot) = \sum_{n=0}^\infty d^\Theta_n(t) \sum_{\alpha \in (\Z_+^N): |\alpha| = n} \binom{n
    }{\alpha} \prod_{i=1}^N p_i^{\alpha_i} \textup{Dirichlet}(\alpha + \Theta)(\cdot)
\end{equation}
where $d_n^\Theta(t)$ are smooth functions of $t$ given explicitly in \cite{ethier1993transition}. It follows that if we take $\vartheta_j > 2$ for all $j$ and we start the process in the interior of the simplex, then the process almost surely does not hit the boundary and the marginals of the forward process always vanish and have zero derivative at the boundary (since this holds for any Dirichlet distribution where all parameters are greater than 2).

Note that $\X$ is compact and hence locally compact and separable. Since we can view $\X$ as a subset of a linear subspace of $\R^N$, it also has a natural Lebesgue measure, which we take as the reference measure $\nu$. Hence we satisfy Assumption \ref{ass:statespace}.

We let our noising process have generator $\L$ as above and our generative process have generator
\begin{equation*}
    \K = \del_t + \frac{1}{2} \sum_{i,j = 1}^N p_i \lr{\delta_{ij} - p_j} \frac{\del^2}{\del p_i \del p_j} + \sum_{i,j=1}^N r_{ij}p_i \frac{\del}{\del p_j},
\end{equation*}
where $(r_{ij})_{i,j = 1, \dots, N}$ is another matrix with zero row sums. The forward process $Y$ is then Feller from \citetappdx[Theorem 3.4]{ethier1993fleming}. It follows that the extended forward process $\extY$ is also Feller and, since the process is pathwise continuous on a compact state space, this implies that the extended backward process $\extX$ is also Feller, so Assumption \ref{ass:feller} holds. Assumption \ref{ass:decomposition} follows from the given form of $\K$, and for Assumption \ref{ass:domainsubspace}, we can take $\D_0 = \{F|_{\cal{P}(E)} : F(p_1, \dots, p_N) \in C^\infty(\R^N) \}$. As usual, we assume that the reverse process and $p_0$ are sufficiently regular that Assumptions \ref{ass:regularity_of_p}, \ref{ass:Mandc} and \ref{ass:alpha} hold.

In order to calculate the adjoint operator $\hat \K ^\ast$, we require the following lemma, which is essentially a form of the integration by parts formula for the space $\X$.

\begin{lemma}
\label{lem:IbPsimplex}
Suppose we have $F : \R^N \rightarrow \R^N$ such that for all $x \in \X$, $F(x) \cdot \mb{1} = 0$, where $\mb{1}$ is the unit vector in the $(1,\dots,1)^T$ direction. In addition, suppose that $F(x) = 0$ for all $x \in \del \X$. Then
\begin{equation*}
    \int_\X \sum_{j=1}^N \frac{\del F_j}{\del p_j}(\p) \;\d \nu(\p) - \frac{1}{N} \int_\X \sum_{j,k = 1}^N \frac{\del F_k}{\del p_j}(\p) \;\d \nu (\p) = 0.
\end{equation*}
\end{lemma}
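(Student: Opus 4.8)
The plan is to recognise the integrand as the intrinsic (surface) divergence of $F$, regarded as a tangent vector field on the affine hyperplane $H = \{\p \in \R^N : \sum_{i=1}^N p_i = 1\}$ containing $\X$, and then to invoke the divergence theorem on $\X \subseteq H$. The hypothesis $F(\p) \cdot \mb{1} = 0$ says precisely that $F$ is everywhere tangent to $H$, so that such an intrinsic divergence is well-defined, while the hypothesis $F|_{\del\X} = 0$ will kill the resulting boundary term.

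First I would introduce the orthogonal projection $P = I - \tfrac{1}{N}\mb{1}\mb{1}^T$ onto the tangent space $T_\p H = \mb{1}^\perp$, whose entries are $P_{jk} = \delta_{jk} - \tfrac{1}{N}$, and observe that the integrand equals $\sum_{j,k} P_{jk} \frac{\del F_k}{\del p_j}$. Indeed, $\sum_{j,k}(\delta_{jk} - \tfrac1N)\frac{\del F_k}{\del p_j} = \sum_j \frac{\del F_j}{\del p_j} - \tfrac1N\sum_{j,k}\frac{\del F_k}{\del p_j}$, which is exactly the bracketed quantity in the lemma.

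Next I would show $\sum_{j,k}P_{jk}\frac{\del F_k}{\del p_j} = \mathrm{div}_H F$. Choosing an orthonormal basis $v_1, \dots, v_{N-1}$ of $\mb{1}^\perp$ gives flat coordinates $(t_1, \dots, t_{N-1})$ on $H$ in which $\mathrm{div}_H F = \sum_a \frac{\del}{\del t_a}(F\cdot v_a)$ and $\nu$ is Lebesgue measure up to a constant. Since $\frac{\del}{\del t_a} = \sum_j (v_a)_j \frac{\del}{\del p_j}$ by the chain rule, each summand becomes $\sum_{j,k}(v_a)_j (v_a)_k \frac{\del F_k}{\del p_j}$; summing over $a$ and using $\sum_a v_a v_a^T = P$ recovers $\sum_{j,k} P_{jk}\frac{\del F_k}{\del p_j}$. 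Here the tangency of $F$ guarantees that the $F\cdot v_a$ are genuine components and that $\mathrm{div}_H F$ is the intrinsic divergence entering the divergence theorem.

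Finally I would apply the divergence theorem on the compact $(N-1)$-dimensional convex polytope $\X \subset H$, namely $\int_\X \mathrm{div}_H F \, \d\nu = \int_{\del\X} F \cdot m \, \d S$, where $m$ is the outward unit conormal and $\d S$ the induced surface measure. Because $F = 0$ on $\del\X$, the right-hand side vanishes, giving the claim. The main obstacle I anticipate is the bookkeeping in the third step, namely correctly identifying the purely combinatorial integrand with a coordinate-free intrinsic divergence and checking that the two index sums play the roles of differentiation direction and vector component; there is also the mild technical point that $\del\X$ is only piecewise smooth, being a polytope with corners, but the divergence theorem still applies, facewise.
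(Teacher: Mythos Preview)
Your proposal is correct and follows essentially the same route as the paper: both arguments identify the integrand as the intrinsic (manifold) divergence of the tangent vector field $F$ on the simplex and then apply the divergence theorem with the boundary term killed by $F|_{\partial\X}=0$. The paper is more terse, simply asserting $\nabla_{\X}\cdot F = \nabla\cdot F - \mb{1}\cdot\nabla(F\cdot\mb{1})$, whereas you spell out the same identity via the projection $P = I - \tfrac{1}{N}\mb{1}\mb{1}^T$ and an orthonormal basis of $\mb{1}^\perp$; the extra detail is harmless and arguably clearer.
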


\begin{proof}
Since $F(x) \cdot \mb{1} = 0$ for all $x \in \X$, we can view $F$ as a function from $\X$ to $T\X$, the tangent bundle of $\X$. Then, since $F(x) = 0$ for $x \in \del \X$, by the generalised Stokes' theorem we have
\begin{equation*}
    \int_\X \nabla_\X \cdot F \;\d \nu = 0,
\end{equation*}
where $\nabla_\X \cdot F$ denotes the manifold divergence on $\X$. Finally, $\nabla_{\X} \cdot F = \nabla \cdot F - \mb{1} \cdot \nabla \lr{F \cdot \mb{1}}$, where $\nabla$ is the standard gradient operator on $\R^N$, and so the result follows.
\end{proof}

First, we need to calculate the adjoint of $\hat \K$. To deal with the first order term, we use Lemma \ref{lem:IbPsimplex} with $F_j = r_{ij}p_i f h$ for $i = 1, \dots, N$ in turn to get
\begin{align*}
    \int_\X \sum_{j=1}^N \frac{\del}{\del p_j}(r_{ij}p_i f h) \;\d \nu(\p) - \frac{1}{N} \int_\X \sum_{j,k = 1}^N \frac{\del}{\del p_j}(r_{ik}p_i f h) \;\d \nu (\p) = 0,
\end{align*}
whenever $fh = 0$ on $\del \X$. Since $\sum_{j=1}^N r_{ij} = 0$, the second term vanishes. Thus, summing over $i$ we get
\begin{multline*}
     \int_\X \sum_{i,j=1}^N p_i f h \frac{\del r_{ij}}{\del p_j} \; \d\nu(\p) + \int_\X \sum_{i=1}^N r_{ii} f h \;\d \nu(\p) \\ + \int_{\X} \sum_{i, j=1}^N r_{ij}p_i h \frac{\del f}{\del p_j} \;\d\nu(\p) + \int_{\X} \sum_{i, j=1}^N r_{ij}p_i f \frac{\del h}{\del p_j} \;\d\nu(\p) = 0,
\end{multline*}
from which we deduce that
\begin{equation}
\label{eq:WFdualpart1}
    \int_{\X} f \lr{\sum_{i,j = 1}^N r_{ij} p_i \frac{\del}{\del p_j}} h \; \d \nu(\p) = - \int_{\X} h \lr{\sum_{i,j=1}^N p_i \frac{\del r_{ij}}{\del p_j} + \sum_{i=1}^N r_{ii} + \sum_{i,j = 1}^N r_{ij} p_i \frac{\del}{\del p_j}} f \; \d \nu(\p)
\end{equation}
whenever $fh = 0$ on $\del \X$.

To deal with the second order term, we use Lemma \ref{lem:IbPsimplex} with $F_j = p_i(\delta_{ij} - p_j)(f \del_i h)$ for each $i = 1, \dots, N$ in turn to get
\begin{equation*}
    \int_\X \sum_{j=1}^N \frac{\del}{\del p_j}(p_i(\delta_{ij} - p_j)(f \del_i h)) \;\d \nu(\p) - \frac{1}{N} \int_\X \sum_{k,j = 1}^N \frac{\del}{\del p_j}(p_i(\delta_{ik} - p_k)(f \del_i h)) \;\d \nu (\p) = 0,
\end{equation*}
whenever $f \del_i h = 0$ on $\del \X$ for each $i = 1, \dots, N$. Expanding the LHS, we get
\begin{align*}
    & \hspace{5mm} \int_\X \sum_{j=1}^N (\delta_{ij} - p_i - \delta_{ij}p_i)(f \del_i h) \d\nu(\p) + \int_\X \sum_{j=1}^N p_i(\delta_{ij} - p_j)((\del_j f)(\del_i h) + f \del_i \del_j h) \d\nu(\p) \\
    & - \frac{1}{N} \int_{\X} \sum_{j,k=1}^N(\delta_{ij}\delta_{ik} - \delta_{ij} p_k - \delta_{jk}p_i) (f \del_i h) \d\nu(\p) - \frac{1}{N} \int_\X \sum_{j,k=1}^N p_i(\delta_{ik} - p_k) \del_j(f \del_i h) \d\nu(\p).
\end{align*}
Now, the last term is zero since $\sum_{k=1}^N p_i(\delta_{ik} - p_k) = 0$. Simplifying and summing over $i$, we get
\begin{multline*}
    \int_\X \sum_{i=1}^N (1-Np_i) f \del_i h \;\d \nu(\p) + \int_\X f \lr{\sum_{i,j=1}^N p_i(\delta_{ij} - p_j) \frac{\del^2}{\del p_i \del p_j}} h \;\d\nu(\p) \\ +  \int_\X \sum_{i,j=1}^N p_i(\delta_{ij} - p_j)(\del_j f)(\del_i h) \;\d\nu(\p) = 0.
\end{multline*}
By symmetry, we may reverse the roles of $f$ and $h$ in this last equation and subtract the resulting equations to get
\begin{align}
    & \hspace{5mm} \int_\X \sum_{i=1}^N (1-Np_i) f \del_i h \;\d \nu(\p) + \int_\X f \lr{\sum_{i,j=1}^N p_i(\delta_{ij} - p_j) \frac{\del^2}{\del p_i \del p_j}} h \;\d\nu(\p) \nonumber \\
    & = \int_\X \sum_{i=1}^N (1-Np_i) h \del_i f \;\d \nu(\p) + \int_\X h \lr{\sum_{i,j=1}^N p_i(\delta_{ij} - p_j) \frac{\del^2}{\del p_i \del p_j}} f \;\d\nu(\p) \label{eq:WFsecondorder1}
\end{align}
whenever $f \nabla h = h \nabla f = 0$ on $\del \X$. Finally, applying Lemma \ref{lem:IbPsimplex} with $F_i = fh(1 - N p_i)$, we get
\begin{equation*}
    \int_\X \sum_{j=1}^N \frac{\del}{\del p_j}(fh(1 - N p_j)) \;\d \nu(\p) - \frac{1}{N} \int_\X \sum_{i,j = 1}^N \frac{\del}{\del p_j}(fh(1 - Np_i)) \;\d \nu (\p) = 0.
\end{equation*}
whenever $fh = 0$ on $\del \X$. Expanding, we have
\begin{multline*}
    \int_\X \sum_{j=1}^N h(1 - N p_j) \del_j f \;\d\nu(\p) + \int_\X \sum_{j=1}^N f(1-Np_j) \del_j h \;\d\nu(\p) - N^2 \int_\X f h \d \nu(\p) \\ - \frac{1}{N} \int_\X \sum_{i,j = 1}^N (1 - Np_i) \del_j(fh) \;\d \nu (\p) + N \int_\X f h \d \nu(\p) = 0,
\end{multline*}
which simplifies to
\begin{equation}
\label{eq:WFsecondorder2}
     \int_\X \sum_{j=1}^N h(1 - N p_j) \del_j f \;\d\nu(\p) + \int_\X \sum_{j=1}^N f(1-Np_j) \del_j h \;\d\nu(\p) - N(N-1) \int_\X fh \d \nu(\p) = 0.
\end{equation}
Combining Equations (\ref{eq:WFsecondorder1}) and (\ref{eq:WFsecondorder2}), we see
\begin{align}
    \frac{1}{2} \int_\X f \lr{\sum_{i,j=1}^N p_i(\delta_{ij} - p_j) \frac{\del^2}{\del p_i \del p_j}} h \;\d\nu(\p) & = \frac{1}{2} \int_\X h \lr{\sum_{i,j=1}^N p_i(\delta_{ij} - p_j) \frac{\del^2}{\del p_i \del p_j}} f \;\d\nu(\p) \nonumber \\
    & \hspace{5mm} + \int_\X h \lr{\sum_{j=1}^N (1 - Np_j) \frac{\del}{\del p_j}} f \;\d\nu(\p) \nonumber \\
    & \hspace{5mm} - \frac{N(N-1)}{2} \int_\X fh \d \nu(\p). \label{eq:WFdualpart2}
\end{align}

Putting together Equations (\ref{eq:WFdualpart1}) and (\ref{eq:WFdualpart2}), we conclude that the operator
\begin{multline*}
    \K_0 = \frac{1}{2} \sum_{i,j = 1}^N p_i \lr{\delta_{ij} - p_j} \frac{\del^2}{\del p_i \del p_j} + \sum_{j=1}^N (1 - Np_j) \frac{\del}{\del p_j} - \frac{N(N-1)}{2} \\ - \sum_{i,j=1}^N p_i \frac{\del r_{ij}}{\del p_j} - \sum_{i=1}^N r_{ii} - \sum_{i,j = 1}^N r_{ij} p_i \frac{\del}{\del p_j}
\end{multline*}
satisfies $\langle \K_0 f, h \rangle = \langle f, \K h \rangle$ for all functions $f, h \in \{F|_{\cal{P}(E)} : F(p_1, \dots, p_N) \in C^2(\R^N) \}$ such that $f h = f \nabla h = h \nabla f = 0$ on $\del \X$. We conclude that $\hat \K^\ast = \K_0$ and $h \in \D(\hat \K^\ast)$ for all $h$ such that $h = \nabla h = 0$ on $\del \X$. Therefore, we choose to define
\begin{equation*}
    \M = \del_t + \frac{1}{2} \sum_{i,j = 1}^N p_i \lr{\delta_{ij} - p_j} \frac{\del^2}{\del p_i \del p_j} + \sum_{j=1}^N (1 - Np_j) \frac{\del}{\del p_j} - \sum_{i,j = 1}^N r_{ij} p_i \frac{\del}{\del p_j},
\end{equation*}
\begin{equation*}
    c = - \frac{N(N-1)}{2} - \sum_{i,j=1}^N p_i \frac{\del r_{ij}}{\del p_j} - \sum_{i=1}^N r_{ii}.
\end{equation*}
We see that Assumption \ref{ass:existenceM} is satisfied, since $v$ vanishes and has zero derivative on $\del \X$ by our earlier remarks. Recalling that $q_{ij} = \frac{1}{2} \vartheta_j$ for $i \neq j$ and $\sum_{j=1}^N r_{ij} = 0$, if we let
\begin{equation*}
    u_{ij} = \vartheta_j + \frac{p_j}{p_i}(\vartheta_i - 1) - r_{ij}, \quad \text{for } i \neq j
\end{equation*}
and set $u_{ii} = - \sum_{j \neq i} u_{ij}$, then for each $j = 1, \dots, N$ we have
\begin{align*}
    \sum_{i=1}^N u_{ij}p_i & = \sum_{i \neq j}u_{ij}p_i - \sum_{i \neq j} u_{ji}p_j \\
    & = \sum_{i \neq j} \lr{p_i \vartheta_j + p_j (\vartheta_i - 1)} - \sum_{i \neq j} \lr{p_j\vartheta_i + p_i(\vartheta_j - 1)} - \sum_{i=1}^N r_{ij}p_i \\
    & = 1 - Np_j - \sum_{i=1}^N r_{ij}p_i.
\end{align*}
We thus see that $\M$ is the generator of another Wright--Fisher process with transition matrix $(u_{ij})_{i,j=1,\dots,N}$. Hence Assumption \ref{ass:existenceM} is satisfied. To check Assumption \ref{ass:generatorrelation},
\begin{align*}
    \beta \L(\beta^{-1} f) - \beta f \L(\beta^{-1}) & = \frac{\beta}{2} \sum_{i,j=1}^N p_i(\delta_{ij} - p_j)(\beta^{-1}(\del_i \del_j f) + 2 (\del_i f)(\del_j \beta^{-1}) + f(\del_i \del_j \beta^{-1})) \\
    & \hspace{5mm} + \beta \sum_{i,j=1}^N q_{ij}p_i (\beta^{-1} \del_j f + f \del_j \beta^{-1}) \\
    & \hspace{5mm} - \frac{\beta f}{2} \sum_{i,j = 1}^N p_i \lr{\delta_{ij} - p_j} \del_i \del_j \beta^{-1} - \beta f \sum_{i,j=1}^N q_{ij}p_i \del_j \beta^{-1} \\
    & = \frac{1}{2} \sum_{i,j=1}^N p_i(\delta_{ij} - p_j)(\del_i \del_j f) - \sum_{i,j=1}^N p_i(\delta_{ij} - p_j)(\del_i f)(\del_j \log \beta) \\
    & \hspace{5mm} + \sum_{i,j=1}^N q_{ij}p_i (\del_j f).
\end{align*}
Thus Assumption \ref{ass:generatorrelation} holds if and only if
\begin{equation*}
    \sum_{i=1}^N u_{ij}p_i = - \sum_{i=1}^N p_i(\delta_{ij} - p_j) (\del_i \log \beta) + \sum_{i=1}^N q_{ij} p_i
\end{equation*}
for each $j = 1, \dots, N$. This is satisfied if we take
\begin{equation}
\label{eq:reversetransitionmatrix}
    r_{ij} = \vartheta_j + \frac{p_j}{p_i}(\vartheta_i - 1) - p_j \frac{\del (\log \beta)}{\del p_i} - q_{ij}
\end{equation}
for $i \neq j$ and $r_{ii} = - \sum_{j \neq i} r_{ij}$. We choose this parameterisation since if we start the forward process in its invariant distribution and learn $\beta$ so that the generative process is the exact time reversal of the forward process then $\beta(\p, t) \propto q_t(\x) \propto \prod_{i=1}^N p_i^{\vartheta_i - 1}$. In this case, Equation (\ref{eq:reversetransitionmatrix}) reduces to $r_{ij} = \vartheta_j - q_{ij}$, so this parameterisation ensures that if we start the forward process in its invariant distribution and learn the reverse process perfectly then the transition matrix $(r_{ij})_{i,j=1,\dots,n}$ we learn is equal to the transition matrix $(q_{ij})_{i,j=1,\dots,n}$ of the forward process.

We can then calculate the score matching operator $\Phi(f) = f^{-1} \L f - \L \log f$,
\begin{align*}
    \Phi(f) & = \frac{1}{2} \sum_{i,j = 1}^N p_i \lr{\delta_{ij} - p_j} f^{-1} \frac{\del^2 f}{\del p_i \del p_j} - \frac{1}{2} \sum_{i,j = 1}^N p_i \lr{\delta_{ij} - p_j} \frac{\del^2 (\log f)}{\del p_i \del p_j} \\
    & \hspace{15mm} + \sum_{i,j=1}^N q_{ij}p_i f^{-1} \frac{\del f}{\del p_j} - \sum_{i,j=1}^N q_{ij}p_i \frac{\del (\log f)}{\del p_j} \\
    & = \frac{1}{2} \sum_{i,j = 1}^N p_i \lr{\delta_{ij} - p_j} \frac{\del (\log f)}{\del p_i} \frac{\del (\log f)}{\del p_j}.
\end{align*}
However, since we do not have access to the analytic forms of the transition kernel $q_{t|0}(\p_t | \p_0)$ for this model, we must fit $\beta$ using the implicit score matching objective. We thus calculate
\begin{align*}
    \frac{\hat \L^\ast \beta}{\beta} + \hat \L \log \beta & = \frac{1}{2} \sum_{i,j = 1}^N p_i \lr{\delta_{ij} - p_j} \beta^{-1} \frac{\del^2 \beta}{\del p_i \del p_j} + \sum_{j=1}^N (1 - Np_j) \beta^{-1} \frac{\del \beta}{\del p_j} - \frac{N(N-1)}{2} \\
    & \hspace{5mm} - \sum_{i,j=1}^N p_i \frac{\del q_{ij}}{\del p_j} - \sum_{i=1}^N q_{ii} - \sum_{i,j = 1}^N q_{ij} p_i \beta^{-1} \frac{\del \beta}{\del p_j} \\
    & \hspace{5mm} + \frac{1}{2} \sum_{i,j = 1}^N p_i \lr{\delta_{ij} - p_j} \frac{\del^2 (\log \beta)}{\del p_i \del p_j} + \sum_{i,j=1}^N q_{ij}p_i \frac{\del (\log \beta)}{\del p_j} \\
    & = \sum_{i,j = 1}^N p_i \lr{\delta_{ij} - p_j} \frac{\del^2 (\log \beta)}{\del p_i \del p_j} + \frac{1}{2} \sum_{i,j = 1}^N p_i \lr{\delta_{ij} - p_j} \frac{\del (\log \beta)}{\del p_i} \frac{(\del \log \beta)}{\del p_j} \\
    & \hspace{5mm} + \sum_{j=1}^N (1 - Np_j) \frac{\del (\log \beta)}{\del p_j} + \const,
\end{align*}
where we have discarded terms that do not depend on $\beta$. Noting that the loss and the reverse process only depend on $\beta$ through $\del (\log \beta)/ \del p_j$, we reparameterise in terms of $s_\theta^i(\p, t) = p_i \del (\log \beta(\p, t)) / \del p_i$. (We include the extra factor of $p_i$ for numerical stability reasons, since if we start in the stationary distribution then $p_i \del (\log \beta(\p, t)) / \del p_i$ should be of constant scale.) Doing this, the implicit score matching objective becomes
\begin{multline}
    \label{eq:WF_ism}
    \I_{\textup{ISM}}(\theta) = \int_0^T \bb{E}_{q_t(\p_t)}\bigg[\sum_{i,j = 1}^N \lr{\delta_{ij} - p_j} \frac{\del s_\theta^i(\p_t, t)}{\del p_j} + \frac{1}{2} \sum_{i,j = 1}^N (p_j^{-1} \delta_{ij} - 1) s^i_\theta(\p_t, t) s_\theta^j(\p_t, t) \\ + (1 - N) \sum_{j=1}^N s^j_\theta(\p_t, t) \bigg] \d t, 
\end{multline}
and the reverse process is parameterised as the Wright--Fisher diffusion with transition matrix $(r_{ij})_{i,j=1, \dots, N}$ where
\begin{equation*}
    r_{ij}(\p, T-t) = \frac{1}{2} \vartheta_j + \frac{p_j}{p_i}(\theta_i - 1) - \frac{p_j}{p_i} s^i_\theta(\p, t), \quad i \neq j.
\end{equation*}

\section{Proof of properties of the score matching operator}
\label{app:operatorproperties}

We give the proof of the properties of the score matching operator from Proposition \ref{lem:operator_properties}.

\operatorlem*

\begin{proof} Since $\log$ is a concave function, it follows that $\log(Q_t f) \geq Q_t (\log f)$ for all $f$ in the domain of $\Phi$ with equality if $f$ is constant. Hence
\begin{equation*}
    \frac{\log (Q_t f) - \log f}{t} \geq \frac{Q_t (\log f) - \log f}{t}
\end{equation*}
for all $t \geq 0$. Taking the limit $t \downarrow 0$, we deduce that $(\L f)/f \geq \L (\log f)$ which gives the first part of the lemma.

For the second part, we assume that $\pi_1 Q_t$ and $\pi_2 Q_t$ are absolutely continuous with respect to $\nu$ and let $\pi_{1,t}(\x)$ and $\pi_{2,t}(\x)$ respectively denote their densities. Then
\begin{align*}
    \frac{\d}{\d t} \KL{\pi_1 Q_t}{\pi_2 Q_t} & = \frac{\d}{\d t} \int \pi_{1,t}(\x) \log \lr{\frac{\pi_{1,t}(\x)}{\pi_{2,t}(\x)}} \d \nu(\x) \\
    & = \int \del_t \pi_{1,t}(\x) \log \lr{\frac{\pi_{1,t}(\x)}{\pi_{2,t}(\x)}} \d \nu(\x) + \int \pi_{1,t}(\x) \del_t \log \pi_{1,t}(\x) \d \nu(\x) \\
    & \hspace{10mm} - \int \pi_{1,t}(\x) \del_t \log \pi_{2,t}(\x) \d \nu(\x) \\
    & = \int \hat \L^\ast \pi_{1,t}(\x) \log \lr{\frac{\pi_{1,t}(\x)}{\pi_{2,t}(\x)}} \d \nu(\x) + \int \hat \L^\ast \pi_{1,t}(\x) \d \nu(\x) \\
    & \hspace{10mm} - \int \hat \L^\ast \pi_{2,t}(\x) \lr{\frac{\pi_{1,t}(\x)}{\pi_{2,t}(\x)}} \d \nu(\x)
\end{align*}
using the Fokker--Planck equation on each term. Since $\langle \hat \L^\ast \pi_{1,t}, 1\rangle = \langle \pi_{1,t}, \hat \L 1\rangle = 0$, we may drop the second term and write
\begin{align*}
    \frac{\d}{\d t} \KL{\pi_1 Q_t}{\pi_2 Q_t} & = \E[\pi_{1,t}(\x)]{\hat \L \log \lr{\frac{\pi_{1,t}(\x)}{\pi_{2,t}(\x)}}} - \E[\pi_{1,t}(\x)]{\lr{\frac{\pi_{2,t}(\x)}{\pi_{1,t}(\x)}} \hat \L \lr{\frac{\pi_{1,t}(\x)}{\pi_{2,t}(\x)}}} \\
    & = - \E[\pi_{1,t}(\x)]{\Phi\lr{\frac{\pi_{1,t}(\x)}{\pi_{2,t}(\x)}}},
\end{align*}
which is the desired result.
\end{proof}

\section{Discrete-time approximation proofs}
\label{app:discretetimeproofs}

In this section, we give the proofs of Lemma \ref{lem:approx} and Theorem \ref{thm:approx} from Section \ref{sec:discreteapprox}. In order to prove Lemma \ref{lem:approx}, we use a couple of lemmas which we present first.

\begin{lemma}
\label{lem:N}
Given processes $\extX$ and $\extY$ as in Section \ref{sec:general_objective}, define a process $\extW$ by setting $\extW_t = (X_{T-t}, t)$ and denote its generator by $\N$. Then we have
\begin{equation*}
    v \N g = (\M + c)(vg)
\end{equation*}
for all sufficiently rapidly decaying functions $g$.
\end{lemma}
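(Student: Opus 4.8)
The plan is to write $\N = \partial_t + \hat\N$, where $\hat\N$ is the (time-inhomogeneous) spatial generator of the reversed process $X_{T-t}$ at forward time $T-t$, and to reduce the claim to a single time-reversal identity for the spatial generators. Writing $v(\cdot, t) = p_{T-t}(\cdot)$ for the marginal density of $\extW_t = (X_{T-t}, t)$, the identity I need is
\[
    v\,\hat\N g = \hat\K^\ast(vg) - g\,\hat\K^\ast v,
\]
with every spatial operator evaluated at forward time $T-t$. Granting this, the lemma follows from two facts already in hand: that $\M + c = \partial_t + \hat\K^\ast$ as operators (the spatial part being $\hat\M = \hat\K^\ast - c$, exactly as used in the proof of Lemma \ref{lem:gen_lem_1}), and that $\partial_t v + \hat\K^\ast v = 0$, which is just $\M v + cv = 0$ from Assumption \ref{ass:existenceM} rearranged.

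Given the time-reversal identity, the final assembly is a short calculation. On one side, $v\,\N g = v\,\partial_t g + v\,\hat\N g = v\,\partial_t g + \hat\K^\ast(vg) - g\,\hat\K^\ast v$. On the other side, $(\M + c)(vg) = \partial_t(vg) + \hat\K^\ast(vg) = g\,\partial_t v + v\,\partial_t g + \hat\K^\ast(vg)$, and substituting $\partial_t v = -\hat\K^\ast v$ replaces the first term by $-g\,\hat\K^\ast v$. The two expressions then agree term by term, giving $v\N g = (\M + c)(vg)$.

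The substance of the proof, and the step I expect to be the main obstacle, is the time-reversal identity itself. I would derive it in the style of the proof of Theorem \ref{generalised_FP}, from the Bayes rule for the reversed transition kernel combined with the Fokker--Planck equation. Pairing against an arbitrary spatial test function $\psi$ and differentiating $\mathbb{E}[g(X_{\tau-h})\psi(X_\tau)]$ at $h = 0$ in two ways --- forwards, evolving $X_{\tau-h}$ to $X_\tau$ under $\hat\K$, and backwards, recovering $X_{\tau-h}$ from $X_\tau$ under $\hat\N$ --- while expanding $p_{\tau-h} = p_\tau - h\,\partial_\tau p_\tau + o(h)$, the order-$h$ terms yield
\[
    \int \big(\hat\K^\ast(p_\tau g) - g\,\hat\K^\ast p_\tau\big)\,\psi\,\d\nu = \int \big(p_\tau\,\hat\N g\big)\,\psi\,\d\nu,
\]
after using the definition of the adjoint and $\partial_\tau p_\tau = \hat\K^\ast p_\tau$; since $\psi$ is arbitrary, setting $\tau = T - t$ gives the desired identity. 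The delicate points to justify are that $\extW$ is genuinely Feller with a well-defined generator $\N$, that the marginals admit densities so that the reversed kernel has a Bayes form involving $p_{\tau-h}/p_\tau$, and that $g$ and $\psi$ decay rapidly enough for the adjoint manipulations to hold with vanishing boundary terms --- which is the role of the ``sufficiently rapidly decaying'' hypothesis and Assumptions \ref{ass:regularity_of_p}--\ref{ass:domainsubspace}.
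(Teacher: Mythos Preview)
Your proposal is correct and follows essentially the same route as the paper: both reduce the claim to the identity $v\,\N g = (\partial_t + \hat\K^\ast)(vg)$, obtained from a spatial time-reversal formula together with Fokker--Planck, and then invoke $\M + c = \partial_t + \hat\K^\ast$ from Assumption~\ref{ass:existenceM}. The only difference is in how the time-reversal identity is derived: the paper cites the carr\'e-du-champ integration-by-parts formula of Cattiaux et al.\ to obtain $p_t\,\overleftarrow{\K} g = \hat\K^\ast(g p_t) - \partial_t(p_t g)$ directly, whereas you derive the equivalent spatial identity $p_\tau\,\hat\N g = \hat\K^\ast(p_\tau g) - g\,\hat\K^\ast p_\tau$ by differentiating the joint expectation $\mathbb{E}[g(X_{\tau-h})\psi(X_\tau)]$ two ways; these are standard, closely related arguments leading to the same equation.
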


\begin{proof} First, we let $\overleftarrow{\K} = - \del_t + \hat \N$ denote the generator of the time-reversal of $\extX$. Then, the integration by parts formula of \citeappdx{cattiaux2021time} implies that for all sufficiently rapidly decaying test functions $f$ and $g$ we have
\begin{equation*}
    \langle p_t f, \K g + \overleftarrow{\K}g \rangle + \langle p_t, \Gamma(f,g)\rangle = 0,
\end{equation*}
where $\Gamma(f,g) = \K(fg) - f \K g - g \K f$ denotes the carr\'e du champ operator associated to $\K$. We deduce that
\begin{align*}
    \langle f, p_t \overleftarrow{\K} g \rangle & = - \langle p_t f, \K g\rangle - \langle p_t, \K(fg) - f \K g - g \K f \rangle \\
    &= -\langle p_t f, \del_t g \rangle - \langle \hat\K^\ast p_t, fg \rangle + \langle \hat\K^\ast (g p_t), f \rangle \\
    &= -\langle p_t \del_t g , f \rangle - \langle g \del_t p_t, f \rangle + \langle \hat\K^\ast (g p_t), f\rangle \\
    &= \langle \hat\K^\ast (g p_t) - \del_t(p_t g), f \rangle
\end{align*}
where in the third line we have used the Fokker--Planck equation. Since $f$ was arbitrary, it follows that
\begin{equation*}
    p_t \overleftarrow{\K} g = \hat\K^\ast (g p_t) - \del_t(p_t g).
\end{equation*}
Finally if we substitute $t \mapsto T-t$ in this final equation, we get
\begin{equation*}
    v \N g = (\hat\K^\ast + \partial_t)(vg), 
\end{equation*}
which gives the desired result when combined with the definition of $\M$ and $c$ from Assumption \ref{ass:existenceM}.
\end{proof}

\begin{lemma}
\label{lem:zeta}
Suppose $\beta : \cal{S} \rightarrow (0,\infty)$ is a function such that Assumption \ref{ass:generatorrelation} holds. If we define $\zeta = \beta^{-1} v$, then for any function $f$ decaying sufficiently rapidly, $\zeta$ satisfies
\begin{equation*}
    \zeta \N f = \L(f \zeta) - f \L \zeta.
\end{equation*}
\end{lemma}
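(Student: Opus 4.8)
The plan is to reduce the claim to Lemma \ref{lem:N} together with Assumption \ref{ass:generatorrelation}, since the target identity $\zeta \N f = \L(f\zeta) - f\L\zeta$ has exactly the shape of the generator relation in Assumption \ref{ass:generatorrelation}, but with the pair $(\M, \beta^{-1})$ replaced by $(\N, \zeta)$. The first step is to rewrite $\zeta \N f$ using Lemma \ref{lem:N}, which gives $v\,\N f = (\M + c)(vf)$. Since $\zeta = \beta^{-1} v$, multiplying through by $\beta^{-1}$ and letting the multiplication operator $c$ pass unchanged yields
\[
    \zeta \N f = \beta^{-1} \M(vf) + c\,\zeta f .
\]

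Next I would apply Assumption \ref{ass:generatorrelation} to the function $vf$, which requires $vf \in \D(\M)$ and $\zeta f = \beta^{-1}(vf) \in \D(\L)$. This gives $\beta^{-1}\M(vf) = \L(\zeta f) - (vf)\,\L(\beta^{-1})$, and substituting leaves
\[
    \zeta \N f = \L(f\zeta) - vf\,\L(\beta^{-1}) + c\,\zeta f .
\]
It then only remains to identify the last two terms with $-f\L\zeta$, which amounts to establishing the auxiliary identity $\L\zeta = v\,\L(\beta^{-1}) - c\,\zeta$.

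For this, the key computation is a direct expression for $\L\zeta$. Applying Assumption \ref{ass:generatorrelation} with $f = v$ gives $\L\zeta = \L(\beta^{-1} v) = \beta^{-1}\M v + v\,\L(\beta^{-1})$. Then Assumption \ref{ass:existenceM}, which asserts $\M v + cv = 0$, converts $\beta^{-1}\M v$ into $-c\beta^{-1}v = -c\zeta$, so that $\L\zeta = v\,\L(\beta^{-1}) - c\,\zeta$. Multiplying by $f$ shows $f\L\zeta = vf\,\L(\beta^{-1}) - c\,\zeta f$, which is precisely the negative of the residual terms above, delivering $\zeta\N f = \L(f\zeta) - f\L\zeta$ as required.

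The algebraic cancellations are clean, so the main obstacle is instead verifying the domain conditions that license each step: that Lemma \ref{lem:N} applies to the chosen $f$, that both $vf$ and $v$ lie in $\D(\M)$ with $\zeta f, \zeta \in \D(\L)$ so that Assumption \ref{ass:generatorrelation} may be invoked twice, and that the implicit division by $v$ used to read off $\N f$ from Lemma \ref{lem:N} is legitimate, which relies on $v > 0$. These are exactly the circumstances that the hypothesis ``for any function $f$ decaying sufficiently rapidly'' is intended to secure, in line with the regularity conditions recorded in Appendices \ref{app:assMc} and \ref{app:assalpha}.
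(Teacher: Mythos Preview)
Your proof is correct and follows essentially the same approach as the paper: both use Lemma~\ref{lem:N} to express $v\N f$ via $\M$, apply Assumption~\ref{ass:generatorrelation} twice (to $vf$ and to $v$), and invoke $\M v + cv = 0$ to cancel the $c$-terms. The only cosmetic difference is that the paper starts from $\L(f\zeta) - f\L\zeta$ and works toward $\zeta\N f$, whereas you go in the opposite direction and isolate the auxiliary identity $\L\zeta = v\L(\beta^{-1}) - c\zeta$ explicitly.
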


\begin{proof}
For any sufficiently rapidly decaying $f$ satisfying $f \zeta \in \D(\L)$ and $vf \in \D(\M)$, using Lemma \ref{lem:N} we have
\begin{align*}
    \L(f \zeta) - f \L \zeta &= \L(v \beta^{-1} f) - f \L(\beta^{-1} v) \\
    &= \beta^{-1} \M(vf) - \beta^{-1} f \M v \\
    &= \beta^{-1} v \N f - c \beta^{-1} vf + c \beta^{-1} vf \\
    &= \zeta \N f.
\end{align*}
\end{proof}

Now we can give the proofs of Lemma \ref{lem:approx} and Theorem \ref{thm:approx}.

\approxlem*

\begin{proof}
Let $\P^{\x_s}$ and $\Q^{\x_s}$ denote the path measures of $\extW$ and $\extY$ respectively on the interval $[s,t]$ when we condition on the initial value $\x_s$. Assuming $\beta$ is sufficiently regular so that $\zeta$ is bounded away from zero and infinity and $\zeta^{-1} \L \zeta$ is bounded and continuous in the time variable, by Girsanov's theorem and Lemma \ref{lem:zeta} we have
\begin{equation*}
    \frac{\d\P^{\x_s}}{\d\Q^{\x_s}}(\omega) = \frac{\zeta(\omega_t, t)}{\zeta(\omega_s,s)}\exp\left\{- \int_s^t \frac{\L \zeta(\omega_\tau, \tau)}{\zeta(\omega_\tau, \tau)} \d\tau \right\}.
\end{equation*}
Taking logarithms and writing $\gamma = t - s$, to first order in $\gamma$ for any fixed path $\omega$ this becomes
\begin{equation*}
    \log \frac{\d\P^{\x_s}}{\d\Q^{\x_s}}(\omega) = \log \frac{\zeta(\omega_t, t)}{\zeta(\omega_s,s)} - \gamma \frac{\L \zeta(\omega_s, s)}{\zeta(\omega_s, s)} + o(\gamma).
\end{equation*}
Since the first order terms depend only on the value of the path at its endpoints $(\omega_s, \omega_t)$, we conclude that
\begin{equation*}
    \log \frac{q_{t|s}(\x_t | \x_s)}{p_{T-t|T-s}(\x_t | \x_s)} = - \log \frac{\zeta(\x_t, t)}{\zeta(\x_s, s)} + \gamma \frac{\L \zeta(\x_s, s)}{\zeta(\x_s, s)} + o(\gamma).
\end{equation*}
It follows that
\begin{equation*}
    \log \frac{q_{t|s}(\x_t | \x_s)}{p_{T-s|T-t}(\x_s | \x_t)} = \log \frac{v(\x_t, t)}{v(\x_s, s)} - \log \frac{\zeta(\x_t, t)}{\zeta(\x_s, s)} + \gamma \frac{\L \zeta(\x_s, s)}{\zeta(\x_s, s)} + o(\gamma).
\end{equation*}
Taking expectations and using the definition of the generator as a stochastic derivative, we have
\begin{align*}
    \E[\Q]{\log \frac{q_{t|s}(\x_t | \x_s)}{p_{T-s | T-t}(\x_s | \x_t)}} &= \gamma \E[q_s(\x_s)]{\frac{\L\zeta(\x_s,s)}{\zeta(\x_s,s)} - \L \log \zeta(\x_s,s) + \L \log v(\x_s,s)} + o(\gamma) \\
    &= \gamma \E[q_s(\x_s)]{\frac{\hat\L^\ast\beta(\x_s,s)}{\beta(\x_s,s)} + \hat\L \log \beta(\x_s,s)} + o(\gamma),
\end{align*}
where in the final line we have used Lemma \ref{lem:gen_lem_1}.
\end{proof}

\approxthm*

\begin{proof}
Given time steps $0 = t_0 < t_1 < \dots < t_N = T$, define $\gamma_k = t_{k+1} - t_k$ for $k = 0, \dots, N-1$ and set $\overline{\gamma} = \max_{k=0, \dots, N-1} \gamma_k$. Then the natural discretisation of the objective $\I_{\textup{ISM}}(\beta)$ is given by
\begin{equation*}
    \sum_{k = 0}^{N-1} \gamma_k \E[q_{t_k}(\x_{t_k})]{\frac{\hat \L^\ast \beta(\x_{t_k})}{\beta(\x_{t_k})} + \hat \L \log \beta(\x_{t_k})}.
\end{equation*}
Using Lemma \ref{lem:approx} with $s = t_{k-1}$ and $t = t_k$ for $k = 1, \dots, N$, we get
\begin{align*}
    & \hspace{5mm} \E [\tilde q(\x_{t_{k-1}})]{\KL{\tilde q(\x_{t_{k-1}} | \x_{t_k})}{\tilde p_\theta(\x_{t_{k-1}} | \x_{t_k})}} \\
    & = \E[\tilde q(\x_{t_{k-1}}, \x_{t_k})]{\log \frac{\tilde q(\x_{t_k} | \x_{t_{k-1}})}{\tilde p_\theta(\x_{t_{k-1}} | \x_{t_k})}} + \const \\
    & = \gamma_{k-1} \; \bb{E}_{q_{t_{k-1}}(\x_{t_{k-1})}}\bigg[\frac{\hat\L^\ast \beta(\x_{t_k})}{\beta(\x_{t_k})} + \hat\L \log \beta(\x_{t_k}) \bigg] + o(\gamma_{k-1}) + \const.
\end{align*}
% Similarly, we see that
% \begin{equation*}
%     L_0 = \gamma_0 \; \bb{E}_{q_{t_0}(\x_{t_0})}\bigg[\frac{\hat\L^\ast \beta}{\beta} + \hat\L \log \beta\bigg] + o(\gamma_0) + \const.
% \end{equation*}
Putting this together, we see that
\begin{equation*}
    \KL{\tilde{q}(\x_{0:T})}{\tilde{p}_\theta(\x_{0:T})} = \sum_{k = 0}^{N-1} \gamma_k \E[q_{t_k}(\x_{t_k})]{\frac{\hat \L^\ast \beta(\x_{t_k})}{\beta(\x_{t_k})} + \hat \L \log \beta(\x_{t_k})} + o(\overline{\gamma}) + \const,
\end{equation*}
so objective (\ref{eq:discrete_time_ELBO}) is equivalent to the natural discretisation of $\I_{\textup{ISM}}$ to first order in $\overline{\gamma}$.
\end{proof}

\section{General equivalence between denoising autoencoders and score matching}
\label{app:autoencoder}

A denoising autoencoder takes a datapoint $\x_0$ drawn from a data distribution $q_0$, noises it according to some density $q_\tau(\x_\tau | \x_0)$ and then tries to reconstruct $\x_0$ given the noised observation $\x_\tau$ \citepappdx{vincent2008extracting}. Traditionally, $q_\tau(\x_\tau | \x_0)$ is taken to be Gaussian with mean $\x_0$ and some standard deviation $\sigma$ and we make a point estimate $f_\theta(\x_\tau)$ for $\x_0$ given $\x_\tau$. The parameters $\theta$ are learned by minimising the MSE error
\begin{equation*}
    \J_{\textup{DAE}}(\theta) = \E[q_{0,\tau}(\x_0, \x_\tau)]{\|f_\theta(\x_\tau) - \x_0 \|^2}.
\end{equation*}

For a general denoising autoencoder on state space $\X$, we allow a probabilistic reconstruction $p_{0|\tau}^{(\theta)}(\x_0 | \x_\tau)$ of $\x_0$ depending on a set of parameters $\theta$, rather than a point estimate. We fit $\theta$ by minimising the objective
\begin{equation*}
    \J_{\textup{DAE}}(\theta) = \E[q_{0,\tau}(\x_0, \x_\tau)]{- \log p_{0|\tau}^{(\theta)}(\x_0 | \x_\tau)}.
\end{equation*}
Note that this reduces to the MSE objective in the case where $\X = \R^d$ and $p_{0|\tau}^{(\theta)}(\x_0 | \x_\tau)$ is Gaussian with mean $f_\theta(\x_\tau)$.

Suppose now that we have a generalised denoising autodencoder where the noising distribution $q_{0,\tau}(\x_0, \x_\tau)$ is given by the endpoints of a Markov process on $\X$ with generator $\L$ and the denoising distribution $p_{0|\tau}^{(\theta)}(\x_0 | \x_\tau)$ is given by the endpoints of a Markov process on $\X$ with generator $\K$. Suppose further that we parameterise the denoising process $\K$ via some function $\beta(\x, t)$ according to Assumptions \ref{ass:existenceM} and \ref{ass:generatorrelation} as in Section \ref{sec:general_objective}. Then Lemma \ref{lem:approx} implies that $\J_{\textup{DAE}}$ is equivalent to first order to the objective 
\begin{equation*}
    \J_{\textup{ISM}}(\beta) = \E[q_\tau(\x_\tau)]{\frac{\hat\L^\ast \beta(\x_\tau, \tau)}{\beta(\x_\tau, \tau)} + \hat\L \log \beta(\x_\tau, \tau)},
\end{equation*}
or alternatively to the corresponding generalised denoising score matching objective as in Section \ref{sec:relationship_score_matching}.

This generalises the result of \cite{vincent2011connection}, which demonstrated an equivalence between denoising autoencoders and denoising score matching in the case of Gaussian noise on $\R^d$. Indeed, we recover their result by considering the case where $q_{\tau | 0}(\x_\tau | \x_0)$ and $p_{0|\tau}^{(\theta)}(\x_0 | \x_\tau)$ are Gaussian, noting that these distributions are naturally induced as the distributions of the endpoints of diffusion processes.

Our work extends this equivalence between denoising autoencoders and generalised score matching as described in Section \ref{sec:relationship_score_matching} to arbitrary state spaces and noising/denoising distributions, provided that the noising and denoising distributions can be viewed as the marginals at the endpoints of Markov processes with known generators.

\section{Experimental details}
\label{app:experimentaldetails}

We give the details of our experimental set-up and results from Section \ref{sec:experiments}. Code for all of our experiments can be found at \texttt{github.com/yuyang-shi/generalized-diffusion}. 

\subsection{Inference on $\R^d$ using diffusion processes}
\label{app:realdiffusionexperiment}

The $g$-and-$k$ distribution with parameters $(A,B,g,k)$ is defined via its quantile function
\begin{equation*}
    F^{-1} (q|A,B,g,k) = A+B\left[1+0.8 \tanh\left(\frac{gz(q)}{2}\right)\right]\left(1+z(q)^{2}\right)^{k}z(q), 
\end{equation*}
where $z(q)$ denotes the $q$th quantile of the standard Gaussian distribution, and we require $B > 0$ and $k > -0.5$. The parameters $A,B,g,k$ control the location, scale, skewness and kurtosis of the distribution respectively \citepcont{prangle2020gk}. The prior on the parameters is uniform on $[0,10]^4$. For the diffusion model, we centre and rescale each parameter linearly to $[-1,1]$ in our implementation, and transform back to $[0,10]$ for reporting.  

As our noising process, we use the Ornstein--Uhlenbeck process $\d Y_t = -\frac{1}{2} Y_t \d t + \d B_t$. This has generator $\L = \partial_t - \frac{1}{2} \x \cdot \nabla + \frac{1}{2} \Delta$ and transition densities $q_{t|0}(\x_t | \x_0)$ which are Gaussian and available analytically. We can sample from the forward process at time $t$ by sampling $\x_0 \sim q_0(\x_0)$ and then $\x_t \sim q_{t | 0}(\x_t | \x_0)$. In practice, we apply a time-rescaling to the noising process following \citeappdx{song2021score}, in order to apply less noise at small times and move more quickly to the reference distribution at large times, by considering 
\begin{equation*}
    \d Y_t = -\frac{1}{2} \beta(t) Y_t \d t + \sqrt{\beta(t)} \d B_t. 
\end{equation*}
The $\beta$ schedule is set to be linear and monotonically increasing, i.e. 
\begin{equation}
    \beta(t)=\beta_{\textup{min}} + (\beta_{\textup{max}} - \beta_{\textup{min}})t.
    \label{eq:betaschedule}
\end{equation}
We set $\beta_{\textup{min}}=0.001$ and $\beta_{\textup{max}}$ is selected using a grid search from $2,4,6,8,10$. 

The reverse process is parameterised in terms of a conditional score network $s_\theta(\x_t,\xiobs, t)$ using multilayer perceptrons (MLPs). We first encode $\x$ and $\xiobs$ into 128-dimensional encodings using two separate MLPs with 3 layers and 512 hidden units in each layer. We then concatenate the two encodings as well as the time $t$ and pass through another MLP with 3 layers and 512 hidden units in each layer. The total number of neural network parameters is approximately 1.9M.
For $N=250$, we take in $\xiobs$ the full set of order statistics as inputs to our network, i.e. we sort the observation $\xiobs$ and take all $n=250$ values. For $N=10000$, we take $n=100$ evenly-spaced order statistics from our observation as inputs, following \citeappdx{fearnhead2012constructing}. 

Since we have access to the analytic transition densities, we train using the denoising score matching objective $\I_{\textup{DSM}}(\theta)$. We use a total of ${10}^6$ training samples $(\x_0, \xiobs_0) \sim p_\data$ during training. We optimise the network using the Adam optimiser with batch size 512 and learning rate 0.0001 with a cosine annealing schedule for 2.5M iterations. For sampling, we use the Euler-Maruyama method with 1000 steps to simulate from the reverse SDE. 

The ground truth posterior density is estimated with MCMC samples generated using the R package \texttt{gk} \citepappdx{prangle2020gk}. We compare our method with the semi-automatic ABC (SA-ABC) and Wasserstein SMC (W-SMC) methodologies using the R packages \texttt{abctools} \citepcont{nunes2015abctools} and \texttt{winference} \citepcont{bernton2019approximate}, as well as with Sequential Neural Posterior \citepappdx{greenberg2019automatic}, Likelihood \citepappdx{papamakarios2019sequential} and Ratio Estimation \citepappdx{durkan2020contrastive} approaches (SNPE, SNLE and SNRE) using the \texttt{sbi} Python package \citepappdx{tejero-cantero2020sbi}. All methods are set to use ${10}^6$ data samples to generate $5000$ posterior samples. We note that the default configurations offered by the \texttt{sbi} package for SNPE, SNLE and SNRE use comparatively smaller neural networks compared to our choice of score network $s_\theta(\x_t,\xiobs, t)$ detailed above. We have correspondingly increased the size of the neural networks for the three methods to approximately the same number of parameters. We also use Neural Spline Flows (NSFs, \citeappdx{durkan2019neural}) for SNPE as it is reported to have superior performance \citepcont{lueckmann2021benchmarking}. Other settings are kept to the default values. 

Compared to SA-ABC and W-SMC methodologies, neural-network based approaches including our DMM model require fitting a neural network and therefore are more computationally expensive at training time. However, our model is able to produce more accurate posterior estimates for fixed $\xiobs_0$, and perform amortised inference across a range of parameter values using the same number of ${10}^6$ data samples. Therefore, it is comparatively more data-efficient. 
%, and does not require further sampling from the likelihood model $p_{\data}(\xiobs|\x)$ at test time. 

As well as the plots in the main text, we also provide a pair plot comparing the approximate posterior from our diffusion model to the ground truth joint distribution in Fig.\ \ref{fig:gandkpairplot}. We see that our model provides results very close to the ground truth for the parameters $A$, $B$ and $g$ and can model the dependency between parameters, but gives a wider estimate in its reproduction of the posterior over $k$.

% We also refer to \citecont{sharrock2022sequential} and \citeappdx{geffner2022score} for further benchmarks of the method (named NPSE therein) on various other SBI problems. % and further developments of the method.

\begin{figure}[t]
    \centering
    \includegraphics[width=0.85\textwidth]{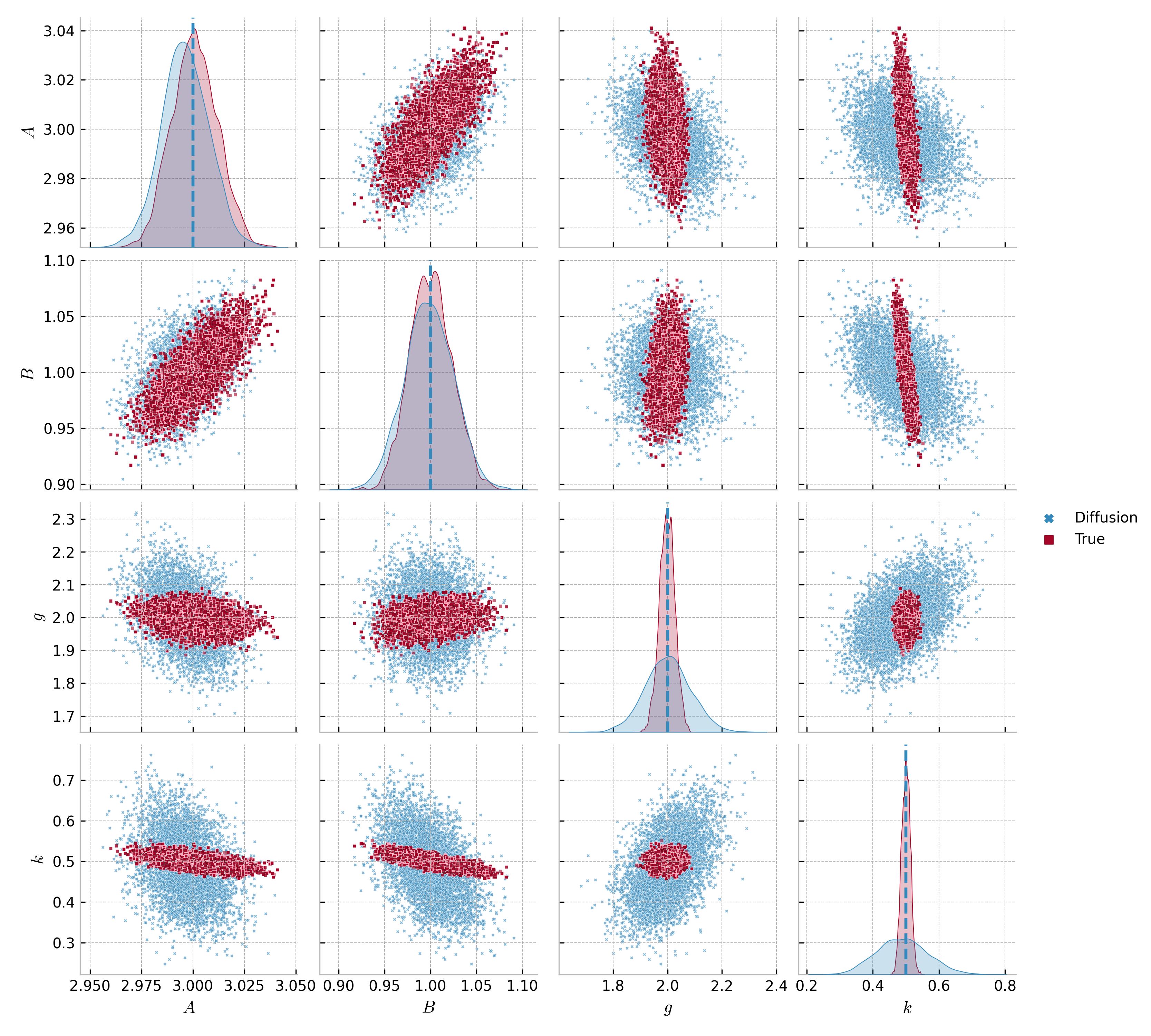}
    \caption{Pair plots of the simulated posterior samples from the diffusion model and the ground truth distribution using MCMC for the $g$-and-$k$ distribution example, with $\x_{\textup{true}}=(3,1,2,0.5)$ and $N=10000$. The off-diagonal plots are the pairwise scatter plots between each component of $\x$, and the diagonal plots reproduce each parameter's marginal kernel density estimate.}
    \label{fig:gandkpairplot}
\end{figure}

\subsection{MNIST digit image inpainting using discrete-space CTMCs}
\label{app:mnistexperiment}

Our implementation in discrete space closely follows that of \citecont{campbell2022continuous}, and we refer to their paper for further details. We denote our states as $\x_0 = (\x_0^1 \dots, \x_0^D)$ and for our noising process we use a CTMC with generator matrix $B := B^{1:D}(\x^{1:D}, \y^{1:D})$ which factorises over the dimensions, so $B^{1:D}(\x^{1:D}, \y^{1:D}) = \sum_{i=1}^D \tilde B(\x^i, \y^i) \mathbbm{1}_{\x^{1:D \setminus i} = \y^{1:D \setminus i}}$ for some rate matrix $\tilde B$ acting on a single dimension. Thus each pixel evolves independently as a CTMC on $\{0, \dots, 255\}$ with rate matrix $\tilde B$. We use the Gaussian rate matrix of \cite{campbell2022continuous} for $\tilde B$, which respects the ordinal structure of our state space and has a discretised Gaussian as its invariant distribution. The transition probabilities for this forward process can be calculated analytically efficiently by diagonalising the matrix and using matrix exponentials. This allows us to sample directly from the forward process at time $t$. 

Since we have access to the forward transition probabilities, we use the denoising parameterisation of the reverse process in terms of $p_\theta^{(t)}(\x_0 | \x_t)$ given in Equation (\ref{eq:discretedenoisingparam}), which we expect to lead to more stable training. We parameterise $p^{(t)}_\theta(\x_0 | \x_t, \xiobs)$ using a convolutional U-net \citepappdx{ho2020denoising}, taking as inputs both $\x_t$ and $\xiobs$ (concatenated in the channel dimension), as well as a sinusoidal embedding of the time $t$. The total number of neural network parameters is approximately 6.1M. The output of the network is defined as the mean and log scale of a logistic distribution for each pixel. The logistic distribution is then discretised into bins $\{0, \dots, 255\}$, and $p^{(t)}_\theta(\x_0 | \x_t, \xiobs)$ is defined as the product of the discretised logistic distributions across dimensions. 

We used the MNIST dataset \citepappdx{lecun2010mnist} which consists of images of handwritten digits. To train our model, we minimise the objective given in Example \ref{ex:CTMCobjective}. For optimisation, we use the Adam optimiser with batch size 128 and learning rate 0.0002 for 1M iterations. 
In order to simulate the reverse process efficiently, we use a tau-leaping approximation with 1000 steps (for more details see \citecont{campbell2022continuous}).

We compare our method to a continuous state space approach, as used for example in \cite{song2021score} and presented in Appendix \ref{app:realdiffusion}. We first normalize the data to range $[-1,1]$, and then learn a continuous-space diffusion model with an Ornstein–-Uhlenbeck noising process. All training configurations are kept the same as the discrete-space DMM. We report the Peak Signal-to-Noise Ratio (PSNR) and Structural Similarity Index Measure (SSIM) for both methods in Table \ref{tab:mnist_discrete_vs_continuous}.
PSNR and SSIM are two image quality metrics which measure the similarity between the generated posterior image and the ground truth. PSNR measures the pixel-by-pixel difference between two images and is a direct transformation of the mean squared error (MSE), whereas SSIM is a structural and more perceptional metric based on luminance, contrast and structure. For the continuous-space diffusion model, we report values for both the raw output samples (rescaled back to original scale), as well as with a further rounding step to the nearest integer in $\{0,\dots,255\}$. The discrete-space and continuous-space models appear to achieve comparable results, with the discrete-space model having a slightly worse PSNR score, but slightly better SSIM score, suggesting comparable perceptual quality.

\begin{table}
    \caption{\label{tab:mnist_discrete_vs_continuous}PSNR and SSIM scores for MNIST 14x14 inpainting using discrete-space and continuous-space DMMs. Higher values denote better performance. }
    \centering
    \fbox{%
    \begin{tabular}{c|c|c|c}
          & Discrete-space & Continuous-space (raw) & Continuous-space (rounded) \\
         \hline
         PSNR & 16.63 & 16.72 & \textbf{16.75} \\
         \hline
         SSIM & \textbf{0.757} & 0.706 & 0.723 \\
    \end{tabular}}
\end{table}

\subsection{Large-scale image super-resolution using discrete-space CTMCs}
\label{app:imagenetexperimentdetails}

\begin{figure}
    \centering
    \includegraphics[height=2.8cm]{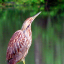}
    \hspace{2mm}
    \includegraphics[height=2.8cm]{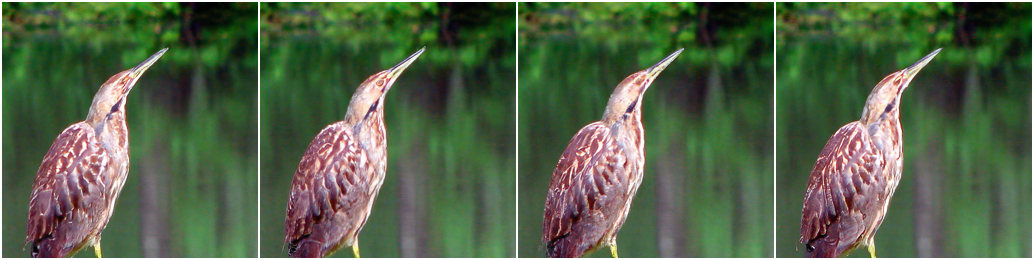} \\ 
    
    \includegraphics[height=2.8cm]{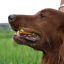}
    \hspace{2mm}
    \includegraphics[height=2.8cm]{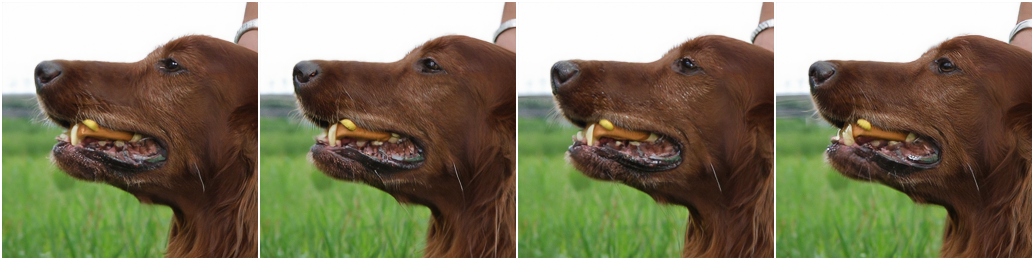} \\ 
    
    \includegraphics[height=2.8cm]{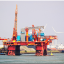}
    \hspace{2mm}
    \includegraphics[height=2.8cm]{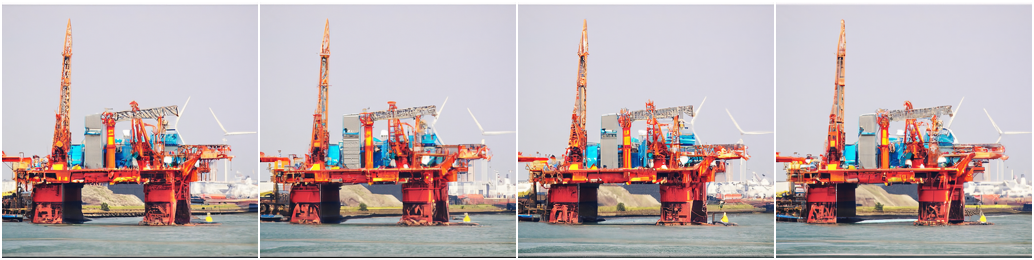} \\ 

    \includegraphics[height=2.8cm]{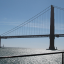}
    \hspace{2mm}
    \includegraphics[height=2.8cm]{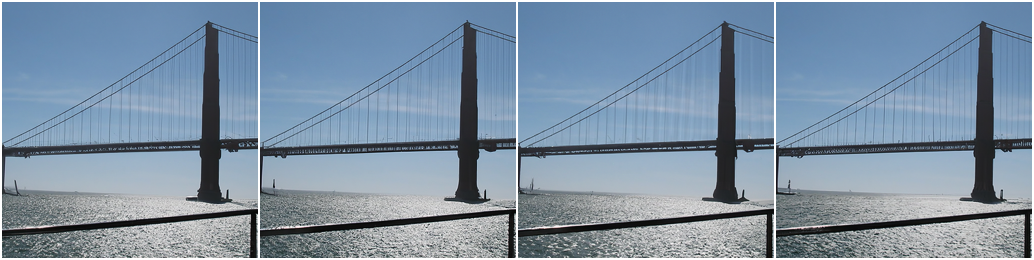}
    
    \includegraphics[height=2.8cm]{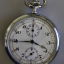}
    \hspace{2mm}
    \includegraphics[height=2.8cm]{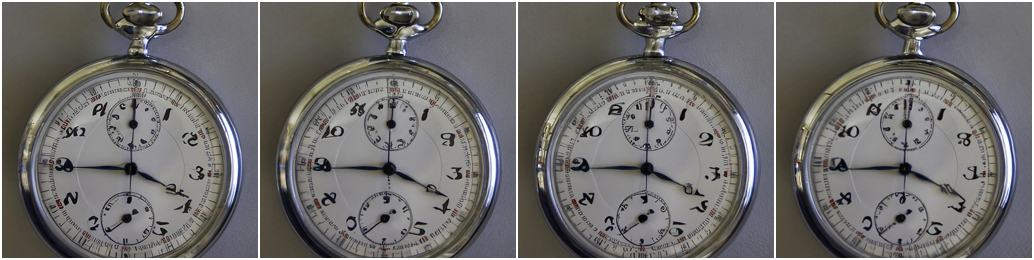}
    
    \includegraphics[height=2.8cm]{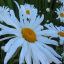}
    \hspace{2mm}
    \includegraphics[height=2.8cm]{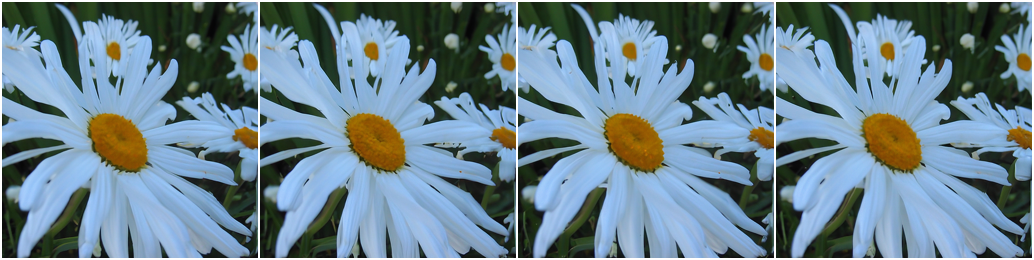} \\
    
    \caption{Image super-resolution results ($64\times64\to256\times256$) on the ImageNet dataset using our DMM. The first column is the input image and remaining columns are samples from the DMM.}
    \label{fig:imagenet_superres}
\end{figure}

We perform an additional experiment using discrete-space DMMs for a large-scale image inverse problem on the ImageNet dataset \citepappdx{russakovsky2015imagenet}. We train a DMM using CTMC noising and generative processes to perform 4-fold image super-resolution.

Each input image has $64 \times 64$ pixels and three RGB colour channels, and we aim to output images at the higher resolution of $256 \times 256$ pixels which are consistent with the input images. Our state space $\X = \{0,\dots,255\}^{3 \times 256 \times 256}$.

The noising process, reverse process parameterisation, and neural network design are the same as in Section \ref{app:mnistexperiment}, but we use a larger neural network for this task. As the starting point of our network optimisation, we utilise the pretrained network weights for continuous diffusions by \citeappdx{dhariwal2021diffusion}, but we retrain the network for our discrete-space DMM using the objective in Example \ref{ex:CTMCobjective}. The total number of neural network parameters is approximately 311.8M. We train the network using the Adam optimiser with batch size 4 and learning rate $2\times{10}^{-5}$ for an additional 200000 iterations. For sampling, we use tau-leaping with 1000 steps.

We plot the simulated super-resolution samples in Fig.\ \ref{fig:imagenet_superres} for a number of low-resolution images generated from the ImageNet validation dataset. As shown in the images, the discrete diffusion model outputs different super-resolution samples that are realistic to the eye, and coherent with the low-resolution images, demonstrating that DMMs can continue to provide high-quality posterior samples even in very high-dimensional scenarios situations where the prior $p_\data(\x)$ is unavailable and standard ABC or MCMC techniques are not available.

\subsection{Modelling distributions on $SO(3)$ using manifold diffusions}
Recall that our noising process on $SO(3)$ is Brownian motion with generator $\L = \del_t + \frac{1}{2}\Delta$. Since $SO(3)$ is compact, this converges to the uniform measure for large times; see e.g. \cite{debortoli2022riemannian}. For this process, the transition probabilities can be explicitly written as
\begin{equation}
\label{eq:analytictransitionSO3}
    q_{t|0}(\x_t| \x_0) \propto \sum_{\ell=0}^\infty (2\ell+1)e^{-\ell(\ell+1)t/2} \frac{\sin\lr{\lr{\ell + \frac{1}{2}}\alpha}}{\sin(\alpha/2)},
\end{equation}
where $\alpha = \arccos \left[2^{-1}(\Tr(\x_0^T \x_t) - 1)\right]$ is the angle between $\x_t$ and $\x_0$, and $\x_t, \x_0 \in SO(3)$ are in matrix form. For completeness, we provide the derivation of this result below in Section \ref{app:so3transitionderivation}.

Given this expression, to sample from $q_{t|0}(\x_t| \x_0)$, we follow \citeappdx{leach2022denoising} and first sample the rotation axis $v$ uniformly from the sphere $S^2 \subset \R^3$. Then, we sample the rotation angle $\alpha \in [0,\pi]$ using inverse transform sampling from the distribution
\begin{equation*}
    f_t(\alpha) = \frac{1 - \cos (\alpha)}{\pi} \sum_{\ell=0}^\infty (2\ell+1)e^{-\ell(\ell+1)t/2} \frac{\sin\lr{\lr{\ell + \frac{1}{2}}\alpha}}{\sin(\alpha/2)},
\end{equation*}
where the normalising factor $(1 - \cos(\alpha))/\pi$ is the measure on rotation angles induced by the uniform measure on $SO(3)$. For larger $t$, we find that the above series converges quickly and evaluating summation terms up to $l=5$ gives an accurate approximation. For $t < 1$, the above series converges slowly, and so we use the approximation
\begin{equation*}
    f_t(\alpha) \approx \frac{1 - \cos (\alpha)}{2 \sqrt{\pi} \sin (\alpha / 2)} \lr{\frac{t}{2}}^{-\frac{3}{2}} e^{\frac{t}{8}-\frac{\alpha^2}{2t}} \lrb{\alpha - e^{-\frac{2\pi^2}{t}}\lr{(\alpha - 2 \pi)e^{ \frac{2\pi \alpha}{t}}
    + (\alpha + 2 \pi)e^{-\frac{2 \pi \alpha}{t}}}}
\end{equation*}
from \citeappdx{leach2022denoising} instead. From the angle $\alpha$ and the axis $v = (x, y, z)$, we define the skew symmetric matrix $V$ associated to $v$ to be
\begin{equation*}
    V = \begin{pmatrix} 0 & z & -y \\ -z & 0 & x \\ y & -x & 0 \end{pmatrix}
\end{equation*}
and calculate the corresponding rotation matrix using Rodrigues' formula
\begin{equation*}
    R = I + \sin(\alpha) V + (1 - \cos(\alpha)) V^2.
\end{equation*}
Finally, we set $\x_t = R \x_0$. In this way, we can directly sample from the noising process at time $t$.

The reverse process is generated by $\K = \partial_t + s_\theta(\x, t) \cdot \nabla + \frac{1}{2}\Delta$ by Example \ref{ex:manifolddiffusion}, and the score network is parameterised as $s_\theta(\x, t)=\sum_{i=1}^3 s_\theta^i(\x, t) E_i(\x)$, using a basis $\{E_i\}_{i=1}^3$ of the tangent bundle.

We use the denoising score matching objective $\I_{\textup{DSM}}(\theta)$ to learn $\theta$ (see Section \ref{app:riemannianmanifolds}). To compute the score $\nabla \log q_{t|0}(\x_t | \x_0)$, we use automatic differentiation on Equation (\ref{eq:analytictransitionSO3}), where $\x_t, \x_0 \in \R^{3 \times 3}$ are represented in matrix form, followed by projection to the tangent space at $\x_t$. For small times, we find this can be numerically unstable, and so we use Varadhan's approximation
\begin{equation*}
    \lim_{t \rightarrow 0} \; t \nabla \log q_{t| 0}(\x_t| \x_0) =  \exp_{\x_t}^{-1}(\x_0)
\end{equation*}
for the heat kernel $q_{t|0}(\x_t | \x_0)$ at small times instead \citepappdx{debortoli2022riemannian}. 

Once we have learned the score network, we generate approximate samples from the reverse process using the Geodesic Random Walk method of \citeappdx{debortoli2022riemannian}, which corresponds to performing an Euler-Maruyama discretisation, taking Gaussian steps in the tangent space and then projecting back to the manifold using the exponential map.

\subsubsection{Derivation of analytic transition probabilities}
\label{app:so3transitionderivation}

First, we calculate the metric tensor using the quaternion chart on $SO(3)$, where the unit quaternion $w + x\i + y\j + z\k$ represents a rotation by an angle $\alpha = 2 \cos^{-1}(w)$ about the axis $(x,y,z)$, and we consider the coordinates $(x,y,z)$ to be our local chart. If $r = w + x\i + y\j + z\k$, we find the metric at $r$ by considering two small displacements $r + \d r$ and $r + \d r'$, rotating $r$ back to the identity, and then using the  fact that near the identity the metric is given by $4 \d x^2 + 4 \d y^2 + 4 \d z^2$ (where the scaling is chosen to correspond to the definition of the exponential map used by \citeappdx{debortoli2022riemannian} and \citeappdx{leach2022denoising}). Writing
\begin{align*}
    r + \d r & = (w + \d w) + (x + \d x)\i + (y + \d y)\j + (z + \d z)\k, \\
    r + \d r' & = (w + \d w') + (x + \d x')\i + (y + \d y')\j + (z + \d z')\k,
\end{align*}
where we have $w \d w + x \d x + y \d y + z \d z = 0$ and $w \d w' + x \d x' + y \d y' + z \d z' = 0$, and noting that composition of rotations corresponds to multiplication in the quaternion algebra, we have
\begin{align*}
    r^{-1} (r + \d r) & = \lr{w - x\i - y\j - z\k} \lr{(w + \d w) + (x + \d x)\i + (y + \d y)\j + (z + \d z)\k} \\
    & = 1 + \lr{- x \d w + w \d x - y \d z + z \d y} \i + \lr{- y \d w + w \d y - z \d x + x \d z} \j \\
    & \hspace{5mm} + \lr{- z \d w + w \d z - x \d y + y \d x} \k
\end{align*}
and similarly for $r^{-1} (r + \d r')$. Therefore, the metric is expressed by
\begin{equation*}
    4 \lrcb{ \lr{w + \frac{x^2}{w}} \d x + \lr{- y + \frac{xz}{w}} \d z + \lr{z + \frac{xy}{w}} \d y}^2 + \text{\textit{cyclic terms}}.
\end{equation*}
Multiplying out, collecting like terms and inspecting the coefficients of $\d x^2$, $\d x \d y$ etc., we see that
\begin{equation*}
    g_{ij} = \frac{4}{w^2} \begin{pmatrix}w^2 + x^2 & xy & xz \\ xy & w^2 + y^2 & yz \\ xz & yz & w^2 + z^2\end{pmatrix}
\end{equation*}
and we can calculate $|g| = 1/w^2$. Inverting the metric, we get
\begin{equation*}
    g^{ij} = \frac{1}{4} \begin{pmatrix} (1-x^2) & - xy & - xz \\ - xy & (1-y^2) & - yz \\ - xz & - yz & (1-z^2) \end{pmatrix}.
\end{equation*}
Now, we want to switch to using $w$ as a coordinate, and to find expressions for $\Delta f$ where $f(w)$ is a function only of $w$. To this end, we have
\begin{align*}
    \nabla f & = \frac{\del f}{\del w} \d w = - \frac{1}{w} \frac{\del f}{\del w} \lr{x \d x + y \d y + z \d z}, \\
    g^{ij} (\nabla_f)_j & = - \frac{1}{4w} \frac{\del f}{\del w} \begin{pmatrix} (1-x^2) & - xy & - xz \\ - xy & (1-y^2) & - yz \\ - xz & - yz & (1-z^2) \end{pmatrix} \begin{pmatrix} x \\ y \\ z \end{pmatrix} = - \frac{w}{4} \frac{\del f}{\del w} \begin{pmatrix} x \\ y \\ z \end{pmatrix},
\end{align*}
so
\begin{equation*}
     \Delta f = w \; \del_i\lr{\frac{1}{w} g^{ij} (\nabla f)_j} = - \frac{3w}{4} \frac{\del f}{\del w} + \frac{1 - w^2}{4}\frac{\del^2 f}{\del w^2}.
\end{equation*}

If we make the substitution $w = \cos (\alpha / 2)$, where $\alpha$ is the angle of the corresponding rotation, then $\d w = - \frac{1}{2} \sin (\alpha /2) \d \alpha$, and we get
\begin{equation*}
    \Delta f = \cot(\alpha/2) \frac{\del f}{\del \alpha} + \frac{\del^2 f}{\del \alpha^2}.
\end{equation*}
To find the transition probabilities, we must solve the Fokker--Planck equation
\begin{equation*}
    \frac{\del q}{\del t} = \frac{1}{2} \Delta q
\end{equation*}
on $SO(3)$, subject to the initial condition of a delta mass at $I$. By symmetry, we know the solution will be rotationally symmetric, so we can write the solution as $q(\alpha, t)$. Now, we look for separable solutions of the form $q(\alpha, t) = T(t)A(\alpha)$. We see that we must have
\begin{equation*}
    \frac{1}{T} \frac{\d T}{\d t} = \frac{1}{2A} \lr{\cot(\alpha/2) \frac{\d A}{\d \alpha} + \frac{\d^2 A}{\d \alpha^2}}.
\end{equation*}
Separating the two equations, we see that we require
\begin{equation*}
    \frac{\d T}{\d t} = \frac{1}{2} \lambda T, \hspace{10mm} \cot(\alpha/2) \frac{\d A}{\d \alpha} + \frac{\d^2 A}{\d \alpha^2} = \lambda A,
\end{equation*}
for some fixed $\lambda$. The first equation has solution $T(t) = e^{\lambda t/2}$, while a solution to the second is given by
\begin{equation*}
    A(\alpha) = \frac{\sin\lr{\lr{\mu + \frac{1}{2}}\alpha}}{\sin(\alpha/2)},
\end{equation*}
where $\mu$ satisfies $-\mu(\mu+1) = \lambda$. In addition, the boundary conditions force $\mu$ to be an integer. Combining these expressions, we see that the solution is of the form
\begin{equation*}
    q(\alpha, t) = \sum_{\ell=0}^\infty \beta_\ell e^{-\ell(\ell+1)t / 2} \frac{\sin\lr{\lr{\ell + \frac{1}{2}}\alpha}}{\sin(\alpha/2)}
\end{equation*}
for some coefficients $\beta_\ell$. Finally, we have the initial condition that $q(\alpha,0) = 0$ for $\alpha > 0$ and $\int_{SO(3)} q(\x, 0) f(\x) \d \nu (\x) = f(I)$ where $\nu$ is the uniform probability measure on $SO(3)$. Up to a scaling factor, this is satisfied if and only if $\beta_{\ell} \propto (2\ell + 1)$. Putting this all together, we obtain Equation (\ref{eq:analytictransitionSO3}).

\subsection{Mixture of wrapped normal distributions on $SO(3)$}
\label{app:wrappednormalexperiment}

We consider modelling a mixture of wrapped normal distributions on $SO(3)$. The wrapped normal distribution $\mathcal{N}^W (\x \;|\; \mu, \sigma^2)$ with mean $\mu$ and variance $\sigma^2$ is defined here as the transformed distribution via sampling $\w \sim \mathcal{N} (\w \;|\; 0, \sigma^2)$, where $\w \in \R^{3\times3}$, from the standard normal distribution with variance $\sigma^2$, projecting $\w$ onto the tangent space via $\mb{v} = \frac{\w - \w^T}{2}$, then applying the exponential map $\x=\exp_{\mu}(\mb{v})$ at $\mu$. While we could apply standard parametric learning methods which involve learning of $\{\mu_m, \sigma_m \}$ directly, we do not rely on the specific form of the data distribution $p_{\data}$, which allows us to model different distributions flexibly. We consider modelling of a mixture of wrapped normal distributions with $M=16$ mixtures.

We apply a time-rescaling for the noising process, which is given by $\L = \partial_t + \frac{1}{2} \beta(t) \Delta$ with the linear $\beta$ schedule given in Equation (\ref{eq:betaschedule}). Then, the reverse process is generated by $\K = \partial_t + \beta(t) s_\theta(\x, t) \cdot \nabla + \frac{1}{2}\beta(t) \Delta$. We use an MLP with 5 layers and 512 hidden units in each layer to output a vector of dimension 3 parameterising $\{ s_\theta^i(\x, t) \}_{i=1}^3$. 
We train the network using the Adam optimiser with batch size 512 and learning rate 0.0002 with a cosine annealing schedule for 100000 iterations.

We learn both the unconditional distribution $p_{\data}(\x)$ and the conditional distribution $p_{\data}(\x | m)$ when conditioned on the cluster member $m$. In the conditional case, we learn a conditional score model $s_\theta(\x,m,t)$ under the same settings.

Fig.\ \ref{fig:conditional_so3} shows the results from our conditional model for $p_\data(\x | m)$, where we compare the unwrapped distributions in the tangent space between the ground truth normal distribution and the modelled distribution of mixture member $m=1$, and plot a representative sample from our conditional model. We see that our model targets the correct mixture accurately. Our visualisations of distributions on $SO(3)$ are adapted from \cite{murphy2021implicit}.

\begin{figure}
    \centering
    \raisebox{-0.5\height}{\includegraphics[width=0.45\textwidth]{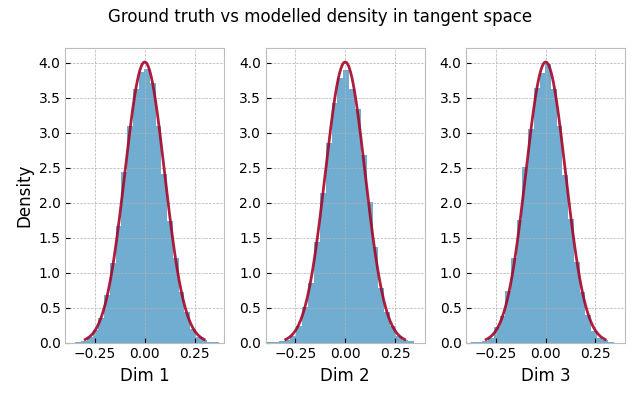}} \hspace{0.5cm}
    \raisebox{-0.5\height}{\includegraphics[width=0.43\textwidth]{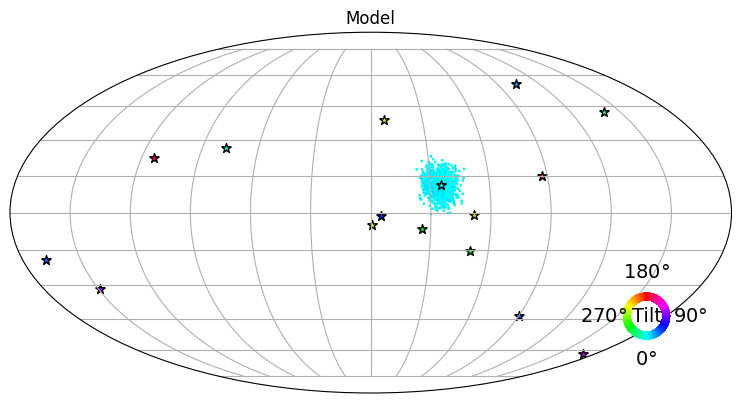}}
    \caption{(Left) Histogram of samples from our model conditioned on the mixture member $m=1$ compared to the ground truth normal density, represented in the tangent space of $SO(3)$. (Right) Conditional samples from the model for $m=1$. The axis of rotation and rotation angle are represented by position and colour respectively.}
    \label{fig:conditional_so3}
\end{figure}

We compare our method to the method of \citecont{debortoli2022riemannian}, in which the denoising diffusion model for this task is trained by simulating the forward process using the Geodesic Random Walk and using the DSM loss with Varadhan's approximation, rather than using the analytic transition densities given in Appendix \ref{app:so3transitionderivation} as we do. We compare the two methods using the learned models' test-set log-likelihood, calculated using the probability flow ODE as in \citecont{debortoli2022riemannian}, as well as the average time per training iteration. Our results are shown in Table \ref{tab:mix_so3}. We see that both methods achieve comparable log-likelihoods, but our method is about 15\% more efficient during training since having the analytic transition densities means that we can simulate the forward noising process in a single step. 

\begin{table}
\caption{\label{tab:mix_so3}Test set log-likelihood and time per training iteration for denoising models on $SO(3)$. Mean and standard deviation reported over 5 seeds.}
\centering
\fbox{%
\begin{tabular}{|c|c|c|c|c|}
\hline 
 & & & & Time per\\
 & $M=16$ & $M=32$ & $M=64$ & \;\; iteration (ms) \; \; \\
\hline 
\hline 
\citecont{debortoli2022riemannian} & 0.864±0.026 & 0.174±0.025 & -0.516±0.016 & 55.18±2.783\\
\hline 
Analytic (ours) & 0.872±0.026 & 0.175±0.025 & -0.515±0.016 & 47.23±2.134\\
\hline 
\end{tabular}}
\end{table}

\subsection{Pose estimation on the SYMSOL dataset}
\label{app:symsolexperiment}

We give details for the pose estimation task on the SYMSOL dataset. We use a similar network design for the conditional score $s_\theta(\x_t,\xiobs, t)$ as \citeappdx{murphy2021implicit}, composed of a vision recognition model for processing the input images $\xiobs$, and an MLP for outputting the score. For the vision recognition model, we utilise pretrained ResNet-50 backbone without the final fully-connected classification layer, which outputs a 2048-dimensional embedding. We next get sinusoidal positional embeddings of $\x_t$ and $t$, use linear layers to transform all embeddings into 256 dimensions and take the summed embedding. This also allows efficient computations of embeddings with a single $\xiobs$ and multiple values of $(\x_t, t)$ as the computationally expensive forward pass through the vision recognition model only needs to be taken once. Thus, we simulate a small number of $(\x_t,t)$ pairs given each pair $(\x_0,\xiobs)$ at each step for more efficient training. We finally pass the embedding into an MLP with 3 layers and 256 hidden units in each layer. 

Compared to the Implicit-PDF methodology by \citeappdx{murphy2021implicit}, which maintains a grid on $SO(3)$ and approximates the density pointwise, our DMM model directly learns a sampling method and does not require maintaining a grid. Therefore, our method is more general and not specific to $SO(3)$. For our implementation, we modify their network structure to take in the time $t$, and output the score parameterisation of dimension 3 as opposed to the unnormalised log density of dimension 1. We optimise the network using the Adam optimiser with batch size 128 and learning rate 0.0001 with a cosine annealing schedule for 100000 iterations.

We include further visualisations of the generated samples when conditioned on 2D views of different shapes in Fig.\ \ref{app:symsolresults1}. As shown in the plots, the samples generated using DMM are all close to the ground truth and cover all modes of the class of rotational symmetries. 

% \begin{figure}
%     \centering
%     \hspace*{-0.1cm}\raisebox{-0.5\height}{\includegraphics[width=1.5cm]{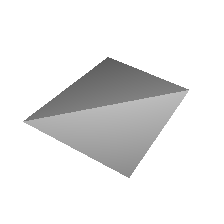}} 
%     \raisebox{-0.5\height}{\includegraphics[trim={2.5cm 0 0 0}, clip, height=3.25cm]{Plots/SO3Pose/tet_2_samples_1.png}}
%     \raisebox{-0.5\height}{\includegraphics[width=1.5cm]{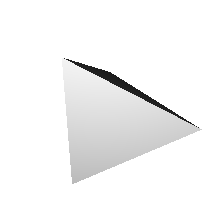}}
%     \raisebox{-0.5\height}{\includegraphics[trim={2.5cm 0 0 0}, clip, height=3.25cm]{Plots/SO3Pose/tet_3_samples_1.png}} \\ 
%     \hspace*{-0.1cm}\raisebox{-0.5\height}{\includegraphics[width=1.5cm]{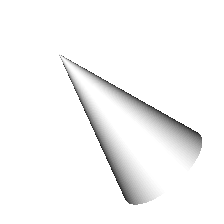}} 
%     \raisebox{-0.5\height}{\includegraphics[trim={2.5cm 0 0 0}, clip, height=3.25cm]{Plots/SO3Pose/cone_2_samples_1.png}}
%     \raisebox{-0.5\height}{\includegraphics[width=1.5cm]{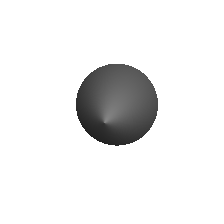}}
%     \raisebox{-0.5\height}{\includegraphics[trim={2.5cm 0 0 0}, clip, height=3.25cm]{Plots/SO3Pose/cone_3_samples_1.png}} 
%     \caption{Sets of samples from our pose estimation DMM (right) conditioned on two 2D views (left). The axis of rotation and rotation angle are represented by position and colour respectively. The ground truth rotations are encircled using solid lines. }
%     \label{app:symsolresults1}
% \end{figure}

\begin{figure}
    \centering
    \centering
    \hspace*{-0.1cm}\raisebox{-0.5\height}{\includegraphics[width=1.5cm]{Plots/SO3Pose/tet_2_conditioner.png}}
    \raisebox{-0.5\height}{\includegraphics[trim={2.5cm 0 0 0}, clip, width=12.5cm]{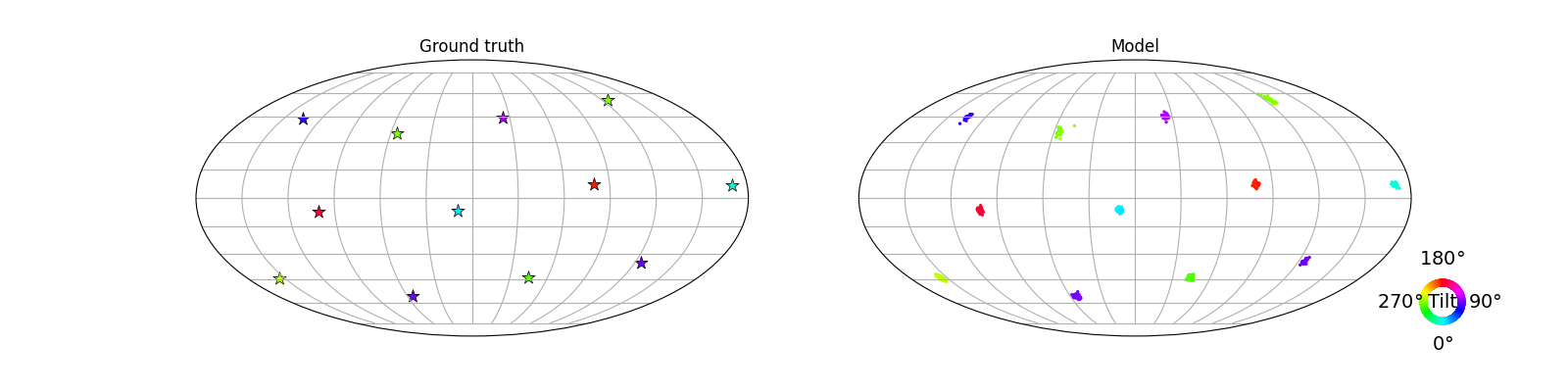}} \\ 
    \hspace*{-0.1cm}\raisebox{-0.5\height}{\includegraphics[width=1.5cm]{Plots/SO3Pose/tet_3_conditioner.png}}
    \raisebox{-0.5\height}{\includegraphics[trim={2.5cm 0 0 0}, clip, width=12.5cm]{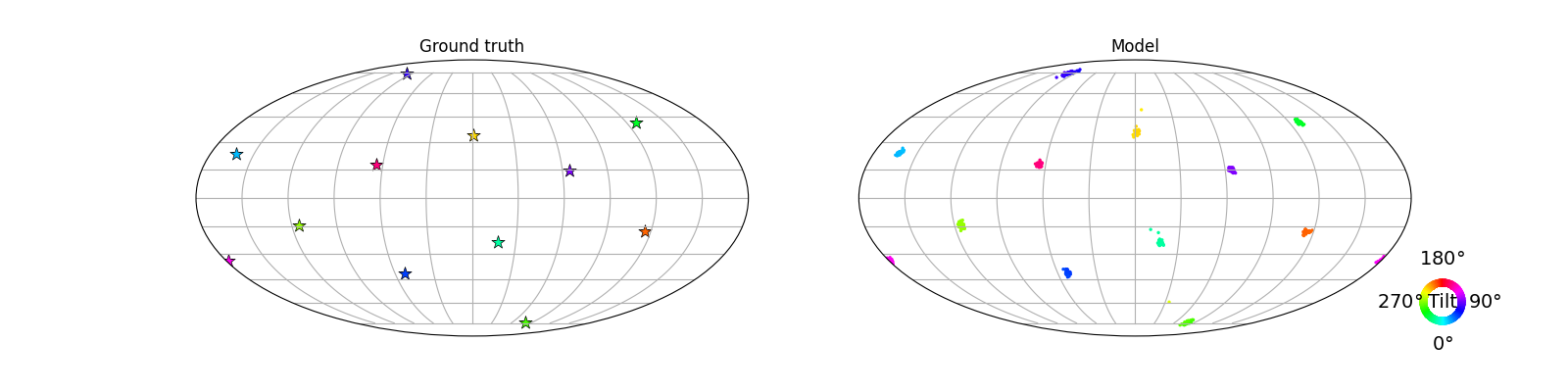}} \\ \vspace{10pt}
    \hspace*{-0.1cm}\raisebox{-0.5\height}{\includegraphics[width=1.5cm]{Plots/SO3Pose/cone_2_conditioner.png}}
    \raisebox{-0.5\height}{\includegraphics[trim={2.5cm 0 0 0}, clip, width=12.5cm]{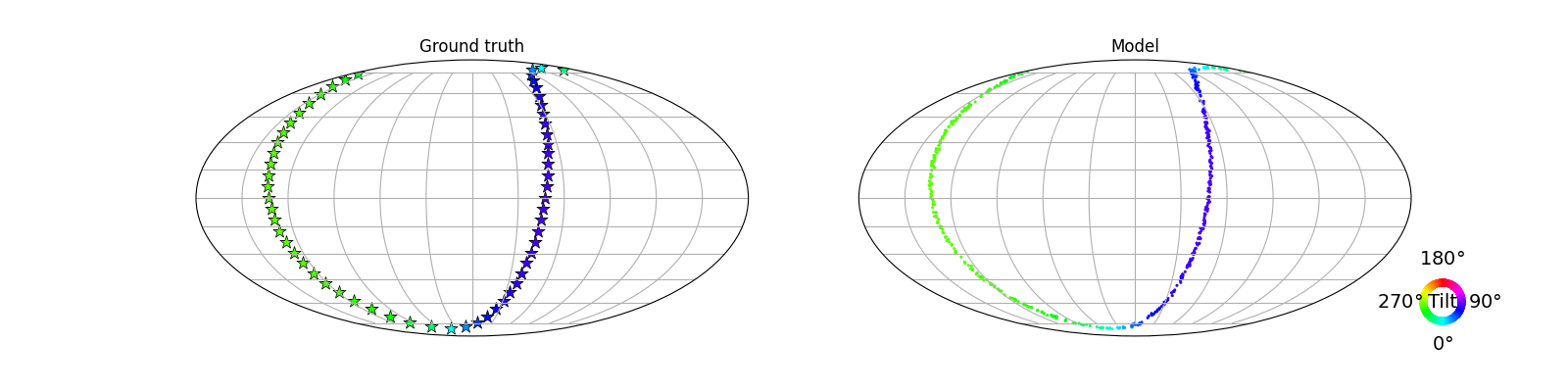}} \\
    \hspace*{-0.1cm}\raisebox{-0.5\height}{\includegraphics[width=1.5cm]{Plots/SO3Pose/cone_3_conditioner.png}}
    \raisebox{-0.5\height}{\includegraphics[trim={2.5cm 0 0 0}, clip, width=12.5cm]{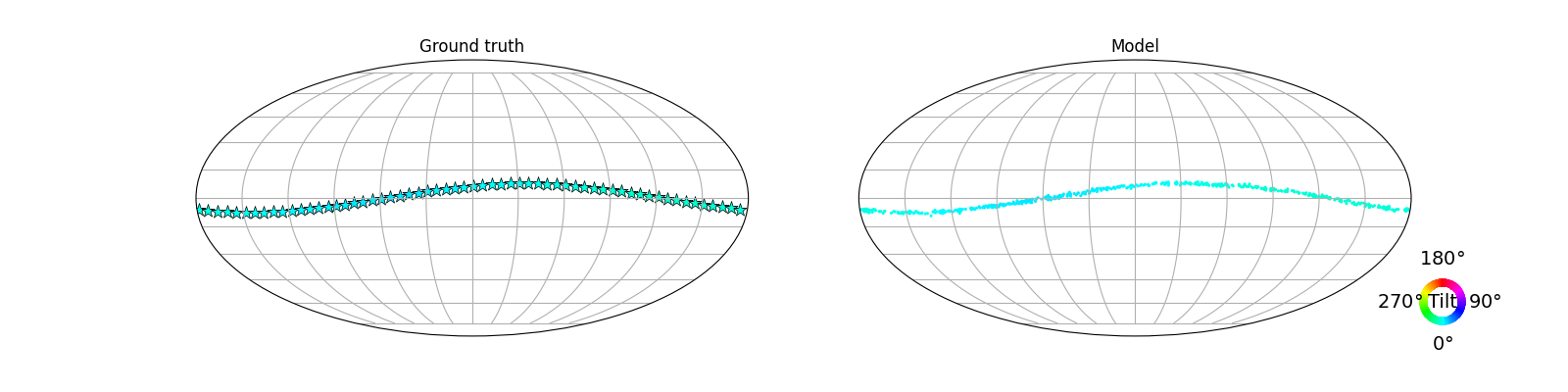}} \\
    \caption{Samples from the ground truth (plotted as stars, middle) and our pose estimation DMM (right) conditioned on 2D views of shapes (left). The axis of rotation and rotation angle are represented by position and colour respectively.}
    \label{app:symsolresults1}
\end{figure}

% \begin{figure}
%     \centering
%     \raisebox{-0.5\height}{\includegraphics[width=2.5cm]{Plots/SO3Pose/cube_2_conditioner.png}} 
%     \raisebox{-0.5\height}{\includegraphics[trim={0 0 0 0}, clip, height=7.5cm]{Plots/SO3Pose/cube_2_samples_3.png}}\\ 
%     \raisebox{-0.5\height}{\includegraphics[width=2.5cm]{Plots/SO3Pose/cyl_2_conditioner.png}} 
%     \raisebox{-0.5\height}{\includegraphics[trim={0 0 0 0}, clip, height=7.5cm]{Plots/SO3Pose/cyl_2_samples_3.png}}
%     \caption{Samples from the ground truth and our pose estimation DMM (right) conditioned on 2D views of two shapes (left). Each rotation matrix $\x \in SO(3)$ is represented by three coloured points, which denote the locations of unit vectors in the $x,y,z$ directions (plotted as arrows) after applying the rotation matrix $\x$. }
%     \label{app:symsolresults2}
% \end{figure}

\subsection{Approximation of distributions over measures using Wright--Fisher diffusions}
\label{app:wrightfisherexperiments}

Finally, we evaluate the Wright--Fisher diffusion framework from Appendix \ref{app:wrightfisher} for modeling distributions over measures on a finite state space. We test our framework by attempting to model mixtures of Dirichlet distributions $p_{\data}(\x) = \frac{1}{M} \sum_{m=1}^{M} \textup{Dirichlet}(\alpha_m)$ with parameters $\alpha_m\in\R^N$. We consider $M=4$ mixtures and vary the number of dimensions $N$ of the simplex.

As in Appendix \ref{app:wrightfisher}, we use a Wright--Fisher diffusion with $q_{ij} = \vartheta_j$ for all $i \neq j$ as our noising process, and set $\vartheta_j = 3$ for all $j = 1, \dots, N$. We also apply a time rescaling to the forward process as in Equation (\ref{eq:betaschedule}). We set $\beta_{\textup{min}}=0.001$ and $\beta_{\textup{max}}$ is selected using a grid search from $0.5,1,2$. We simulate the forward diffusion process using the exact simulation algorithm of \citeappdx{jenkins2017exact}, which exploits the eigenfunction decomposition of the Wright--Fisher process transition function given in Equation (\ref{eq:WFeigenfunction}) and works by sampling from the ancestral process $A^\Theta_\infty(t)$ whose distribution is determined by the functions $\{d^\Theta_n(t):n=0,1,\dots\}$. For very small times $t$, we also use a normal approximation for simulating $A^\Theta_\infty(t)$. For more details, we refer the reader to \cite{jenkins2017exact}.

We learn the score network with the parameterisation $s_\theta^i(\p, t) = p_i \del (\log \beta(\p, t)) / \del p_i$ using the implicit score matching loss (\ref{eq:WF_ism}). We parameterise $s_\theta(\p, t)$ using an MLP with 4 layers and 512 hidden units in each layer to output a vector of dimension $N$. We train the network using the Adam optimiser with batch size 128 and learning rate 0.0001 with a cosine annealing schedule for 100000 iterations.

We visualise the results of this experiment in Fig.\ \ref{fig:mix_dir} for a 3-dimensional example. As can be seen, the DMM model is able to learn the ground truth distribution very accurately. We also report in Table \ref{tab:mix_dir_elbo} the ground truth log-likelihood of the data distribution $p_{\data}(\x)$ and the ELBO of the DMM model given by (\ref{eq:tractable_ELBO}) using the ISM loss, as the number of dimensions $N$ increases. We observe that the model's ELBO is consistently close to the true data log-likelihood, which demonstrates the scalability of the DMM model. 

% To include:
% \begin{itemize}
%     \item Detail of parameterization used: I used the score network to parameterize $\log \beta \cdot \p$, which seems to be more numerically stable (if we start at the stationary distribution, then this should be of constant scale). We use an MLP with 4 layers and 512 hidden units in each layer to output a vector of dimension $N$. Similar time-rescaling given in Equation (\ref{eq:betaschedule}). We train the network using the Adam optimiser with batch size 128 and learning rate 0.0001 with a cosine annealing schedule for 100000 iterations.
%     \item Details of forward sampling: I used \citeappdx{jenkins2017exact}. In more detail, this is an exact simulation method for the forward transition of Fisher-Wright diffusions by exactly sampling from the ancestral process $A_\infty(t)$ associated with $d^\Theta_n(t)$. For very small time, a normal approximation given by Theorem 1 is used for simulating $A_\infty(t)$. 
%     \item $p_{\data}(\x) = \frac{1}{M} \sum_{m=1}^{M} \textup{Dirichlet}(\alpha_m)$ is a mixture of Dirichlet distributions. Each entry of $\alpha_m$ is independently generated from a Gamma distribution. We consider $M=4$ mixtures. 
%     \item I tested with $\theta=2,3$ for the stationary distribution. Convergence visualisation of the forward process: Fig.\ \ref{fig:mix_dir} (top left)
%     \item Ground truth vs model visualisation in low dimensions: Fig.\ \ref{fig:mix_dir} (bottom left, right)
%     \item Log-likelihood values and scaling with dimension as a table: Table \ref{tab:mix_dir_elbo}
% \end{itemize}

\begin{table}
  \caption{\label{tab:mix_dir_elbo}True test data log-likelihood compared to the DMM model ELBO as given by (\ref{eq:tractable_ELBO}) using the ISM loss for the mixture of Dirichlet example. Mean and standard deviation reported over 5 seeds.}
  \fbox{%
    \begin{tabular}{c|c|c|c|c}
    \hline 
    Dimension of simplex & $N=3$ & $N=5$ & $N=10$ & $N=20$\\
    \hline 
    Data & 1.321±0.340 & 4.122±0.242 & 15.288±0.389 & 45.914±0.694\\
    \hline 
    Model & 1.158±0.160 & 4.017±0.208    & 15.061±0.428 & 45.494±0.698\\
    \hline 
    \end{tabular}}
\end{table}

% \begin{figure}
%     \centering
%     {\begin{tabular}{c}
%          \includegraphics[trim={0 575 0 575}, clip, width=0.45\textwidth]{Plots/Simplex/xtf_tern.png} \\
%          \includegraphics[trim={0 575 0 575}, clip, width=0.45\textwidth]{Plots/Simplex/xt0_tern.png}
%     \end{tabular}} \hspace{0.5cm} 
%     \raisebox{-0.5\height}{\includegraphics[width=0.45\textwidth]{Plots/Simplex/xt0_histogram.png}}
%     \caption{(Top left) Histogram of samples $Y_T \sim q_T(\x)$ from the noising simplex diffusion model compared to its ground truth stationary distribution $q_{\textup{ref}}(\x)$. (Bottom left) Histogram of samples $X_T \sim p_T(\x)$ from the reverse simplex diffusion model compared to the ground truth mixture of Dirichlet distribution $p_{\data}(\x)$. (Right) Histogram of samples $X_T \sim p_T(\x)$ from the reverse simplex diffusion compared to the ground truth distribution $p_{\data}(\x)$, represented by their marginal distributions on each dimension. }
%     \label{fig:mix_dir}
% \end{figure}

% \subsection{Simplex Diffusion - Compositional Data}

\newpage

\bibliographystyleappdx{chicago}
\bibliographyappdx{appdx}


\begin{thebibliography}{}

\bibitem[\protect\citeauthoryear{Cattiaux, Conforti, Gentil, and
  L{\'e}onard}{Cattiaux et~al.}{2023}]{cattiaux2021time}
Cattiaux, P., G.~Conforti, I.~Gentil, and C.~L{\'e}onard (2023).
\newblock Time reversal of diffusion processes under a finite entropy
  condition.
\newblock {\em Annales de l'Institut Henri Poincar{\'e} (B) Probabilit{\'e}s et
  Statistiques\/}~{\em 59\/}(4), 1844--1881.

\bibitem[\protect\citeauthoryear{Dong}{Dong}{2003}]{dong2003feller}
Dong, R. (2003).
\newblock {Feller Processes and Semigroups}.
\newblock {\em Lecture notes, UC Berkeley,
  https://www.stat.berkeley.edu/\raisebox{0.5ex}{\texttildelow}pitman/s205s03/lecture27.pdf\/}.

\bibitem[\protect\citeauthoryear{Durkan, Bekasov, Murray, and
  Papamakarios}{Durkan et~al.}{2019}]{durkan2019neural}
Durkan, C., A.~Bekasov, I.~Murray, and G.~Papamakarios (2019).
\newblock {Neural Spline Flows}.
\newblock {\em NeurIPS\/}.

\bibitem[\protect\citeauthoryear{Durkan, Murray, and Papamakarios}{Durkan
  et~al.}{2020}]{durkan2020contrastive}
Durkan, C., I.~Murray, and G.~Papamakarios (2020).
\newblock {On Contrastive Learning for Likelihood-free Inference}.
\newblock {\em ICML\/}.

\bibitem[\protect\citeauthoryear{Ethier and Kurtz}{Ethier and
  Kurtz}{1993}]{ethier1993fleming}
Ethier, S.~N. and T.~G. Kurtz (1993).
\newblock {Fleming--Viot Processes in Population Genetics}.
\newblock {\em SIAM Journal on Control and Optimization\/}~{\em 31}, 345--386.

\bibitem[\protect\citeauthoryear{Fearnhead and Prangle}{Fearnhead and
  Prangle}{2012}]{fearnhead2012constructing}
Fearnhead, P. and D.~Prangle (2012).
\newblock {Constructing Summary Statistics for Approximate Bayesian
  Computation: Semi-automatic Approximate Bayesian Computation}.
\newblock {\em Journal of the Royal Statistical Society: Series B (Statistical
  Methodology)\/}~{\em 74\/}(3), 419--474.

\bibitem[\protect\citeauthoryear{Greenberg, Nonnenmacher, and Macke}{Greenberg
  et~al.}{2019}]{greenberg2019automatic}
Greenberg, D.~S., M.~Nonnenmacher, and J.~H. Macke (2019).
\newblock {Automatic Posterior Transformation for Likelihood-Free Inference}.
\newblock {\em ICML\/}.

\bibitem[\protect\citeauthoryear{Jenkins and Span{\`{o}}}{Jenkins and
  Span{\`{o}}}{2017}]{jenkins2017exact}
Jenkins, P.~A. and D.~Span{\`{o}} (2017).
\newblock {Exact Simulation of the Wright{\textendash}Fisher Diffusion}.
\newblock {\em The Annals of Applied Probability\/}~{\em 27\/}(3).

\bibitem[\protect\citeauthoryear{Karatzas and Shreve}{Karatzas and
  Shreve}{1991}]{karatzas1991brownian}
Karatzas, I. and S.~E. Shreve (1991).
\newblock {\em {Brownian Motion and Stochastic Calculus}}.
\newblock Springer Science {\&} Business Media.

\bibitem[\protect\citeauthoryear{Leach, Schmon, Degiacomi, and Willcocks}{Leach
  et~al.}{2022}]{leach2022denoising}
Leach, A., S.~M. Schmon, M.~T. Degiacomi, and C.~G. Willcocks (2022).
\newblock {Denoising Diffusion Probabilistic Models on SO(3) for Rotational
  Alignment}.
\newblock {\em ICLR 2022 Workshop on Geometrical and Topological Representation
  Learning\/}.

\bibitem[\protect\citeauthoryear{LeCun, Cortes, and Burges}{LeCun
  et~al.}{2010}]{lecun2010mnist}
LeCun, Y., C.~Cortes, and C.~Burges (2010).
\newblock {MNIST handwritten digit database}.
\newblock {\em ATT Labs [Online]. Available:
  http://yann.lecun.com/exdb/mnist\/}.

\bibitem[\protect\citeauthoryear{Molchanov}{Molchanov}{1968}]{molchanov1968strong}
Molchanov, S.~A. (1968).
\newblock {Strong Feller Property of Diffusion Processes on Smooth Manifolds}.
\newblock {\em Theory of Probability \& Its Applications\/}~{\em 13}, 471--475.

\bibitem[\protect\citeauthoryear{Métivier}{Métivier}{1982}]{metivier1982semimartingales}
Métivier, M. (1982).
\newblock {\em {Semimartingales}}.
\newblock De Gruyter.

\bibitem[\protect\citeauthoryear{Palmowski and Rolski}{Palmowski and
  Rolski}{2002}]{palmowski2002technique}
Palmowski, Z. and T.~Rolski (2002).
\newblock {A Technique for Exponential Change of Measure for Markov Processes}.
\newblock {\em Bernoulli\/}~{\em 8}, 767--785.

\bibitem[\protect\citeauthoryear{Papamakarios, Sterratt, and
  Murray}{Papamakarios et~al.}{2019}]{papamakarios2019sequential}
Papamakarios, G., D.~C. Sterratt, and I.~Murray (2019).
\newblock {Sequential Neural Likelihood: Fast Likelihood-free Inference with
  Autoregressive Flows}.
\newblock {\em AISTATS\/}.

\bibitem[\protect\citeauthoryear{Prangle}{Prangle}{2020}]{prangle2020gk}
Prangle, D. (2020).
\newblock {gk: An R Package for the g-and-k and Generalised g-and-h
  Distributions}.
\newblock {\em {The R Journal}\/}~{\em 12\/}(1), 7--20.

\bibitem[\protect\citeauthoryear{Pulido}{Pulido}{2011}]{pulido2011semimartingales}
Pulido, S. (2011).
\newblock {Semimartingales and stochastic integration}.
\newblock {\em Lecture Notes, CMU,
  https://www.andrew.cmu.edu/user/calmost/pdfs/21-882-int\_lec.pdf\/}.

\bibitem[\protect\citeauthoryear{Russakovsky, Deng, Su, Krause, Satheesh, Ma,
  Huang, Karpathy, Khosla, Bernstein, Berg, and Fei-Fei}{Russakovsky
  et~al.}{2015}]{russakovsky2015imagenet}
Russakovsky, O., J.~Deng, H.~Su, J.~Krause, S.~Satheesh, S.~Ma, Z.~Huang,
  A.~Karpathy, A.~Khosla, M.~Bernstein, A.~C. Berg, and L.~Fei-Fei (2015).
\newblock {ImageNet Large Scale Visual Recognition Challenge}.
\newblock {\em International Journal of Computer Vision\/}~{\em 115\/}(3),
  211--252.

\bibitem[\protect\citeauthoryear{Schilling and Partzsch}{Schilling and
  Partzsch}{2012}]{schilling2012brownian}
Schilling, R.~L. and L.~Partzsch (2012).
\newblock {\em {Brownian Motion: An Introduction to Stochastic Processes}}.
\newblock De Gruyter.

\bibitem[\protect\citeauthoryear{Sohl-Dickstein, Battaglino, and
  Deweese}{Sohl-Dickstein et~al.}{2011}]{sohldickstein2011new}
Sohl-Dickstein, J., P.~B. Battaglino, and M.~R. Deweese (2011).
\newblock {New Method for Parameter Estimation in Probabilistic Models: Minimum
  Probability Flow}.
\newblock {\em Physical Review Letters\/}~{\em 107}.

\bibitem[\protect\citeauthoryear{Taylor}{Taylor}{2011}]{taylor2011partial}
Taylor, M.~E. (2011).
\newblock {\em {Partial Differential Equations I: Basic Theory}}.
\newblock Springer.

\bibitem[\protect\citeauthoryear{Tejero-Cantero, Boelts, Deistler, Lueckmann,
  Durkan, Gonçalves, Greenberg, and Macke}{Tejero-Cantero
  et~al.}{2020}]{tejero-cantero2020sbi}
Tejero-Cantero, A., J.~Boelts, M.~Deistler, J.-M. Lueckmann, C.~Durkan, P.~J.
  Gonçalves, D.~S. Greenberg, and J.~H. Macke (2020).
\newblock {sbi: A Toolkit for Simulation-based Inference}.
\newblock {\em Journal of Open Source Software\/}~{\em 5\/}(52), 2505.

\bibitem[\protect\citeauthoryear{Vincent, Larochelle, Bengio, and
  Manzagol}{Vincent et~al.}{2008}]{vincent2008extracting}
Vincent, P., H.~Larochelle, Y.~Bengio, and P.~A. Manzagol (2008).
\newblock {Extracting and Composing Robust Features with Denoising
  Autoencoders}.
\newblock {\em {ICML}\/}.

\bibitem[\protect\citeauthoryear{Yosida}{Yosida}{1965}]{yosida1965functional}
Yosida, K. (1965).
\newblock {\em {Functional Analysis}}.
\newblock Springer Science {\&} Business Media.

\end{thebibliography}


\begin{thebibliography}{}

\bibitem[\protect\citeauthoryear{Anderson}{Anderson}{1982}]{anderson1982reverse}
Anderson, B. D.~O. (1982).
\newblock {Reverse-time Diffusion Equation Models}.
\newblock {\em Stochastic Processes and their Applications\/}~{\em 12},
  313--326.

\bibitem[\protect\citeauthoryear{Austin, Johnson, Ho, Tarlow, and van~den
  Berg}{Austin et~al.}{2021}]{austin2021structured}
Austin, J., D.~D. Johnson, J.~Ho, D.~Tarlow, and R.~van~den Berg (2021).
\newblock {Structured Denoising Diffusion Models in Discrete State-Spaces}.
\newblock {\em NeurIPS\/}.

\bibitem[\protect\citeauthoryear{Bernton, Jacob, Gerber, and Robert}{Bernton
  et~al.}{2019}]{bernton2019approximate}
Bernton, E., P.~E. Jacob, M.~Gerber, and C.~P. Robert (2019).
\newblock {Approximate Bayesian Computation with the Wasserstein Distance}.
\newblock {\em Journal of the Royal Statistical Society: Series B (Statistical
  Methodology)\/}~{\em 81\/}(2), 235--269.

\bibitem[\protect\citeauthoryear{Brown, Mann, Ryder, Subbiah, Kaplan, Dhariwal,
  Neelakantan, Shyam, Sastry, Askell, et~al.}{Brown
  et~al.}{2020}]{brown2020language}
Brown, T., B.~Mann, N.~Ryder, M.~Subbiah, J.~D. Kaplan, P.~Dhariwal,
  A.~Neelakantan, P.~Shyam, G.~Sastry, A.~Askell, et~al. (2020).
\newblock {Language Models are Few-shot Learners}.
\newblock {\em NeurIPS\/}.

\bibitem[\protect\citeauthoryear{Campbell, Benton, De~Bortoli, Rainforth,
  Deligiannidis, and Doucet}{Campbell et~al.}{2022}]{campbell2022continuous}
Campbell, A., J.~Benton, V.~De~Bortoli, T.~Rainforth, G.~Deligiannidis, and
  A.~Doucet (2022).
\newblock {A Continuous Time Framework for Discrete Denoising Models}.
\newblock {\em NeurIPS\/}.

\bibitem[\protect\citeauthoryear{Chen, Chewi, Li, Li, Salim, and Zhang}{Chen
  et~al.}{2023}]{chen2022sampling}
Chen, S., S.~Chewi, J.~Li, Y.~Li, A.~Salim, and A.~R. Zhang (2023).
\newblock {Sampling is as Easy as Learning the Score: Theory for Diffusion
  Models with Minimal Data Assumptions}.
\newblock {\em ICLR\/}.

\bibitem[\protect\citeauthoryear{De~Bortoli}{De~Bortoli}{2023}]{debortoli2022convergence}
De~Bortoli, V. (2023).
\newblock {Convergence of Denoising Diffusion Models under the Manifold
  Hypothesis}.
\newblock {\em Transactions on Machine Learning Research\/}.

\bibitem[\protect\citeauthoryear{De~Bortoli, Mathieu, Hutchinson, Thornton,
  Teh, and Doucet}{De~Bortoli et~al.}{2022}]{debortoli2022riemannian}
De~Bortoli, V., E.~Mathieu, M.~Hutchinson, J.~Thornton, Y.~W. Teh, and
  A.~Doucet (2022).
\newblock {Riemannian Score-Based Generative Modeling}.
\newblock {\em NeurIPS\/}.

\bibitem[\protect\citeauthoryear{Dhariwal and Nichol}{Dhariwal and
  Nichol}{2021}]{dhariwal2021diffusion}
Dhariwal, P. and A.~Nichol (2021).
\newblock {Diffusion Models Beat GANs on Image Synthesis}.
\newblock {\em NeurIPS\/}.

\bibitem[\protect\citeauthoryear{Ethier and Griffiths}{Ethier and
  Griffiths}{1993}]{ethier1993transition}
Ethier, S.~N. and R.~C. Griffiths (1993).
\newblock {The Transition Function of a Fleming-Viot Process}.
\newblock {\em The Annals of Probability\/}~{\em 21}, 1571--1590.

\bibitem[\protect\citeauthoryear{Geffner, Papamakarios, and Mnih}{Geffner
  et~al.}{2023}]{geffner2023compositional}
Geffner, T., G.~Papamakarios, and A.~Mnih (2023).
\newblock {Compositional Score Modeling for Simulation-based Inference}.
\newblock {\em arXiv preprint arXiv:2209.14249\/}.

\bibitem[\protect\citeauthoryear{Goodfellow, Pouget-Abadie, Mirza, Xu,
  Warde-Farley, Ozair, Courville, and Bengio}{Goodfellow
  et~al.}{2014}]{goodfellow2014generative}
Goodfellow, I.~J., J.~Pouget-Abadie, M.~Mirza, B.~Xu, D.~Warde-Farley,
  S.~Ozair, A.~Courville, and Y.~Bengio (2014).
\newblock {Generative Adversarial Nets}.
\newblock {\em NeurIPS\/}.

\bibitem[\protect\citeauthoryear{Greenacre}{Greenacre}{2021}]{greenacre2021compositional}
Greenacre, M. (2021).
\newblock {Compositional Data Analysis}.
\newblock {\em Annual Review of Statistics and its Application\/}~{\em 8},
  271--299.

\bibitem[\protect\citeauthoryear{Gutmann and Hirayama}{Gutmann and
  Hirayama}{2011}]{gutmann2012bregman}
Gutmann, M.~U. and J.-i. Hirayama (2011).
\newblock {Bregman Divergence as General Framework to Estimate Unnormalized
  Statistical Models}.
\newblock {\em UAI\/}.

\bibitem[\protect\citeauthoryear{Ho, Jain, and Abbeel}{Ho
  et~al.}{2020}]{ho2020denoising}
Ho, J., A.~Jain, and P.~Abbeel (2020).
\newblock {Denoising Diffusion Probabilistic Models}.
\newblock {\em NeurIPS\/}.

\bibitem[\protect\citeauthoryear{Hoogeboom, Nielsen, Jaini, Forré, and
  Welling}{Hoogeboom et~al.}{2021}]{hoogeboom2021argmax}
Hoogeboom, E., D.~Nielsen, P.~Jaini, P.~Forré, and M.~Welling (2021).
\newblock {Argmax Flows and Multinomial Diffusion: Learning Categorical
  Distributions}.
\newblock {\em NeurIPS\/}.

\bibitem[\protect\citeauthoryear{Huang, Aghajohari, Bose, Panangaden, and
  Courville}{Huang et~al.}{2022}]{huang2022riemannian}
Huang, C.-W., M.~Aghajohari, A.~J. Bose, P.~Panangaden, and A.~Courville
  (2022).
\newblock {Riemannian Diffusion Models}.
\newblock {\em NeurIPS\/}.

\bibitem[\protect\citeauthoryear{Huang, Lim, and Courville}{Huang
  et~al.}{2021}]{huang2021variational}
Huang, C.-W., J.~H. Lim, and A.~Courville (2021).
\newblock {A Variational Perspective on Diffusion-Based Generative Models and
  Score Matching}.
\newblock {\em NeurIPS\/}.

\bibitem[\protect\citeauthoryear{Hyvärinen}{Hyvärinen}{2005}]{hyvarinen2005estimation}
Hyvärinen, A. (2005).
\newblock {Estimation of Non-Normalized Statistical Models by Score Matching}.
\newblock {\em Journal of Machine Learning Research\/}~{\em 6}, 695–709.

\bibitem[\protect\citeauthoryear{Hyvärinen}{Hyvärinen}{2007}]{hyvarinen2007some}
Hyvärinen, A. (2007).
\newblock {Some Extensions of Score Matching}.
\newblock {\em Computational Statistics and Data Analysis\/}~{\em 51}, 2499 –
  2512.

\bibitem[\protect\citeauthoryear{Kingma and Welling}{Kingma and
  Welling}{2014}]{kingma2014autoencoding}
Kingma, D.~P. and M.~Welling (2014).
\newblock {Auto-Encoding Variational Bayes}.
\newblock {\em ICLR\/}.

\bibitem[\protect\citeauthoryear{Lou and Ermon}{Lou and
  Ermon}{2023}]{lou2023reflected}
Lou, A. and S.~Ermon (2023).
\newblock {Reflected Diffusion Models}.
\newblock {\em ICML\/}.

\bibitem[\protect\citeauthoryear{Lueckmann, Boelts, Greenberg, Gonçalves, and
  Macke}{Lueckmann et~al.}{2021}]{lueckmann2021benchmarking}
Lueckmann, J.-M., J.~Boelts, D.~S. Greenberg, P.~J. Gonçalves, and J.~H. Macke
  (2021).
\newblock {Benchmarking Simulation-Based Inference}.
\newblock {\em AISTATS\/}.

\bibitem[\protect\citeauthoryear{Lyu}{Lyu}{2009}]{lyu2009interpretation}
Lyu, S. (2009).
\newblock {Interpretation and Generalization of Score Matching}.
\newblock {\em UAI\/}.

\bibitem[\protect\citeauthoryear{Mardia, Kent, and Laha}{Mardia
  et~al.}{2016}]{mardia2016score}
Mardia, K.~V., J.~T. Kent, and A.~K. Laha (2016).
\newblock {Score Matching Estimators for Directional Distributions}.
\newblock {\em arXiv preprint arXiv:1604.08470\/}.

\bibitem[\protect\citeauthoryear{Murphy, Esteves, Jampani, Ramalingam, and
  Makadia}{Murphy et~al.}{2021}]{murphy2021implicit}
Murphy, K.~A., C.~Esteves, V.~Jampani, S.~Ramalingam, and A.~Makadia (2021).
\newblock {Implicit-PDF: Non-Parametric Representation of Probability
  Distributions on the Rotation Manifold}.
\newblock {\em ICML\/}.

\bibitem[\protect\citeauthoryear{Nunes and Prangle}{Nunes and
  Prangle}{2015}]{nunes2015abctools}
Nunes, M.~A. and D.~Prangle (2015).
\newblock {abctools: An R Package for Tuning Approximate Bayesian Computation
  Analyses}.
\newblock {\em {The R Journal}\/}~{\em 7\/}(2), 189--205.

\bibitem[\protect\citeauthoryear{Oord, Dieleman, Zen, Simonyan, Vinyals,
  Graves, Kalchbrenner, Senior, and Kavukcuoglu}{Oord
  et~al.}{2016}]{oord2016wavenet}
Oord, A. v.~d., S.~Dieleman, H.~Zen, K.~Simonyan, O.~Vinyals, A.~Graves,
  N.~Kalchbrenner, A.~Senior, and K.~Kavukcuoglu (2016).
\newblock {WaveNet: A Generative Model for Raw Audio}.
\newblock {\em arXiv:1609.03499\/}.

\bibitem[\protect\citeauthoryear{Popov, Vovk, Gogoryan, Sadekova, and
  Kudinov}{Popov et~al.}{2021}]{popov2021grad}
Popov, V., I.~Vovk, V.~Gogoryan, T.~Sadekova, and M.~Kudinov (2021).
\newblock {Grad-tts: A Diffusion Probabilistic Model for Text-to-speech}.
\newblock {\em ICML\/}.

\bibitem[\protect\citeauthoryear{Rezende and Mohamed}{Rezende and
  Mohamed}{2015}]{rezende2015variational}
Rezende, D.~J. and S.~Mohamed (2015).
\newblock {Variational Inference with Normalizing Flows}.
\newblock {\em ICML\/}.

\bibitem[\protect\citeauthoryear{Richemond, Dieleman, and Doucet}{Richemond
  et~al.}{2022}]{richemond2022categorical}
Richemond, P.~H., S.~Dieleman, and A.~Doucet (2022).
\newblock {Categorical SDEs with Simplex Diffusion}.
\newblock {\em arXiv preprint arXiv:2210.14784\/}.

\bibitem[\protect\citeauthoryear{Saharia, Ho, Chan, Salimans, Fleet, and
  Norouzi}{Saharia et~al.}{2022}]{saharia2022image}
Saharia, C., J.~Ho, W.~Chan, T.~Salimans, D.~J. Fleet, and M.~Norouzi (2022).
\newblock {Image Super-Resolution via Iterative Refinement}.
\newblock {\em IEEE Transactions on Pattern Analysis and Machine
  Intelligence\/}, 1--14.

\bibitem[\protect\citeauthoryear{Sharrock, Simons, Liu, and Beaumont}{Sharrock
  et~al.}{2022}]{sharrock2022sequential}
Sharrock, L., J.~Simons, S.~Liu, and M.~Beaumont (2022).
\newblock {Sequential Neural Score Estimation: Likelihood-Free Inference with
  Conditional Score Based Diffusion Models}.
\newblock {\em arXiv preprint arXiv:2210.04872\/}.

\bibitem[\protect\citeauthoryear{Sohl-Dickstein, Weiss, Maheswaranathan, and
  Ganguli}{Sohl-Dickstein et~al.}{2015}]{sohldickstein2015deep}
Sohl-Dickstein, J., E.~A. Weiss, N.~Maheswaranathan, and S.~Ganguli (2015).
\newblock {Deep Unsupervised Learning Using Nonequilibrium Thermodynamics}.
\newblock {\em ICML\/}.

\bibitem[\protect\citeauthoryear{Song, Durkan, Murray, and Ermon}{Song
  et~al.}{2021}]{song2021maximum}
Song, Y., C.~Durkan, I.~Murray, and S.~Ermon (2021).
\newblock {Maximum Likelihood Training of Score-Based Diffusion Models}.
\newblock {\em NeurIPS\/}.

\bibitem[\protect\citeauthoryear{Song, Sohl-Dickstein, Kingma, Kumar, Ermon,
  and Poole}{Song et~al.}{2021}]{song2021score}
Song, Y., J.~Sohl-Dickstein, D.~P. Kingma, A.~Kumar, S.~Ermon, and B.~Poole
  (2021).
\newblock {Score-Based Generative Modeling through Stochastic Differential
  Equations}.
\newblock {\em ICLR\/}.

\bibitem[\protect\citeauthoryear{Sun, Yu, Dai, Schuurmans, and Dai}{Sun
  et~al.}{2023}]{sun2023score}
Sun, H., L.~Yu, B.~Dai, D.~Schuurmans, and H.~Dai (2023).
\newblock {Score-based Continuous-time Discrete Diffusion Models}.
\newblock {\em ICLR\/}.

\bibitem[\protect\citeauthoryear{Trippe, Yim, Tischer, Baker, Broderick,
  Barzilay, and Jaakkola}{Trippe et~al.}{2023}]{trippe2023diffusion}
Trippe, B.~L., J.~Yim, D.~Tischer, D.~Baker, T.~Broderick, R.~Barzilay, and
  T.~Jaakkola (2023).
\newblock {Diffusion Probabilistic Modeling of Protein Backbones in 3D for the
  Motif-scaffolding Problem}.
\newblock {\em ICLR\/}.

\bibitem[\protect\citeauthoryear{Vincent}{Vincent}{2011}]{vincent2011connection}
Vincent, P. (2011).
\newblock {A Connection Between Score Matching and Denoising Autoencoders}.
\newblock {\em Neural Computation\/}~{\em 23}, 1661--1674.

\bibitem[\protect\citeauthoryear{Yu, Drton, and Shojaie}{Yu
  et~al.}{2022}]{yu2022generalized}
Yu, S., M.~Drton, and A.~Shojaie (2022).
\newblock {Generalized Score Matching for General Domains}.
\newblock {\em Information and Inference: A Journal of the IMA\/}~{\em
  11\/}(2), 739--780.

\end{thebibliography}
\end{document}